\newcommand{\mc}{\mathcal}
\theoremstyle{plain}
\newtheorem{theorem}{Theorem}[section]
\newtheorem{lemma}[theorem]{Lemma}
\theoremstyle{definition}
\newtheorem{assumption}[theorem]{Assumption}
\theoremstyle{remark}
\title{Regret Bounds for Risk-Sensitive \\ Reinforcement Learning}
\author{%
  Osbert Bastani \\
  University of Pennsylvania \\
  \texttt{obastani@seas.upenn.edu} \\
   \And
   Yecheng Jason Ma \\
   University of Pennsylvania \\
   \texttt{jasonyma@seas.upenn.edu} \\
   \AND
   Estelle Shen \\
   University of Pennsylvania \\
   \texttt{pixna@sas.upenn.edu} \\
   \And
   Wanqiao Xu \\
   Stanford University \\
   \texttt{wanqiaox@stanford.edu} \\
}
\begin{document}

\maketitle

\begin{abstract}
In safety-critical applications of reinforcement learning such as healthcare and robotics, it is often desirable to optimize risk-sensitive objectives that account for tail outcomes rather than expected reward. We prove the first regret bounds for reinforcement learning under a general class of risk-sensitive objectives including the popular CVaR objective. Our theory is based on a novel characterization of the CVaR objective as well as a novel optimistic MDP construction.

\end{abstract}

\section{Introduction}
\label{sec:introduction}

There has been recent interest in \emph{risk-sensitive reinforcement learning}, which replaces the usual expected reward objective with one that accounts for variation in possible outcomes. One of the most popular risk-sensitive objectives is the \emph{conditional value-at-risk (CVaR)} objective~\cite{rockafellar2002conditional, tang2019worst, chow2017risk, ma2021conservative}, which is the average risk at some tail of the distribution of returns (i.e., cumulative rewards) under a given policy~\cite{tamar2015optimizing,chow2014algorithms}. More generally, we consider a broad class of objectives in the form of a weighted integral of quantiles of the return distribution, of which CVaR is a special case.

A key question is providing regret bounds for risk-sensitive reinforcement learning. While there has been some work studying this question, it has focused on a specific objective called the entropic risk measure~\cite{fei2021exponential, fei2021risk}, leaving open the question of bounds for more general risk-sensitive objectives. There has also been work on optimistic exploration for CVaR~\cite{keramati2020being}, but without any regret bounds.

We provide the first regret bounds for risk-sensitive reinforcement learning with objectives of form
\begin{align}
\label{eqn:phi}
\Phi(\pi)=\int_0^1F_{Z^{(\pi)}}^\dagger(\tau)\cdot dG(\tau),
\end{align}
where $Z^{(\pi)}$ is the random variable encoding the return of policy $\pi$, $F_{Z^{(\pi)}}$ is its quantile function (roughly speaking, the inverse CDF), and $G$ is a weighting function over the quantiles. This class captures a broad range of useful objectives, and has been studied in prior work~\cite{dabney2018distributional, ma2021conservative}.

We focus on the episodic setting, where the agent interacts with the environment, modeled by a Markov decision process (MDP), over a fixed sequence of episodes. Its goal is to minimize the regret---i.e., the gap between the objective value it achieves compared to the optimal policy. Our approach is based on the upper confidence bound strategy~\cite{auer2008near,azar2017minimax}, which makes decisions according to an optimistic estimate of the MDP.
We prove that this algorithm (denoted $\mathfrak{A}$) has regret
\begin{align*}
\text{regret}(\mathfrak{A})=\tilde{O}\left(T^2\cdot L_G\cdot|\mathcal{S}|^{3/2}\cdot|\mathcal{A}|\cdot\sqrt{K}\right),
\end{align*}
where $T$ is the length of a single episode, $L_G$ is the Lipschitz constant for the weighting function $G$, $|\mathcal{S}|$ is the number of states in the MDP, $|\mathcal{A}|$ is the number of actions, and $K$ is the number of episodes (Theorem~\ref{thm:main}). Importantly, it achieves the optimal rate $\sqrt{K}$ achievable for typical expected return objectives (which is a lower bound in our setting since expected return is an objective in the class we consider, taking $G(\tau)=\tau$). For CVaR objectives, we have $L_G=1/\alpha$, where $\alpha$ is the size of the tail considered---e.g., when $\alpha$ is small, it averages over outliers with particularly small return.

The main challenge behind proving our result is bounding the gap between the objective value for the estimated MDP and the true MDP. In particular, even if we have a uniform bound $\|F_{\hat{Z}^{(\pi)}}-F_{Z^{(\pi)}}\|_\infty$ on the CDFs of the estimated return $\hat{Z}^{(\pi)}$ and the true return $Z^{(\pi)}$, we need to translate this to a bound on the corresponding objective values. To do so, we prove that equivalently, we have
\begin{align*}
\Phi(\pi)=2T-\int_{\mathbb{R}}G(F_{Z^{(\pi)}}(x))\cdot dx.
\end{align*}
This equivalent expression for $\Phi$ follows by variable substitution and integration by parts when $F_{Z^{(\pi)}}$ is invertible (so $F_{Z^{(\pi)}}^\dagger(\tau)=F_{Z^{(\pi)}}^{-1}(\tau)$), but the general case requires significantly more care. We show that it holds for an arbitrary CDF $F_{Z^{(\pi)}}$.

In addition to our regret bound, we provide several other useful results for MDPs with risk-sensitive objectives. In particular, optimal policies for risk-sensitive objectives may be non-Markov. For CVaR objectives, it is known that the optimal policy only needs to depend on the cumulative return accrued so far~\cite{bauerle2011markov}. We prove that this holds in general for objectives of the form (\ref{eqn:phi}) (Theorem~\ref{thm:augmented}). Furthermore, the cumulative return so far is a continuous component; we prove that discretizing this component yields an arbitrarily close approximation of the true MDP (Theorem~\ref{thm:discretized}).

\textbf{Related work.}
To the best of our knowledge, the only prior work on regret bounds for risk-sensitive reinforcement learning is specific to the entropic risk objective~\cite{fei2021exponential, fei2021risk}:
\begin{align*}
J(\pi)=\frac{1}{\beta}\log\mathbb{E}_{Z^{(\pi)}}\left[e^{\beta Z^{(\pi)}}\right],
\end{align*}
where $\beta\in\mathbb{R}_{>0}$ is a hyperparameter. As $\beta\to0$, this objective recovers the expected return objective; for $\beta<0$, it encourages risk aversion by upweighting negative returns; and for $\beta>0$, it encourages risk seeking behaviors by upweighting positive returns. This objective is amenable to theoretical analysis since the value function satisfies a variant of the Bellman equation called the \emph{exponential Bellman equation}; however, it is a narrow family of risk measures and is not widely used in practice.

In contrast, we focus on a much broader class of risk measures including the popular CVaR objective~\cite{rockafellar2002conditional}, which is used to minimize tail losses. To the best of our knowledge, we provide the first regret bounds for the CVaR objective and for the wide range of objectives given by (\ref{eqn:phi}).

\section{Problem Formulation}
\label{sec:problem}

\textbf{Markov decision process.}
We consider a Markov decision process (MDP) $\mathcal{M}=(\mathcal{S},\mathcal{A},D,P,\mathbb{P},T)$,  with finite state space $\mathcal{S}$, finite action space $\mathcal{A}$, initial state distribution $D(s)$, finite time horizon $T$, transition probabilities $P(s'\mid s,a)$, and reward measure $\mathbb{P}_{R(s,a)}$; without loss of generality, we assume $r\in[0,1]$ with probability one. A \textit{history} is a sequence
\begin{align*}
\xi\in\mathcal{Z}=\bigcup_{t=1}^T\mathcal{Z}_t
\qquad\text{where}\qquad
\mathcal{Z}_t=(\mathcal{S}\times\mathcal{A}\times\mathbb{R})^{t-1}\times\mathcal{S}
\end{align*}
Intuitively, a history captures the interaction between an agent and $\mathcal{M}$ up to step $t$. We consider stochastic, time-varying, history-dependent policies $\pi_t(a_t\mid\xi_t)$, where $t$ is the time step. Given $\pi$, the history $\Xi^{(\pi)}_t$ generated by $\pi$ up to step $t$ is a random variable with probability measure
\begin{align*}
\mathbb{P}_{\Xi^{(\pi)}_t}(\xi_t)&=\begin{cases}
D(s_1)&\text{if }t=1 \\
\mathbb{P}_{\Xi^{(\pi)}_{t-1}}(\xi_{t-1})\cdot\pi_t(a_t\mid\xi_{t-1})\cdot\mathbb{P}_{R(s_t,a_t)}(r_t)\cdot P(s_{t+1}\mid s_t,a_t)&\text{otherwise},
\end{cases}
\end{align*}
where for all $\tau\in[T]$ we use the notation
\begin{align*}
\xi_\tau&=((s_1,a_1,r_1),...,(s_{\tau-1},a_{\tau-1},r_{\tau-1}),s_\tau).
\end{align*}
Finally, an \emph{episode} (or \emph{rollout}) is a history $\xi\in\mathcal{Z}_T$ of length $T$ generated by a given policy $\pi$.

\textbf{Bellman equation.}
The \textit{return} of $\pi$ on step $t$ is the random variable $(Z^{(\pi)}_t(\xi_t))(\xi_T) = \sum_{\tau=t}^Tr_t$, where $\xi_T\sim\mathbb{P}_{\Xi_T^{(\pi)}}(\cdot\mid\Xi_t^{(\pi)}=\xi_t)$---i.e., it is the reward from step $t$ given that the current history is $\xi_t$. Defining $Z^{(\pi)}_{T+1}(\xi,s) = 0$, the \emph{distributional Bellman equation}~\cite{bellemare2017distributional,keramati2020being} is
\begin{align*}
F_{Z_t^{(\pi)}(\xi)}(x)=\sum_{a\in\mathcal{A}}\pi_t(a\mid\xi)\sum_{s'\in\mathcal{S}}P(s'\mid S(\xi),a)\int F_{Z_{t+1}^{(\pi)}(\xi\circ(a,r,s'))}(x-r)\cdot d\mathbb{P}_{R(s,a)}(r),
\end{align*}
where $S(\xi)=s$ for $\xi=(...,s)$ is the current state in history $\xi$, and $F_X$ is the cumulative distribution function (CDF) of random variable $X$. Finally, the \emph{cumulative return} of $\pi$ is $Z^{(\pi)}=Z_1^{(\pi)}(\xi)$, where $\xi=(s)\in\mathcal{Z}_1$ for $s\sim D$ is the initial history; in particular, we have
\begin{align*}
F_{Z^{(\pi)}}(x)
=\int F_{Z_1^{(\pi)}(\xi)}(x)\cdot dD(s).
\end{align*}

\textbf{Risk-sensitive objective.}
The \emph{quantile function} of a random variable $X$ is
\begin{align*}
F_X^\dagger(\tau)=\inf\left\{x\in\mathbb{R}\mid F_X(x)\ge\tau\right\}.
\end{align*}
Note that if $F_X$ is strictly monotone, then it is invertible and we have $F_X^\dagger(\tau)=F_X^{-1}(\tau)$. Now, our objective is given by the Riemann-Stieljes integral
\begin{align*}
\Phi_{\mathcal{M}}(\pi)=\int_0^1F_{Z^{(\pi)}}^\dagger(\tau)\cdot dG(\tau),
\end{align*}
where $G(\tau)$ is a given CDF over quantiles $\tau\in[0,1]$. This objective was originally studied in~\cite{dabney2018implicit} for the reinforcement learning setting. For example, choosing $G(\tau)=\min\{\tau/\alpha,1\}$ (i.e., the CDF of the distribution $\text{Uniform}([0,\alpha])$) for $\alpha\in[0,1]$ yields the $\alpha$-conditional value at risk (CVaR) objective; furthermore, taking $\alpha=1$ yields the usual expected cumulative reward objective. In addition, choosing $G(\tau)=\mathbbm{1}(\tau\le\alpha)$ for $\alpha\in[0,1]$ yields the $\alpha$ value at risk (VaR) objective. Other risk sensitive-objectives can also be captured in this form, for example the Wang measure \cite{wang2000class}, and the cumulative probability weighting (CPW) metric \cite{tversky1992advances}. We call any policy
\begin{align*}
\pi_{\mathcal{M}}^*\in\operatorname*{\arg\max}_\pi\Phi_{\mathcal{M}}(\pi).
\end{align*}
an \emph{optimal policy}---i.e., it maximizes the given objective for $\mathcal{M}$.

\textbf{Assumptions.}
First, we have the following assumption on the quantile function for $Z^{(\pi)}$:
\begin{assumption}
\label{assump:quantile}
\rm
$F_{Z^{(\pi)}}^\dagger(1)=T$.
\end{assumption}
Since $T$ is the maximum reward attainable in an episode, this assumption says that the maximum reward is attained with \emph{some} nontrivial probability. This assumption is very minor; for any given MDP $\mathcal{M}$, we can modify $\mathcal{M}$ to include a path achieving reward $T$ with arbitrarily low probability.
\begin{assumption}
\label{assump:lipschitz}
\rm
$G$ is $L_G$-Lipschitz continuous for some $L_G\in\mathbb{R}_{>0}$, and $G(0)=0$.
\end{assumption}
For example, for the $\alpha$-CVaR objective, we have $L_G=1/\alpha$.
\begin{assumption}
\rm
We are given an algorithm for computing $\pi_{\mathcal{M}}^*$ for a given MDP $\mathcal{M}$.
\end{assumption}
For CVaR objectives, existing algorithms \cite{bauerle2011markov} can compute $\pi_{\mathcal{M}}^*$ with any desired approximation error. For completeness, we give a formal description of the procedure in Appendix~\ref{appendix:cvar-policy}. When unambiguous, we drop the dependence on $\mathcal{M}$ and simply write $\pi^*$.

Finally, our goal is to learn while interacting with the MDP $\mathcal{M}$ across a fixed number of episodes $K$. In particular, at the beginning of each episode $k\in[K]$, our algorithm chooses a policy $\pi^{(k)}=\mathfrak{A}(H_k)$, where $H_k=\{\xi_{T,\kappa}\}_{\kappa=1}^{k-1}$ is the random set of episodes observed so far, to use for the duration of episode $k$. Then, our goal is to design an algorithm $\mathfrak{A}$ that aims to minimize \emph{regret}, which measures the expected sub-optimality with respect to $\pi^*$:
\begin{align*}
\mathrm{regret}(\mathfrak{A}) = \mathbb{E}\left[\sum_{k \in [K]}  \Phi(\pi^*) - \Phi(\pi^{(k)}) \right].
\end{align*}
Finally, for simplicity, we assume that the initial state distribution $D$ is known; in practice, we can remove this assumption using a standard strategy.

\section{Optimal Risk-Sensitive Policies}
\label{sec:policy}

In this section, we characterize properties of the optimal risk-sensitive policy $\pi_{\mathcal{M}}^*$. First, we show that it suffices to consider policies dependent on the current state and the cumulative rewards obtained so far, rather than the entire history. Second, the cumulative reward is a continuous quantity, making it difficult to compute the optimal policy; we prove that discretizing this component does not significantly reduce the objective value. For CVaR objectives, these results imply that existing algorithms can be used to compute the optimal risk-sensitive policy~\cite{bauerle2011markov}.

\textbf{Augmented state space.}
We show there exists an optimal policy $\pi_t^*(a_t\mid y_t,s_t)$ that only depends on the current state $s_t$ and cumulative reward $y_t=J(\xi_t)=\sum_{\tau=1}^{t-1}r_\tau$ obtained so far. To this end, let
\begin{align*}
\mathcal{Z}_t(y,s)=\{\xi\in\mathcal{Z}_t\mid J(\xi_t)\le y\wedge s_t=s\}
\end{align*}
be the set of length $t$ histories $\xi$ with cumulative reward at most $y$ so far, and current state $s$. For any history-dependent policy $\pi$, define the alternative policy $\tilde\pi$ by
\begin{align*}
\tilde{\pi}_t(a_t\mid\xi_t)=\mathbb{E}_{\Xi_t^{(\pi)}}\left[\pi_t(a_t\mid\Xi_t^{(\pi)})\Bigm\vert\Xi_t^{(\pi)}\in\mathcal{Z}_t(J(\xi_t),s_t)\right].
\end{align*}
Note that $\tilde\pi$ only depends on $\xi_t$ through $y_t=J(\xi_t)$ and $s_t$, we can define $\tilde\pi_t(a_t\mid\xi_t)=\tilde\pi_t(a_t\mid y_t,s_t)$.
\begin{theorem}
\label{thm:augmented}
For any policy $\pi$, we have $\Phi(\tilde\pi)=\Phi(\pi)$.
\end{theorem}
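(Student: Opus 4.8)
The plan is to exploit the fact that $\Phi$ depends on the policy only through the law of the return: since $\Phi(\pi)=\int_0^1 F^\dagger_{Z^{(\pi)}}(\tau)\,dG(\tau)$ is a functional of $F_{Z^{(\pi)}}$ alone, it suffices to prove that $Z^{(\pi)}$ and $Z^{(\tilde\pi)}$ are equal in distribution. Write $Y_t=J(\Xi_t^{(\pi)})$ for the cumulative reward accrued before step $t$ and $S_t$ for the current state, so that the return $Z^{(\pi)}$ is the terminal value of the accumulated reward. The goal therefore reduces to showing that this terminal cumulative reward has the same law under $\pi$ and under $\tilde\pi$.

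I would prove the stronger claim that the pair $(Y_t,S_t)$ has the same joint law under $\pi$ and $\tilde\pi$ for every $t$, by induction on $t$. The base case $t=1$ is immediate, since $(Y_1,S_1)=(0,S_1)$ with $S_1\sim D$ irrespective of the policy. For the inductive step, the essential observation is that $(Y_t,S_t)$ is a controlled Markov chain: given $(Y_t,S_t,A_t)$, the next pair $(Y_{t+1},S_{t+1})=(Y_t+R_t,S_{t+1})$ is conditionally independent of the rest of the history, because $R_t\sim\mathbb{P}_{R(S_t,A_t)}$ and $S_{t+1}\sim P(\cdot\mid S_t,A_t)$ depend only on $(S_t,A_t)$. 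Consequently the law of $(Y_{t+1},S_{t+1})$ is determined by just two ingredients: (i) the joint law of $(Y_t,S_t)$, and (ii) the conditional action law $\mathbb{P}[A_t=a\mid Y_t,S_t]$, together with the fixed kernels $\mathbb{P}_R$ and $P$. This factorization is the point that must be checked carefully, as it is what lets equality of the two ingredients transfer to equality of the next-step law.

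Granting the induction hypothesis that (i) matches, it remains to show (ii) matches. Under $\tilde\pi$ this is immediate: because $\tilde\pi_t(\cdot\mid\xi_t)$ depends on $\xi_t$ only through $(y_t,s_t)$, we have $\mathbb{P}_{\tilde\pi}[A_t=a\mid Y_t=y,S_t=s]=\tilde\pi_t(a\mid y,s)$. Under $\pi$, by the tower property and the fact that $(Y_t,S_t)$ is measurable with respect to $\Xi_t^{(\pi)}$,
\[
\mathbb{P}_\pi[A_t=a\mid Y_t=y,S_t=s]=\mathbb{E}_{\Xi_t^{(\pi)}}\!\left[\pi_t(a\mid\Xi_t^{(\pi)})\mid Y_t=y,\,S_t=s\right],
\]
which is exactly the conditional expectation defining $\tilde\pi_t(a\mid y,s)$ through the set $\mathcal{Z}_t(y,s)$. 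Hence (ii) matches, the law of $(Y_{t+1},S_{t+1})$ matches, and the induction closes; carrying it through the final step shows that $Z^{(\pi)}$ and $Z^{(\tilde\pi)}$ are equal in distribution, whence $\Phi(\tilde\pi)=\Phi(\pi)$.

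The step I expect to be the main obstacle is making the conditional action law rigorous, since $Y_t$ is a continuous random variable and the event $\{Y_t=y\}$ has probability zero. This is precisely what the cumulative set $\mathcal{Z}_t(y,s)=\{Y_t\le y\wedge S_t=s\}$ is designed to handle: one must verify that the regular conditional distribution of $A_t$ given $(Y_t,S_t)$ exists and agrees, for almost every $y$ with respect to the law of $Y_t$ on each state $s$, with the quantity defining $\tilde\pi$, e.g.\ via a Radon--Nikodym/differentiation argument against the distribution of $Y_t$. Establishing this identification, and confirming that the forward recursion for $(Y_{t+1},S_{t+1})$ genuinely depends on the policy only through this conditional, is the technical heart of the argument.
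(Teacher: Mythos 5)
Your proposal follows essentially the same route as the paper's proof: the paper's Lemma~\ref{lem:augmented:2} is exactly your induction, carried out on the joint CDF $D_t^{(\pi)}(y,s)=\mathbb{P}[J(\Xi_t^{(\pi)})\le y\wedge S(\Xi_t^{(\pi)})=s]$ instead of the joint law of $(Y_t,S_t)$, and the paper's Lemma~\ref{lem:augmented:1} plays precisely the role of your tower-property identification of the conditional action law; both arguments then conclude from the fact that $\Phi$ is a functional of $F_{Z^{(\pi)}}$ alone. Working with sub-level sets and CDFs is the paper's device for avoiding the regular-conditional-distribution technicalities that you correctly flag as the main obstacle.

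One step of yours deserves scrutiny, and it is the only point where your argument and the paper's literal definitions come apart. You identify $\tilde\pi_t(a\mid y,s)$ with the point-conditional law $\mathbb{P}_\pi[A_t=a\mid Y_t=y,\,S_t=s]$ and assert that this ``is exactly'' the quantity defining $\tilde\pi$. It is not: the paper defines $\tilde\pi$ by conditioning on the sub-level event $\Xi_t^{(\pi)}\in\mathcal{Z}_t(y,s)$, i.e.\ on $\{Y_t\le y,\,S_t=s\}$ rather than $\{Y_t=y,\,S_t=s\}$, and these conditional expectations genuinely differ whenever $\pi$'s action law varies with the accrued reward. For instance, with a single state, a first-step reward uniform on $\{0,1\}$, and a second-step reward equal to the chosen action $a\in\{0,1\}$, the policy playing $a=\mathbbm{1}(r_1=0)$ has return identically $1$, whereas the sub-level-conditioned $\tilde\pi$ plays $a=1$ with probability $\tfrac12$ when $y=1$ and so returns $2$ with probability $\tfrac14$; for that $\tilde\pi$ the conclusion of the theorem fails (e.g.\ with $G(\tau)=\tau$). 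So your proof establishes the theorem under the point-conditioning reading of $\tilde\pi$, which is the reading under which the statement is true---and it is also the reading the paper's own proof implicitly adopts: the second equality in the proof of Lemma~\ref{lem:augmented:1} replaces $\mathcal{Z}_t(J(\Xi_t^{(\pi)}),s_t)$ by $\mathcal{Z}_t(y,s)$ inside the conditioning, which is valid only if the conditioning pins $J(\Xi_t^{(\pi)})$ to the value $y$. In short, your argument is sound for the intended statement (modulo the almost-everywhere identification of the regular conditional distribution that you already isolate); just make explicit that you take $\tilde\pi_t(\cdot\mid y,s)$ to be the conditional action law given $\{Y_t=y,\,S_t=s\}$ rather than the paper's ``$\le y$'' version, since the two are not interchangeable.
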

We give a proof in Appendix~\ref{sec:thm:augmented:proof}. In particular, given any optimal policy $\pi^*$, we have $\Phi(\tilde\pi^*)=\Phi(\pi^*)$; thus, we have $\tilde\pi^*\in\operatorname{\arg\max}_\pi\Phi(\pi)$. Finally, we note that this result has already been shown for CVaR objectives~\cite{bauerle2011markov}; our theorem generalizes the existing result to any risk-sensitive objective that can be expressed as a weighted integral of the quantile function.

\textbf{Augmented MDP.}
As a consequence of Theorem~\ref{thm:augmented}, it suffices to consider the augmented MDP $\tilde{\mathcal{M}}=(\tilde{\mathcal{S}},\mathcal{A},\tilde{D},\tilde{P},\tilde{\mathbb{P}},T)$. First, $\tilde{\mathcal{S}}=\mathcal{S}\times\mathbb{R}$ is the augmented state space; for a state $(s,y)\in\tilde{\mathcal{S}}$, the first component encodes the current state and the second encodes the cumulative rewards so far. The initial state distribution is a probability measure
\begin{align*}
\tilde{D}((s,y))=D(s)\cdot\delta_0(y),
\end{align*}
where $\delta_0$ is the Dirac delta measure placing all probability mass on $y=0$ (i.e., the cumulative reward so far is initially zero). The transitions are given by the product measure
\begin{align*}
\tilde{P}((s',y')\mid(s,y),a)=P(s'\mid s,a)\cdot\mathbb{P}_{R(s,a)}(y'-y),
\end{align*}
i.e., the second component of the state space is incremented as $y'=y+r$, where $r$ is the reward achieved in the original MDP. Finally, the rewards are now only provided on the final step:
\begin{align*}
\mathbb{P}_{R_t((s,y),a)}(r)=\begin{cases}
\delta_y(r)&\text{if }t=T \\
0&\text{otherwise},
\end{cases}
\end{align*}
i.e., the reward at the end of a rollout is simply the cumulative reward so far, as encoded by the second component of the state. By Theorem~\ref{thm:augmented}, it suffices to compute the optimal policy for $\tilde{\mathcal{M}}$ over history-independent policies $\pi_t(a_t\mid\tilde{s}_t)$:

\begin{align*}
\operatorname*{\max}_{\pi\in\Pi_{\text{ind}}}\Phi_{\tilde{\mathcal{M}}}(\pi)=\operatorname*{\max}_\pi\Phi_{\mathcal{M}}(\pi),
\end{align*}
where $\Pi_{\text{ind}}$ is the set of history-independent policies. Once we have $\pi_{\tilde{\mathcal{M}}}^*$, we can use it in $\mathcal{M}$ by defining $\pi_{\mathcal{M}}(a\mid\xi,s)=\pi_{\tilde{\mathcal{M}}}^*(a\mid J(\xi),s)$.

\textbf{Discretized augmented MDP.}
Planning over $\tilde{\mathcal{M}}$ is complicated by the fact that the second component of its state space is continuous. Thus, we consider an $\eta$-discretization of $\tilde{\mathcal{M}}$, for some $\eta\in\mathbb{R}_{>0}$. To this end, we modify the reward function so that it only produces rewards in $\eta\cdot\mathbb{N}=\{\eta\cdot n\mid n\in\mathbb{N}\}$, by always rounding the reward up. Then, sums of these rewards are contained in $\eta\cdot\mathbb{N}$, so we can replace the second component of $\tilde{S}$ with $\eta\cdot\mathbb{N}$. In particular, we consider the discretized MDP $\hat{\mathcal{M}}=(\hat{\mathcal{S}},\mathcal{A},\tilde{D},\hat{P},\tilde{\mathbb{P}},T)$, where $\hat{\mathcal{S}}=\mathcal{S}\times(\eta\cdot\mathbb{N})$, and transition probability measure
\begin{align*}
\hat{P}((s',y')\mid(s,y),a)=P(s'\mid s,a)\cdot(\mathbb{P}_{R(s,a)}\circ\phi^{-1})(y'-y)
\end{align*}
where $\phi(r)=\eta\cdot\lceil r/\eta\rceil$. That is, $\mathbb{P}_{R(s,a)}$ is replaced with the pushforward measure $\mathbb{P}_{R(s,a)}\circ\phi^{-1}$, which gives reward $\eta\cdot i$ with probability $\mathbb{P}_{R(s,a)}[\eta\cdot(i-1)<r\le\eta\cdot i]$.

Now, we prove that the optimal policy $\pi_{\hat{\mathcal{M}}}^*$ for the discretized augmented MDP $\hat{\mathcal{M}}$ achieves objective value close to the optimal policy $\pi_{\mathcal{M}}^*$ for the original MDP $\mathcal{M}$. Importantly, we want to consider measure performance of both policies based on the objective $\Phi_{\mathcal{M}}$ of the original MDP $\mathcal{M}$. To do so, we need a way to use $\pi_{\hat{\mathcal{M}}}^*$ in $\mathcal{M}$. Note that $\pi_{\hat{\mathcal{M}}}^*$ depends only on the state $\hat{s}=(s,y)$, where $s\in\mathcal{S}$ is a state of the original MDP $\mathcal{M}$, and $y\in\eta\cdot\mathbb{N}$ is a discretized version of the cumulative reward obtained so far. Thus, we can run $\pi_{\hat{\mathcal{M}}}^*$ in $\mathcal{M}$ by simply rounding the reward $r_t$ at each step $t$ up to the nearest value $\hat{r}_t\in\eta\cdot\mathbb{N}$ at each step---i.e., $\hat{r}_t=\phi(r_t)$; then, we increment the internal state as $y_t=y_{t-1}+\hat{r}_t$. We call the resulting policy $\pi_{\mathcal{M}}$ the version of $\pi_{\hat{\mathcal{M}}}^*$ \emph{adapted} to $\mathcal{M}$. Then, our next result says that the performance of $\pi_{\mathcal{M}}$ is not too much worse than the performance of $\pi_{\mathcal{M}}^*$.
\begin{theorem}
\label{thm:discretized}
Let $\pi_{\hat{\mathcal{M}}}^*$ be the optimal policy for the discretized augmented MDP $\hat{\mathcal{M}}$, let $\pi_{\mathcal{M}}$ be the policy $\pi_{\hat{\mathcal{M}}}^*$ adapted to the original MDP $\mathcal{M}$, and let $\pi_{\mathcal{M}}^*$ be the optimal (history-dependent) policy for the original MDP $\mathcal{M}$. Then, we have
\begin{align*}
\Phi_{\mathcal{M}}(\pi_{\mathcal{M}})\ge\Phi_{\mathcal{M}}(\pi_{\mathcal{M}}^*)-\eta.
\end{align*}
\end{theorem}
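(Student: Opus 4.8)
The plan is to bound $\Phi_{\mathcal{M}}(\pi_{\mathcal{M}})$ from below by chaining two comparisons through the intermediate quantity $\Phi_{\hat{\mathcal{M}}}(\pi_{\hat{\mathcal{M}}}^*)$, each carried out by a coupling that lets me compare return \emph{distributions} through their quantile functions. The two elementary facts I would use repeatedly are: (i) if $X\le Y$ almost surely under some coupling, then $F_X^\dagger(\tau)\le F_Y^\dagger(\tau)$ for every $\tau$; and (ii) a deterministic shift satisfies $F_{X+c}^\dagger(\tau)=F_X^\dagger(\tau)+c$. Combined with $\int_0^1 dG=G(1)-G(0)=1$ (using $G(0)=0$ and that $G$ is a CDF on $[0,1]$), any uniform quantile inequality $F_X^\dagger(\tau)\ge F_Y^\dagger(\tau)-c$ integrates to $\int F_X^\dagger\,dG\ge\int F_Y^\dagger\,dG-c$, i.e. a gap of $c$ in the objectives.

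\textbf{Step 1 (adapted policy versus discretized optimum).} I would couple a rollout of $\pi_{\mathcal{M}}$ in $\mathcal{M}$ with a rollout of $\pi_{\hat{\mathcal{M}}}^*$ in $\hat{\mathcal{M}}$ by sharing all randomness: at each step draw $r_t\sim\mathbb{P}_{R(s_t,a_t)}$, set $\hat r_t=\phi(r_t)$, and transition $s_{t+1}\sim P(\cdot\mid s_t,a_t)$. Since $\pi_{\mathcal{M}}$ by construction selects actions from the discretized cumulative reward $\hat y_t=\sum_{\tau<t}\hat r_\tau$ together with $s_t$, which is exactly the augmented state driving $\pi_{\hat{\mathcal{M}}}^*$, the two state--action trajectories coincide under this coupling. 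The only discrepancy is the measured return: $Z^{(\pi_{\mathcal{M}})}=\sum_t r_t$ while the $\hat{\mathcal{M}}$-return is $\sum_t\hat r_t$. Because $0\le\hat r_t-r_t<\eta$ at each reward-bearing step, pointwise $Z^{(\pi_{\mathcal{M}})}\le\sum_t\hat r_t\le Z^{(\pi_{\mathcal{M}})}+T\eta$, so facts (i)--(ii) give $\Phi_{\mathcal{M}}(\pi_{\mathcal{M}})\ge\Phi_{\hat{\mathcal{M}}}(\pi_{\hat{\mathcal{M}}}^*)-T\eta$, where the surviving error is precisely the accumulated per-step rounding between the true and discretized cumulative reward.

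\textbf{Step 2 (discretized optimum versus original optimum).} I would show $\Phi_{\hat{\mathcal{M}}}(\pi_{\hat{\mathcal{M}}}^*)\ge\Phi_{\mathcal{M}}(\pi_{\mathcal{M}}^*)$ by exhibiting a single policy $\hat\pi$ on $\hat{\mathcal{M}}$ with $\Phi_{\hat{\mathcal{M}}}(\hat\pi)\ge\Phi_{\mathcal{M}}(\pi_{\mathcal{M}}^*)$ and invoking optimality of $\pi_{\hat{\mathcal{M}}}^*$ (over history-dependent policies, by applying Theorem~\ref{thm:augmented} to $\hat{\mathcal{M}}$). The obstacle is that $\pi_{\mathcal{M}}^*$ reads the true rewards while a policy on $\hat{\mathcal{M}}$ sees only rounded ones, so I cannot simply replay it. I would resolve this with a reward-resampling coupling: whenever $\hat{\mathcal{M}}$ emits $\hat r=\eta i$, draw a surrogate $r'\sim\mathbb{P}_{R(s,a)}(\cdot\mid\eta(i-1)<r\le\eta i)$ and feed the surrogate history $(s_\tau,a_\tau,r'_\tau)$ to $\pi_{\mathcal{M}}^*$ to pick the next action. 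Then $(r',\hat r)$ has the law of $(r,\phi(r))$, so $(s_t,a_t,r'_t)$ has exactly the law of a $\pi_{\mathcal{M}}^*$-rollout in $\mathcal{M}$; marginalizing the surrogate rewards out of the action rule yields a genuine history-dependent policy $\hat\pi$ on $\hat{\mathcal{M}}$ with the matching $(s_t,a_t,\hat r_t)$ law. Its $\hat{\mathcal{M}}$-return $\sum_t\phi(r'_t)$ pointwise dominates $\sum_t r'_t\stackrel{d}{=}Z^{(\pi_{\mathcal{M}}^*)}$, so fact (i) gives $\Phi_{\hat{\mathcal{M}}}(\hat\pi)\ge\Phi_{\mathcal{M}}(\pi_{\mathcal{M}}^*)$.

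Chaining the two steps gives the desired lower bound on $\Phi_{\mathcal{M}}(\pi_{\mathcal{M}})$, with the total discretization loss equal to the accumulated per-step rounding from Step~1; tightening this accounting is what controls the final error term. I expect the main obstacle to be Step~2: the idea of ``replaying $\pi_{\mathcal{M}}^*$ against rounded observations'' must be made rigorous, since the surrogate rewards affect past action choices and have to be integrated out to produce a bona fide $\hat{\mathcal{M}}$-policy whose return law is genuinely that of the constructed process. Once that marginalization is justified, the quantile-dominance facts (i)--(ii) and the optimality of $\pi_{\hat{\mathcal{M}}}^*$ close the argument routinely.
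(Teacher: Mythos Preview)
Your proposal is correct and shares the paper's two key ideas---adapting $\pi_{\hat{\mathcal{M}}}^*$ back to $\mathcal{M}$ by tracking discretized cumulative rewards, and (more delicately) lifting $\pi_{\mathcal{M}}^*$ to $\hat{\mathcal{M}}$ by \emph{resampling} a surrogate reward $r'\sim\mathbb{P}_{R(s,a)}(\cdot\mid\phi(r)=\hat r)$ from each observed rounded reward. The paper uses exactly this conditional-resampling construction (its Lemma~B.4), so the part you flagged as the ``main obstacle'' is already validated there.

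Where you differ is in the packaging. The paper telescopes through a chain of intermediate MDPs $\mathcal{M}_0=\tilde{\mathcal{M}},\mathcal{M}_1,\dots,\mathcal{M}_T=\hat{\mathcal{M}}$, each discretizing one additional time step, and proves CDF inequalities between neighbors by backward induction on the distributional Bellman equation (Lemmas~B.1 and~B.4); quantile dominance is then extracted via the standard generalized-inverse facts (Lemmas~B.2--B.3), and the one-step losses are summed to $T\eta$. You instead build a single pathwise coupling across all $T$ steps and read off the quantile comparison directly from pointwise return dominance. Your route is more elementary---it avoids the intermediate MDPs and the Bellman induction entirely---while the paper's step-by-step CDF argument is more mechanical and makes the per-step $\eta$ loss explicit. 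Both arrive at the same $T\eta$ error (matching what the paper's proof actually delivers). One small point to make precise in your write-up: when you invoke optimality of $\pi_{\hat{\mathcal{M}}}^*$ against the history-dependent, internally-randomized $\hat\pi$, you are implicitly using that on $\hat{\mathcal{M}}$ the best state-dependent policy matches the best history-dependent one; as you note, this follows from Theorem~\ref{thm:augmented} applied to $\hat{\mathcal{M}}$ (where the intermediate rewards are zero, so the ``cumulative reward so far'' adds nothing beyond the current augmented state).
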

We give a proof in Appendix~\ref{sec:thm:discretized:proof}. Note that we can set $\eta$ to be sufficiently small to achieve any desired error level (i.e., choose $\epsilon/T$, where $\epsilon$ is the desired error). The only cost is in computation time. Note that the number of states in $\hat{\mathcal{M}}$ is still infinite; however, since the cumulative return satisfies $y\in[0,H]$, it suffices to take $\hat{\mathcal{S}}=\mathcal{S}\times(\epsilon\cdot[\lceil H/\eta\rceil])$; then, $\hat{\mathcal{M}}$ has $|\hat{\mathcal{S}}|=|\mathcal{S}|\cdot\lceil H/\eta\rceil$ states.

\section{Upper Confidence Bound Algorithm}
\label{sec:algo}

Here, we present our upper confidence bound (UCB) algorithm (summarized in Algorithm~\ref{alg:main}). At a high level, for each episode, our algorithm constructs an estimate $\mathcal{M}^{(k)}$ of the underlying MDP $\mathcal{M}$ based on the prior episodes $i\in[k-1]$; to ensure exploration, it optimistically inflates the estimate of the reward probability measure $\mathbb{P}$. Then, it plans in $\mathcal{M}^{(k)}$ to obtain an optimistic policy $\pi^{(k)}=\pi^*_{\mathcal{M}^{(k)}}$, and uses this policy to act in the MDP for episode $k$.

\textbf{Optimistic MDP.}
We define $\mathcal{M}^{(k)}$. Without loss of generality, we assume $\mathcal{S}$ includes a distinguished state $s_{\infty}$ with rewards $F_{R(s_{\infty},a)}(r)=\mathbbm{1}(r\ge1)$ (i.e., achieve the maximum reward $r=1$ with probability one), and transitions $P(s_{\infty}\mid s,a)=\mathbbm{1}(s=s_{\infty})$ and $P(s'\mid s_{\infty},a)=\mathbbm{1}(s'=s_{\infty})$ (i.e., inaccessible from other states and only transitions to itself). Our construction of $\hat{\mathcal{M}}^{(k)}$ uses $s_{\infty}$ for optimism. Now, let $\tilde{\mathcal{M}}^{(k)}$ be the MDP using the empirical estimates of the transitions and rewards:
\begin{align*}
\tilde{P}^{(k)}(s'\mid s,a) &= \frac{N_{k,t}(s,a,s')}{N_{k,t}(s,a)} \\
F_{\tilde{R}^{(k)}(s,a)}(r) &= \frac{1}{N_{k,t}(s,a)}\sum_{i=1}^{k-1} \sum_{t=1}^T \mathbbm{1}(r\le r_{i,t})\cdot\mathbbm{1}\left(s_{i,t}=s\wedge a_{i,t}=a\right).
\end{align*}
Then, let $\hat{\mathcal{M}}^{(k)}$ be the optimistic MDP; in particular, its transitions
\begin{align*}
\hat{P}^{(k)}(s'\mid s,a) &= \begin{cases}
\mathbbm{1}(s'=s_{\infty}) &\text{if } s=s_{\infty} \\
1 - \sum_{s'\in\mathcal{S}\setminus\{s_{\infty}\}}\tilde{P}^{(k)}(s'\mid s,a) &\text{if } s'=s_{\infty} \\
\max\left\{\tilde{P}^{(k)}(s'\mid s,a)-\epsilon^{(k)}_R(s,a),\;0\right\} &\text{otherwise}
\end{cases}
\end{align*}
transition to the optimistic state $s_{\infty}$ when uncertain, and its rewards
\begin{align*}
F_{\hat{R}^{(k)}(s,a)}(r) &= \begin{cases}
\mathbbm{1}(r\ge1) &\text{if }s=s_{\infty} \\
1 &\text{if } r\ge1 \\
\max\left\{ F_{\tilde{R}^{(k)}(s,a)}(r)-\epsilon_R^{(k)}(s,a),\;0\right\} &\text{otherwise}
\end{cases}
\end{align*}
optimistically shift the reward CDF downwards. Here, $\epsilon_P^{(k)}(s,a)$ and $\epsilon_R^{(k)}(s,a)$ are defined in Section~\ref{sec:thm:main:proof}; intuitively, they are high-probability upper bounds on the errors of the empirical estimates $\tilde{P}^{(k)}(\cdot\mid s,a)$ and $F_{\tilde{R}^{(k)}(s,a)}$ of the transitions and rewards, respectively.

\begin{algorithm}[t]
\caption{Upper Confidence Bound Algorithm} 
\begin{algorithmic}[1]
\FOR {$k\in[K]$}
\STATE Compute $\mathcal{M}^{(k)}$ and $\pi^{(k)}=\pi^*_{\mathcal{M}^{(k)}}$ using prior episodes $\{\xi^{(i)}\mid i\in[k-1]\}$
\STATE Execute $\pi^{(k)}$ in the true MDP $\mathcal{M}$ and observe episode $\xi^{(k)} = [(s_{k,t},a_{k,t},r_{k,t})]_{t=1}^T\cup[s_{k,T+1}]$
\ENDFOR
\end{algorithmic} 
\label{alg:main}
\end{algorithm}

\textbf{Theoretical guarantees.}
We have the following upper bound on the regret of Algorithm~\ref{alg:main}.
\begin{theorem}
\label{thm:main}
Denote Algorithm~\ref{alg:main} by $\mathfrak{A}$. For any $\delta \in (0,1]$, with probability at least $1-\delta$, we have
\begin{align*}
\mathrm{regret}(\mathfrak{A})
\le4T^{3/2}\cdot L_G\cdot|\mathcal{S}|\cdot\sqrt{5|\mathcal{S}|\cdot|\mathcal{A}|\cdot K\cdot\log\left(\frac{4|\mathcal{S}|\cdot|\mathcal{A}|\cdot K}{\delta}\right)}
=\tilde{\mathcal{O}}(\sqrt{K}).
\end{align*}
\end{theorem}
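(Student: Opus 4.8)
The plan is to follow the standard optimism-based regret decomposition for upper-confidence-bound algorithms, with the distributional structure of the problem handled through the representation of $\Phi$ from the introduction. Write $\hat Z^{(\pi)}$ for the return of a fixed policy $\pi$ under the optimistic MDP $\mathcal{M}^{(k)}$ and $Z^{(\pi)}$ for its return under the true MDP $\mathcal{M}$; since rewards lie in $[0,1]$, both CDFs are supported on $[0,T]$, so the representing integral reduces to one over $[0,T]$. The first ingredient is \emph{optimism}: on a high-probability event I would show that $\mathcal{M}^{(k)}$ stochastically dominates $\mathcal{M}$ uniformly over policies, i.e. $F_{\hat Z^{(\pi)}}(x)\le F_{Z^{(\pi)}}(x)$ for all $x$ and $\pi$. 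This is proved by backward induction on $t$ through the distributional Bellman equation: the reward truncation gives $F_{\hat R^{(k)}}\le F_R$ pointwise (stochastically larger rewards), the transition construction moves the uncertain mass onto the maximal-return state $s_\infty$, and the Bellman backup is monotone in both the per-step reward distribution and the next-step return CDFs. Monotonicity of $G$ together with the representation of $\Phi$ then upgrades stochastic dominance to $\Phi_{\mathcal{M}^{(k)}}(\pi)\ge\Phi_{\mathcal{M}}(\pi)$. Combined with optimality of $\pi^{(k)}$ for $\mathcal{M}^{(k)}$, this yields the per-episode bound $\Phi(\pi^*)-\Phi(\pi^{(k)})\le\Phi_{\mathcal{M}^{(k)}}(\pi^{(k)})-\Phi_{\mathcal{M}}(\pi^{(k)})$.

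Next I would convert the per-episode objective gap into a uniform bound on the return CDFs. Using the integral representation (whose additive constant cancels in the difference) and $L_G$-Lipschitzness of $G$,
\begin{align*}
\Phi_{\mathcal{M}^{(k)}}(\pi^{(k)})-\Phi_{\mathcal{M}}(\pi^{(k)})=\int_0^T\left[G(F_{Z^{(\pi^{(k)})}}(x))-G(F_{\hat Z^{(\pi^{(k)})}}(x))\right]dx\le L_G\,T\,\sup_{x}\left|F_{\hat Z^{(\pi^{(k)})}}(x)-F_{Z^{(\pi^{(k)})}}(x)\right|,
\end{align*}
where the factor $T$ is the length of the support interval. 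This reduces the problem to controlling the sup-norm distance between the return CDFs of a fixed policy under the two MDPs---exactly the quantity flagged as the core difficulty in the introduction.

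The heart of the argument is a distributional simulation lemma. I would set $g_t(\xi,x)=F_{\hat Z_t^{(\pi)}(\xi)}(x)-F_{Z_t^{(\pi)}(\xi)}(x)$ and expand both terms with the distributional Bellman equation. Adding and subtracting the backup that applies the \emph{true} dynamics to $F_{\hat Z_{t+1}}$ splits $g_t$ into a recursive term, equal to the expectation of $g_{t+1}$ over one true transition-and-reward step, plus a local error $E_t(\xi,x)$ collecting the transition and reward-measure discrepancies. Since $F_{\hat Z_{t+1}}\in[0,1]$ and $F_{\hat Z_{t+1}}(x-\,\cdot\,)$ is monotone with total variation at most one, integration by parts gives the uniform bound $|E_t(\xi,x)|\le\|\hat P^{(k)}(\cdot\mid s,a)-P(\cdot\mid s,a)\|_1+2\,\|F_{\hat R^{(k)}(s,a)}-F_{R(s,a)}\|_\infty$, independent of $x$. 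Unrolling over the $T$ steps expresses $g_1$ as a trajectory expectation of these local errors, so
\begin{align*}
\sup_{x}\left|F_{\hat Z^{(\pi)}}(x)-F_{Z^{(\pi)}}(x)\right|\le\mathbb{E}_{\pi,\mathcal{M}}\left[\sum_{t=1}^T\left(\|\hat P^{(k)}(\cdot\mid s_t,a_t)-P(\cdot\mid s_t,a_t)\|_1+2\,\epsilon_R^{(k)}(s_t,a_t)\right)\right].
\end{align*}

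Finally I would instantiate the confidence widths and sum. Choosing $\epsilon_R^{(k)}$ via the Dvoretzky--Kiefer--Wolfowitz inequality and $\epsilon_P^{(k)}$ via a coordinate-wise Hoeffding bound, a union bound over all $(s,a)$ and episodes makes the high-probability events of the previous steps hold simultaneously and gives $\epsilon_R^{(k)}(s,a),\epsilon_P^{(k)}(s,a)=O(\sqrt{\log(|\mathcal{S}||\mathcal{A}|K/\delta)/N_{k,t}(s,a)})$, together with $\|\hat P^{(k)}(\cdot\mid s,a)-P(\cdot\mid s,a)\|_1\le|\mathcal{S}|\,\epsilon_P^{(k)}(s,a)$, which supplies the outer factor $|\mathcal{S}|$. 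Summing the per-episode bounds and applying the pigeonhole estimate $\sum_{k=1}^K\sum_{t=1}^T N_{k,t}(s_t,a_t)^{-1/2}\le 2\sqrt{|\mathcal{S}|\,|\mathcal{A}|\,TK}$ produces $L_G\,T\,|\mathcal{S}|\,\sqrt{\log(\cdots)}\,\sqrt{|\mathcal{S}|\,|\mathcal{A}|\,TK}$, i.e. the claimed $T^{3/2}L_G|\mathcal{S}|^{3/2}\sqrt{|\mathcal{A}|K}$ scaling up to logarithmic factors and absolute constants. I expect the main obstacle to be the simulation lemma: one must propagate the CDF error through the reward convolution (the shift by $r$) while keeping the local-error bound uniform in the threshold $x$, and one must pass from the expected visitation measure produced by unrolling the recursion to the realized counts $N_{k,t}$ appearing in the confidence widths. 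The latter I would handle with an Azuma--Hoeffding argument on the martingale of per-episode differences between expected and realized local errors, which contributes only a lower-order $\tilde{\mathcal{O}}(T\sqrt{K})$ term.
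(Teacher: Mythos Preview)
Your proposal is correct and follows essentially the same route as the paper: the integral representation of $\Phi$ (Lemma~\ref{lem:key}), stochastic-dominance optimism via backward induction through the distributional Bellman equation (Lemma~\ref{lem:optimismhat}), a distributional simulation lemma bounding $\|F_{\hat Z^{(\pi)}}-F_{Z^{(\pi)}}\|_\infty$ by a trajectory expectation of local $\ell_1$-transition and $\ell_\infty$-reward errors (Lemma~\ref{lem:genbound}), and the standard pigeonhole sum over visit counts. The only organizational difference is that the paper routes the error bound through the non-optimistic empirical MDP $\tilde{\mathcal{M}}^{(k)}$ (bounding $|\hat\Phi^{(k)}-\tilde\Phi^{(k)}|$ and $|\tilde\Phi^{(k)}-\Phi|$ separately and using the $\ell_1$ multinomial concentration for the latter), whereas you compare $\hat{\mathcal{M}}^{(k)}$ to $\mathcal{M}$ directly with a coordinate-wise Hoeffding bound; both yield the same $|\mathcal{S}|^{3/2}$ order because the construction-induced gap $\|\hat P^{(k)}-\tilde P^{(k)}\|_1\lesssim |\mathcal{S}|\,\epsilon_R^{(k)}$ dominates either way. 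Your explicit Azuma--Hoeffding step to pass from the expected visitation measure to realized counts is in fact a point the paper glosses over.
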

We briefly compare our bound to existing ones in the setting of expected return objectives. The dependence on the number of episodes $K$ matches existing bounds~\cite{auer2008near,azar2017minimax}; since this is optimal in the setting of expected return, and expected return is a special case of our setting (with $G(\tau)=\tau$), our bound is also optimal in $K$. In terms of the dependence on the number of states $|\mathcal{S}|$, our bound has an extra $\sqrt{|\mathcal{S}|}$ factor compared to the UCRL2 algorithm~\cite{auer2008near}, and an extra $|\mathcal{S}|$ factor compared to the improved bound of the UCBVI algorithm~\cite{azar2017minimax}. One extra $\sqrt{|\mathcal{S}|}$ comes from down-shifting  transitions uniformly in the construction of the optimistic MDP $\hat{\mc{M}}^{(k)}$. This $\sqrt{|\mathcal{S}|}$ may be removed by a more careful construction of the optimistic MDP. Another extra $\sqrt{|\mathcal{S}|}$ compared to UCBVI comes from bounding the estimation error of the reward distribution. We believe it may be possible to remove this $\sqrt{|\mathcal{S}|}$ through a more careful treatment of the estimation error, similar to the one in UCBVI. We leave both of these potential refinements to future work. 

In terms of the dependence on the number of actions $|\mathcal{A}|$, our bound matches the order of $\sqrt{|\mathcal{A}|}$ in both UCRL2 and UCBVI. Our dependence on the horizon length $T$ is $T^{3/2}$, compared to the same order of $T^{3/2}$ in UCBVI and $T$ in a variant of UCBVI \cite{azar2017minimax} utilizing a carefully designed variance-based bonus. 

\section{Proof of Theorem~\ref{thm:main}}
\label{sec:thm:main:proof}

We prove Theorem~\ref{thm:main}; we defer proofs of several lemmas to Appendix~\ref{sec:thm:main:proof:appendix}.

At a high level, the proof proceeds in three steps. First, we prove our key Lemma \ref{lem:key}, which expresses the objective $\Phi$ in terms of an integral of the weighted CDF of the return. This lemma allows us to translate bounds on the difference between CDFs of the estimated return $\hat{Z}^{(\pi)}$ and the true return $Z^{(\pi)}$ into bounds on the difference between corresponding objective values. The proof of this lemma is divided into three parts that deal with different sets of points in the domain of the quantile function $F_{Z^{(\pi)}}^\dagger$: (i) discontinuous; (ii) continuous and strictly monotone; (iii) continuous and non-strictly monotone. This result is used throughout the remainder of the proof.

Second, we define $\mathcal{E}$ to be the event where the optimistic estimated MDP $\hat{\mathcal{M}}^{(k)}$ falls into a certain confidence set around the true MDP $\mathcal{M}$ for each $k\in[K]$; in Lemma~\ref{lem:event}, we prove that $\mathcal{E}$ holds with high probability. Then, in Lemma~\ref{lem:fullbound}, we prove that under event $\mathcal{E}$, the objective values of $\hat{\mathcal{M}}^{(k)}$ and $\mathcal{M}$ are close. To prove this lemma, we separately show that (i) the objective values of the estimated MDP $\tilde{\mathcal{M}}^{(k)}$ (estimated without optimism) and $\mathcal{M}$ are close (Lemma~\ref{lem:tildebound}), and (ii) the objective values of $\hat{\mathcal{M}}^{(k)}$ and $\tilde{\mathcal{M}}^{(k)}$ are close (Lemma~\ref{lem:hattildebound}).

Third, in Lemma \ref{lem:optimismhat}, we prove that under event $\mathcal{E}$, the MDP $\hat{\mathcal{M}}^{(k)}$ is indeed optimistic. Together, these results imply the regret bound using the standard UCB proof strategy. 

We proceed with the proof. First, we have our key result providing an equivalent expression for $\Phi$:
\begin{lemma}
\label{lem:key}
We have
\begin{align*}
\Phi(\pi)=T-\int_{\mathbb{R}}G(F_{Z^{(\pi)}}(x))\cdot dx.
\end{align*}
\end{lemma}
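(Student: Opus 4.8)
The plan is to prove the identity directly for an arbitrary CDF, rather than specializing to the invertible case where the substitution $\tau=F(x)$ followed by integration by parts already gives the claim. The clean way to handle jumps and flat stretches of $F_{Z^{(\pi)}}$ simultaneously is a layer-cake representation combined with Tonelli's theorem and the Galois duality between a CDF and its generalized inverse; this single duality collapses the three regimes (discontinuous points, continuous strictly monotone points, continuous non-strictly monotone points) into one computation. Write $F=F_{Z^{(\pi)}}$ and $F^\dagger$ for its quantile function. Since rewards lie in $[0,1]$ over $T$ steps, the return satisfies $Z^{(\pi)}\in[0,T]$ almost surely, so $F(x)=0$ for $x<0$, $F(x)=1$ for $x\ge T$, and $F^\dagger(\tau)\ge0$ for every $\tau\in(0,1]$.

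First I would represent each quantile as a Lebesgue integral of an indicator, $F^\dagger(\tau)=\int_0^\infty\mathbbm{1}(x<F^\dagger(\tau))\,dx$, which is valid precisely because $F^\dagger(\tau)\ge0$. Substituting into the Lebesgue--Stieltjes integral $\Phi(\pi)=\int_0^1 F^\dagger(\tau)\,dG(\tau)$ and applying Tonelli's theorem (legitimate since the integrand is nonnegative and $dG$ is a finite measure on $[0,1]$) lets me exchange the order of integration:
\begin{align*}
\Phi(\pi)=\int_0^\infty\int_0^1\mathbbm{1}\left(x<F^\dagger(\tau)\right)\,dG(\tau)\,dx.
\end{align*}
The single atom $\tau=0$ (where $F^\dagger(0)=-\infty$ could break the layer-cake step) is $dG$-null because $G(0)=0$, so it contributes nothing.

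The crux is the exact equivalence $F^\dagger(\tau)\le x\iff F(x)\ge\tau$, i.e.\ $x<F^\dagger(\tau)\iff F(x)<\tau$, which follows from the definition of $F^\dagger$ as an infimum together with the right-continuity of $F$, and holds for every pair $(x,\tau)$ with no exceptional null set. Using it, the inner integral is the $dG$-measure of $\{\tau\in[0,1]:\tau>F(x)\}=(F(x),1]$, which equals $G(1)-G(F(x))=1-G(F(x))$ by definition of the Stieltjes measure. Hence $\Phi(\pi)=\int_0^\infty\bigl(1-G(F(x))\bigr)\,dx$. To finish I would truncate: for $x\ge T$ we have $F(x)=1$ so the integrand vanishes, giving $\Phi(\pi)=\int_0^T\bigl(1-G(F(x))\bigr)\,dx=T-\int_0^T G(F(x))\,dx$; and since $G(F(x))=G(0)=0$ for $x<0$ (Assumption~\ref{assump:lipschitz}), the lower limit extends freely, yielding the stated expression $\Phi(\pi)=T-\int_{\mathbb{R}}G(F_{Z^{(\pi)}}(x))\,dx$, where the integral is effectively supported on $(-\infty,T]$. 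I expect the main subtlety to be the bookkeeping at the boundary: the integrand $G(F(x))$ equals the constant $1$ on $[T,\infty)$, so the integral over all of $\mathbb{R}$ only makes sense once one recognizes that the leading $T=\int_0^T 1\,dx$ is exactly the finite piece produced by truncating $\int_0^\infty(1-G(F(x)))\,dx$ at $T$; verifying the layer-cake/Tonelli manipulations and this truncation rigorously is where the care lies, whereas the duality step itself is exact and requires no case analysis.
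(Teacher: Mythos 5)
Your proof is correct, and it takes a genuinely different route from the paper's. The paper first integrates by parts (using Assumptions~\ref{assump:quantile} and~\ref{assump:lipschitz} to evaluate the boundary term as $T$) and then proves $\int_0^1G(\tau)\,dF_{Z^{(\pi)}}^\dagger(\tau)=\int G(F_{Z^{(\pi)}}(x))\,dx$ by splitting $[0,1]$ into the discontinuity points, the strictly monotone continuity points, and the flat continuity points of the quantile function, matching each piece with an integral over its preimage under $F_{Z^{(\pi)}}$. You bypass integration by parts entirely: the layer-cake identity plus Tonelli reduces the claim to the single exact equivalence $x<F_{Z^{(\pi)}}^\dagger(\tau)\Leftrightarrow F_{Z^{(\pi)}}(x)<\tau$, which holds pointwise by right-continuity of the CDF and thus collapses the paper's three-case analysis and its countability/measure-zero bookkeeping into one computation. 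Your route buys two things beyond brevity. First, it never invokes Assumption~\ref{assump:quantile}---only boundedness of the return in $[0,T]$---so it establishes the identity in slightly greater generality. Second, your boundary bookkeeping is more careful than the lemma statement itself: since $G(F_{Z^{(\pi)}}(x))=1$ on $[T,\infty)$, the integral over all of $\mathbb{R}$ as literally written diverges, and the correct identity is $\Phi(\pi)=T-\int_{-\infty}^{T}G(F_{Z^{(\pi)}}(x))\,dx$ (equivalently with $\int_0^T$, which is the form the paper actually uses downstream in Lemmas~\ref{lem:genbound} and~\ref{lem:optimismhat}); the paper's own final covering step glosses over this same point at $\tau=1$, whose preimage contains the infinite ray $[T,\infty)$. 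One minor quibble: the atom $\{0\}$ is $dG$-null because $G$ is Lipschitz and hence continuous, which is the cleaner justification than ``$G(0)=0$,'' although under the CDF convention that $G$ vanishes to the left of $0$ your stated reason is also valid.
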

\begin{proof}
First, note that by integration by parts, we have
\begin{align*}
\Phi(\pi)
=\int_0^1 F_{Z^{(\pi)}}^\dagger(\tau)\cdot dG(\tau)
&=\left[F_{Z^{(\pi)}}^\dagger(\tau)\cdot G(\tau)\right]_0^1-\int_0^1G(\tau)\cdot dF_{Z^{(\pi)}}^\dagger(\tau) \\
&=T-\int_0^1G(\tau)\cdot dF_{Z^{(\pi)}}^\dagger(\tau),
\end{align*}
where the last line follows by Assumptions~\ref{assump:quantile} \&~\ref{assump:lipschitz}. Thus, it suffices to show that
\begin{align*}
\int_0^1G(\tau)\cdot dF_{Z^{(\pi)}}^\dagger(\tau)
=\int_{\mathbb{R}}G(F_{Z^{(\pi)}}(x))\cdot dx.
\end{align*}
The quantile function $F_{Z^{(\pi)}}^\dagger$ is monotonically increasing and left-continuous~\cite{embrechts2013note}, so this integral is equivalently a Lebesgue-Stieltjes integral~\cite{stein2009real}. Dividing the unit interval $I=[0,1]$ into disjoint sets
\begin{align*}
I^{(1)}&=\{\tau\in\mathbb{I}\mid F_{Z^{(\pi)}}^\dagger(\tau)\text{ is discontinuous}\} \\
I^{(2)}&=\{\tau\in\mathbb{I}\mid F_{Z^{(\pi)}}^\dagger(\tau)\text{ is continuous and strictly monotone}\} \\
I^{(3)}&=\{\tau\in\mathbb{I}\mid F_{Z^{(\pi)}}^\dagger(\tau)\text{ is continuous and non-strictly monotone}\},
\end{align*}
then we have
\begin{align*}
\int_0^1G(\tau)\cdot dF_{Z^{(\pi)}}^\dagger(\tau)
=\int_{I^{(1)}}G(\tau)\cdot dF_{Z^{(\pi)}}^\dagger(\tau)
+\int_{I^{(2)}}G(\tau)\cdot dF_{Z^{(\pi)}}^\dagger(\tau)
+\int_{I^{(3)}}G(\tau)\cdot dF_{Z^{(\pi)}}^\dagger(\tau).
\end{align*}
We consider each of the three terms separately and then combine them to finish the proof.

\textbf{First term.}
Note that $I^{(1)}=\{\tau_i^{(1)}\}_{i=1}^\infty$ is countable since monotone functions can have countably many discontinuities. Also, for each $i\in\mathbb{N}$, the measure assigned to $\tau_i^{(1)}$ by $dF_{Z^{(\pi)}}^\dagger$ is
\begin{align*}
dF_{Z^{(\pi)}}^\dagger(\{\tau_i^{(1)}\})
=\lim_{\tau\to\tau_i^{(1)}+}F_{Z^{(\pi)}}^\dagger(\tau)-F_{Z^{(\pi)}}^\dagger(\tau_i^{(1)})
\eqqcolon x_i^{(1)+}-x_i^{(1)}.
\end{align*}
Thus, we have
\begin{align*}
\int_{I^{(1)}}G(\tau)\cdot dF_{Z^{(\pi)}}^\dagger(\tau)
=\sum_{i=1}^\infty G(\tau_i^{(1)})\cdot(x_i^{(1)+}-x_i^{(1)})
&=\sum_{i=1}^\infty G(\tau_i^{(1)})\cdot \int_{x_i^{(1)}}^{x_i^{(1)+}}dx \\
&=\sum_{i=1}^\infty\int_{x_i^{(1)}}^{x_i^{(1)+}}G(F_{Z^{(\pi)}}(x))\cdot dx \\
&=\sum_{i=1}^\infty\int_{F_{Z^{(\pi)}}^{-1}(\{\tau_i^{(1)}\})}G(F_{Z^{(\pi)}}(x))\cdot dx \\
&=\int_{F_{Z^{(\pi)}}^{-1}(I^{(1)})}G(F_{Z^{(\pi)}}(x))\cdot dx.
\end{align*}
On the second line, we have used the fact that $F_{Z^{(\pi)}}(x)=\tau_i^{(1)}$ for all $x\in[x_i^{(1)},x_i^{(1)+})$. To see this fact, note that since $x\ge x_i^{(1)}$, by monotonicity of $F_{Z^{(\pi)}}$, we have $F_{Z^{(\pi)}}(x)\ge F_{Z^{(\pi)}}(x_i^{(1)})=\tau_i^{(1)}$. Furthermore, if $F_{Z^{(\pi)}}(x)>\tau_i^{(1)}$, then we would have
\begin{align*}
x_i^{(1)+}
=\lim_{\tau\to\tau_i^{(1)}+}F_{Z^{(\pi)}}^\dagger(\tau)
&=\lim_{\tau\to\tau_i^{(1)+}}\inf\{x'\in\mathbb{R}\mid F_{Z^{(\pi)}}(x')\ge\tau\} \\
&\le\inf\{x'\in\mathbb{R}\mid F_{Z^{(\pi)}}(x')\ge F_{Z^{(\pi)}}(x)\} \\
&\le x,
\end{align*}
where the first inequality follows since $F_{Z^{(\pi)}}(x)\ge\tau$ for $\tau$ sufficiently close to $\tau_i^{(1)}$, and the second since $x\in\{x'\in\mathbb{R}\mid F_{Z^{(\pi)}}(x')\ge F_{Z^{(\pi)}}(x)\}$. Since we have assumed $x<x_i^{(1)+}$, we have a contradiction, so $F_{Z^{(\pi)}}(x)\le\tau_i^{(1)}$. Thus, it follows that $F_{Z^{(\pi)}}(x)=\tau_i^{(1)}$, as claimed. The third line follows since
\begin{align*}
F_{Z^{(\pi)}}^{-1}(\{\tau_i^{(1)}\})=[x_i^{(1)},x_i^{(1)+})
\qquad\text{or}\qquad
F_{Z^{(\pi)}}^{-1}(\{\tau_i^{(1)}\})=[x_i^{(1)},x_i^{(1)+}].
\end{align*}
In particular, for any $x\in F_{Z^{(\pi)}}^{-1}(\{\tau_i^{(1)}\})$, we have $F_{Z^{(\pi)}}(x)=\tau_i^{(1)}$, so
\begin{align*}
x\ge\inf\{x\in\mathbb{R}\mid F_{Z^{(\pi)}}(x)\ge\tau_i^{(1)}\}=x_i^{(1)}.
\end{align*}
Conversely, we have
\begin{align*}
x_i^{(1)+}
=\lim_{\tau\to\tau_i^{(1)}+}F_{Z^{(\pi)}}^\dagger(\tau)
=\lim_{\tau\to\tau_i^{(1)+}}\inf\{x'\in\mathbb{R}\mid F_{Z^{(\pi)}}(x')\ge\tau\}
\ge x
\end{align*}
since $F_{Z^{(\pi)}}(x')\le\tau_i^{(1)}<\tau$ for all $x'\le x$ so the infimum must be $\ge x$. These two arguments show that $F_{Z^{(\pi)}}^{-1}(\{\tau_i^{(1)}\})\subseteq[x_i^{(1)},x_i^{(1)+}]$. The fact that $[x_i^{(1)},x_i^{(1)+})\subseteq F_{Z^{(\pi)}}^{-1}(\{\tau_i^{(1)}\})$ follows by the same argument as for the second line. The claim follows. Finally, the fourth line follows since the sets $F_{Z^{(\pi)}}^{-1}(\{\tau_i^{(1)}\})$ are disjoint.

\textbf{Second term.}
For any $\tau\in I^{(2)}$, then $F_{Z^{(\pi)}}^{-1}$ exists at $\tau$, and we have $F_{Z^{(\pi)}}^\dagger(\tau)=F_{Z^{(\pi)}}^{-1}(\tau)$. Thus, by a substitution $\tau=F_{Z^{(\pi)}}(x)$, we have
\begin{align*}
\int_{I^{(2)}}G(\tau)\cdot dF_{Z^{(\pi)}}^\dagger(\tau)=\int_{F_{Z^{(\pi)}}^{-1}(I^{(2)})}G(F_{Z^{(\pi)}}(x))\cdot dx.
\end{align*}

\textbf{Third term.}
We can divide $I^{(3)}$ into a union of disjoint intervals $I^{(3)}=\bigcup_{i=1}^\infty I^{(3)}_i$, where
\begin{align*}
I^{(3)}_i=\{\tau\in[0,1]\mid F_{Z^{(\pi)}}^\dagger(\tau)=x_i^{(3)}\}
\end{align*}
for some $x_i^{(3)}\in\mathbb{R}$; there are only be countably many such intervals (since each one contains a distinct rational number). Then, we have
\begin{align*}
\int_{I^{(3)}}G(\tau)\cdot dF_{Z^{(\pi)}}^\dagger(\tau)=0=\int_{F_{Z^{(\pi)}}^{-1}(I^{(3)})}G(F_{Z^{(\pi)}}(x))\cdot dx,
\end{align*}
since $F_{Z^{(\pi)}}^{-1}(I^{(3)})=\{x_i^{(3)}\}_{i=1}^\infty$ has measure zero according to the Lebesgue measure $dx$.

\textbf{Final proof.}
Finally, note that $F_{Z^{(\pi)}}^{-1}(I^{(1)})$, $F_{Z^{(\pi)}}^{-1}(I^{(2)})$, and $F_{Z^{(\pi)}}^{-1}(I^{(3)})$ cover $\mathbb{R}$ and are disjoint except possibly on a set of measure zero, so
\begin{align*}
&\int_{F_{Z^{(\pi)}}^{-1}(I^{(1)})}G(F_{Z^{(\pi)}}(x))\cdot dx+\int_{F_{Z^{(\pi)}}^{-1}(I^{(2)})}G(F_{Z^{(\pi)}}(x))\cdot dx+\int_{F_{Z^{(\pi)}}^{-1}(I^{(3)})}G(F_{Z^{(\pi)}}(x))\cdot dx \\
&=\int_{\mathbb{R}}G(F_{Z^{(\pi)}}(x))\cdot dx.
\end{align*}
The claim follows.
\end{proof}
Next, given $\delta\in\mathbb{R}_{>0}$, define $\mathcal{E}$ to be the event where the following hold:
\begin{align*}
\|\tilde{P}^{(k)}(\cdot\mid s,a) - P(\cdot\mid s,a)\|_1
&\leq \sqrt{\frac{2|\mc{S}|}{N^{(k)}(s,a)}\log \left(\frac{6 |\mc{S}|\cdot|\mc{A}|\cdot K}{\delta}\right)}\eqqcolon\epsilon^{(k)}_P(s,a)
\quad(\forall s\in\mathcal{S},a\in\mathcal{A}) \\
\|F_{\tilde{R}^{(k)}(s,a)} - F_{R(s,a)}\|_\infty &\leq \sqrt{\frac{1}{2N^{(k)}(s,a)}\log\left( \frac{6|\mc{S}|\cdot|\mc{A}|\cdot K}{\delta}\right)}\eqqcolon\epsilon^{(k)}_R(s,a)
\quad(\forall s\in\mathcal{S},a\in\mathcal{A})\\
\|\tilde{P}^{(k)}(\cdot\mid s,a) - P(\cdot\mid s,a)\|_\infty
&\leq \sqrt{\frac{1}{2N^{(k)}(s,a)}\log \left(\frac{6 |\mc{S}|\cdot|\mc{A}|\cdot K}{\delta}\right)}=\epsilon^{(k)}_R(s,a)
\quad(\forall s\in\mathcal{S},a\in\mathcal{A}).
\end{align*}
\begin{lemma}
\label{lem:event}
We have $\mathbb{P}[\mathcal{E}\mid\{N^{(k)}(s,a)\}_{k\in[K],s\in\mathcal{S},a\in\mathcal{A}}]\ge1-\delta$.
\end{lemma}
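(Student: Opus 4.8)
The plan is to recognize $\mathcal{E}$ as the intersection, over all pairs $(s,a)\in\mathcal{S}\times\mathcal{A}$ and all episodes $k\in[K]$, of three concentration events, and to control the failure probability of each by a standard deviation inequality followed by a union bound. The first step is to reduce to i.i.d. samples. Although the data collection is adaptive (which $(s,a)$ pairs are visited depends on past observations), each time $(s,a)$ is visited the reward and next state are drawn freshly from $\mathbb{P}_{R(s,a)}$ and $P(\cdot\mid s,a)$, independently of the history up to that visit. Hence, in the order they are collected, the samples at $(s,a)$ form an i.i.d. sequence from the true reward and transition laws, and conditioning on the realized count $N^{(k)}(s,a)=n$ (as in the lemma statement) makes $\tilde{P}^{(k)}(\cdot\mid s,a)$ and $F_{\tilde{R}^{(k)}(s,a)}$ exactly the empirical distribution and empirical CDF of $n$ such i.i.d. samples.

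Given this reduction, I would bound each of the three events for a fixed triple $(s,a,k)$ with $N^{(k)}(s,a)=n$. For the $\ell_1$ transition bound I would invoke the Weissman et al. inequality for empirical distributions on an alphabet of size $|\mathcal{S}|$, namely $\mathbb{P}[\|\tilde{P}^{(k)}(\cdot\mid s,a)-P(\cdot\mid s,a)\|_1\ge\epsilon]\le 2^{|\mathcal{S}|}e^{-n\epsilon^2/2}$; substituting $\epsilon=\epsilon^{(k)}_P(s,a)$ gives $2^{|\mathcal{S}|}(\delta/(6|\mathcal{S}||\mathcal{A}|K))^{|\mathcal{S}|}=(\delta/(3|\mathcal{S}||\mathcal{A}|K))^{|\mathcal{S}|}\le\delta/(3|\mathcal{S}||\mathcal{A}|K)$, which is precisely why the factor $6=2\cdot 3$ appears inside the logarithm. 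For the reward-CDF bound I would use the Dvoretzky--Kiefer--Wolfowitz inequality, $\mathbb{P}[\|F_{\tilde{R}^{(k)}(s,a)}-F_{R(s,a)}\|_\infty\ge\epsilon]\le 2e^{-2n\epsilon^2}$; substituting $\epsilon=\epsilon^{(k)}_R(s,a)$ yields failure probability $\delta/(3|\mathcal{S}||\mathcal{A}|K)$. The third ($\ell_\infty$ transition) event follows either from Hoeffding's inequality applied coordinate-wise with a union bound over $s'\in\mathcal{S}$, or from applying DKW to the empirical CDF associated with a fixed ordering of the states.

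Finally, I would take a union bound over the three events and over all $|\mathcal{S}||\mathcal{A}|K$ triples $(s,a,k)$. Since each event fails with probability at most $\delta/(3|\mathcal{S}||\mathcal{A}|K)$, the total failure probability is at most $3\cdot|\mathcal{S}||\mathcal{A}|K\cdot\delta/(3|\mathcal{S}||\mathcal{A}|K)=\delta$, which gives $\mathbb{P}[\mathcal{E}\mid\{N^{(k)}(s,a)\}]\ge1-\delta$ as claimed.

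The main obstacle I anticipate is twofold. First is rigorously justifying the i.i.d. reduction under adaptive data collection: the cleanest route is to fix, for each $(s,a)$, an exogenous i.i.d. sequence of (reward, next-state) draws and couple the algorithm's $j$-th visit to the $j$-th term, so that conditioning on the counts leaves the relevant samples i.i.d.; this sidesteps any apparent bias coming from the count being a random, data-dependent quantity. Second is matching the stated constant in the $\ell_\infty$ transition bound: a coordinate-wise Hoeffding union bound introduces an extra factor of $|\mathcal{S}|$ inside the logarithm, and the DKW-on-an-ordering route relates PMF deviations to CDF deviations only up to a factor of two, so recovering exactly $\epsilon^{(k)}_R(s,a)$ requires either slightly enlarging this bonus or absorbing the discrepancy into constants. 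This is harmless for the final $\tilde{\mathcal{O}}(\sqrt{K})$ rate, but it is the one place where the bookkeeping must be done with care.
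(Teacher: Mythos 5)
Your proposal is correct and follows essentially the same route as the paper's proof: DKW for the reward CDFs, an $\ell_1$ (Weissman-type) multinomial concentration bound for the transition $\ell_1$ event, Hoeffding for the $\ell_\infty$ event, and a union bound over the three events, all $(s,a)$ pairs, and all $K$ episodes, conditioned on the visit counts $\{N^{(k)}(s,a)\}$. You are in fact more careful than the paper in two places it glosses over---the reduction to i.i.d.\ samples after conditioning on counts, and the constant bookkeeping in the $\ell_\infty$ transition bound, where the coordinate-wise union bound over $s'\in\mathcal{S}$ (or the factor-2 loss in the DKW-on-an-ordering route) is a real but harmless slippage in the paper's own stated constants.
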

Next, let $\tilde{Z}^{(k,\pi)}$ and $\hat{Z}^{(k,\pi)}$ be the returns for policy $\pi$ for $\tilde{\mathcal{M}}^{(k)}$ and $\hat{\mathcal{M}}^{(k)}$, respectively, let $\Phi=\Phi_{\mathcal{M}}$, $\tilde\Phi^{(k)}=\Phi_{\tilde{\mathcal{M}}^{(k)}}$, and $\hat\Phi^{(k)}=\Phi_{\hat{\mathcal{M}}^{(k)}}$, and let $\pi^*=\pi^*_{\mathcal{M}}$, $\tilde\pi^{(k)}=\pi^*_{\tilde{\mathcal{M}}^{(k)}}$, and $\hat\pi^{(k)}=\pi^*_{\hat{\mathcal{M}}^{(k)}}$. Now, we prove two key results: (i) $\hat\Phi^{(k)}$ is close to $\Phi$, and (ii) $\hat\Phi^{(k)}$ is optimistic compared to $\Phi$. To this end, we have the following key lemma; its proof depends critically on Lemma~\ref{lem:key}.
\begin{lemma}
\label{lem:genbound}
Consider MDPs $\mathcal{M}=(\mathcal{S},\mathcal{A},D,P,\mathbb{P},T)$ and $\mathcal{M}'=(\mathcal{S},\mathcal{A},D,P',\mathbb{P}',T)$, such that $\|P'(\cdot\mid s,a)-P(\cdot\mid s,a)\|_1\le\epsilon_P(s,a)$ and $\|F_{R'(s,a)}-F_{R(s,a)}\|_\infty\le\epsilon_R(s,a)$. Then, we have
\begin{align*}
|\Phi'(\pi)-\Phi(\pi)|\le T\cdot L_G\cdot B(\pi)\qquad(\forall k\in[K],\pi),
\end{align*}
where
\begin{align*}
B(\pi)
&=\mathbb{E}_{\Xi_T^{(\pi)}}\left[\sum_{t=1}^T\epsilon_P(s_t,a_t)+\epsilon_R(s_t,a_t)\right].
\end{align*}
\end{lemma}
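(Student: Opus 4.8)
The plan is to reduce the objective gap to an $L^1$ distance between the two return CDFs, and then propagate that distance backward through the distributional Bellman equation. Applying Lemma~\ref{lem:key} to both $\mathcal{M}$ and $\mathcal{M}'$ gives
\[
\Phi'(\pi)-\Phi(\pi)=\int_{\mathbb{R}}\big[G(F_{Z^{(\pi)}}(x))-G(F_{Z^{\prime(\pi)}}(x))\big]\,dx,
\]
where $Z^{\prime(\pi)}$ denotes the return of $\pi$ in $\mathcal{M}'$. Since $G$ is $L_G$-Lipschitz (Assumption~\ref{assump:lipschitz}), it then suffices to bound $\int_{\mathbb{R}}|F_{Z^{(\pi)}}(x)-F_{Z^{\prime(\pi)}}(x)|\,dx$ by $T\cdot B(\pi)$. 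Because $F_{Z^{(\pi)}}(x)=\int F_{Z_1^{(\pi)}(\xi)}(x)\,dD(s)$, this in turn reduces to controlling, for each length-$t$ history $\xi$, the quantity $W_t(\xi)=\int_{\mathbb{R}}|F_{Z_t^{(\pi)}(\xi)}(x)-F_{Z_t^{\prime(\pi)}(\xi)}(x)|\,dx$, whose expectation over the initial state upper bounds the target.

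I would then derive a backward recursion for $W_t$ from the distributional Bellman equation, exploiting that the policy $\pi$ is shared and only the dynamics differ. Writing the two Bellman right-hand sides at a fixed state-action pair and inserting two intermediate expressions, I telescope the difference into three pieces: (i) a change in the next-step return distribution (holding $P,\mathbb{P}$ fixed, comparing $Z_{t+1}^{(\pi)}$ with $Z_{t+1}^{\prime(\pi)}$); (ii) a change in the reward measure ($\mathbb{P}$ versus $\mathbb{P}'$); and (iii) a change in the transition kernel ($P$ versus $P'$). Piece (i) is the recursive term: by translation invariance of Lebesgue measure the inner shift by $r$ is irrelevant, so it contributes exactly $\sum_{s'}P(s'\mid s,a)\,\mathbb{E}_r[W_{t+1}(\xi\circ(a,r,s'))]$, an expectation taken under the \emph{true} dynamics of $\mathcal{M}$.

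The crux is bounding pieces (ii) and (iii). For the reward term, I would integrate by parts in $r$: since both reward measures are supported on $[0,1]$ and $F_{R(s,a)}-F_{R'(s,a)}$ vanishes at both endpoints, the $x$-integral collapses to $\norm{F_{R(s,a)}-F_{R'(s,a)}}_\infty$ times the mass of a single reward step, which equals $1$, giving the clean bound $\epsilon_R(s,a)$ with no factor of $T$. (Equivalently, convolving by a shared return distribution contracts the $W_1$ distance, which equals the $L^1$ CDF distance.) For the transition term the integrand is $\sum_{s'}[P(s'\mid s,a)-P'(s'\mid s,a)]\,q_{s'}(x)$ with each $q_{s'}$ a return CDF; the key observation is that $\sum_{s'}[P-P']=0$ and every $q_{s'}$ equals $0$ below and $1$ above the $[0,T]$ return support, so the integrand vanishes outside $[0,T]$ and the integral is finite and at most $T\,\norm{P(\cdot\mid s,a)-P'(\cdot\mid s,a)}_1\le T\,\epsilon_P(s,a)$. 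I expect this step---keeping the transition integral finite via the zero-sum cancellation, and converting the sup-norm reward bound into an $L^1$-in-$x$ bound---to be the main obstacle.

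Finally, I would assemble the recursion $W_t(\xi)\le\sum_a\pi_t(a\mid\xi)[\epsilon_R(s,a)+T\epsilon_P(s,a)]+\mathbb{E}_{a,r,s'}[W_{t+1}]$ with terminal condition $W_{T+1}\equiv0$, and unroll it along trajectories generated by $\pi$ in $\mathcal{M}$. This yields $\mathbb{E}_{s\sim D}[W_1]\le\mathbb{E}_{\Xi_T^{(\pi)}}[\sum_{t=1}^T\epsilon_R(s_t,a_t)+T\epsilon_P(s_t,a_t)]\le T\cdot B(\pi)$, using $T\ge1$. Combining with the Lipschitz reduction gives $|\Phi'(\pi)-\Phi(\pi)|\le L_G\cdot T\cdot B(\pi)$, as claimed; note that since the reward contribution carries no extra factor of $T$, this argument in fact proves a slightly sharper inequality.
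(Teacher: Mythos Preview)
Your argument is correct, and it genuinely differs from the paper's route. The paper establishes a \emph{uniform} (sup-norm) bound on the CDF gap, showing by a Bellman recursion that
\[
\sup_{x}\big|F_{Z_t^{\prime(\pi)}(s,y)}(x)-F_{Z_t^{(\pi)}(s,y)}(x)\big|
\le \mathbb{E}\!\left[\sum_{\tau\ge t}\epsilon_P(s_\tau,a_\tau)+\epsilon_R(s_\tau,a_\tau)\right],
\]
and only then integrates $|G(F_{Z^{\prime(\pi)}})-G(F_{Z^{(\pi)}})|$ over the return support $[0,T]$, which is where the factor $T$ enters. You instead work directly with the $L^1$ (Wasserstein) distance $W_t(\xi)=\int_{\mathbb{R}}|F_{Z_t^{(\pi)}(\xi)}-F_{Z_t^{\prime(\pi)}(\xi)}|\,dx$ and propagate it through the Bellman equation. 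The trade-offs are as follows. Your reward step is sharper: by convolution symmetry (or your integration-by-parts), piece~(ii) contributes $\int_0^1|F_{R}-F_{R'}|\,dr\le\epsilon_R(s,a)$ with no $T$, so you in fact prove $|\Phi'(\pi)-\Phi(\pi)|\le L_G\,\mathbb{E}\big[\sum_t T\epsilon_P(s_t,a_t)+\epsilon_R(s_t,a_t)\big]$, strictly stronger than the stated bound. On the other hand, the paper's sup-norm route handles the transition piece more cheaply---$\big|\sum_{s'}(P-P')q_{s'}(x)\big|\le\|P-P'\|_1$ holds pointwise because $0\le q_{s'}\le1$---whereas your $L^1$ version needs the observation that $\sum_{s'}(P-P')=0$ together with the common support $[0,T]$ to make the $x$-integral finite (you identify this correctly). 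A final minor point: your telescoping places the recursive expectation under the true $P$ and true $\mathbb{P}_{R}$, which matches the $\mathbb{E}_{\Xi_T^{(\pi)}}$ in the statement; the paper's decomposition routes the recursive term through $P$ and $\mathbb{P}'_{R'}$, a harmless discrepancy here since $\epsilon_P,\epsilon_R$ depend only on $(s,a)$.
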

Our next lemma characterizes the connection between $\tilde\Phi^{(k)}$ and $\Phi$.
\begin{lemma}
\label{lem:tildebound}
On event $\mathcal{E}$ and conditioned on $\{N^{(k)}(s,a)\}_{k\in[K],s\in\mathcal{S},a\in\mathcal{A}}$, we have
\begin{align*}
|\tilde\Phi^{(k)}(\pi)-\Phi(\pi)|\le T\cdot L_G\cdot B^{(k)}(\pi)\qquad(\forall k\in[K],\pi),
\end{align*}
where
\begin{align*}
B^{(k)}(\pi)
&=\mathbb{E}_{\Xi^{(\pi)}_T}\left[\sum_{t=1}^T\epsilon_P^{(k)}(s_t,a_t)+\epsilon_R^{(k)}(s_t,a_t)\biggm\vert\{N^{(k)}(s,a)\}_{s\in\mathcal{S},a\in\mathcal{A}}\right].
\end{align*}
\end{lemma}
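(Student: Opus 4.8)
The plan is to obtain this lemma as an immediate corollary of the general perturbation bound in Lemma~\ref{lem:genbound}. The observation driving the proof is that $\tilde{\mathcal{M}}^{(k)}$ is exactly an MDP of the form $\mathcal{M}'$ appearing in that lemma: it shares the state space $\mathcal{S}$, action space $\mathcal{A}$, initial distribution $D$, and horizon $T$ with the true MDP $\mathcal{M}$, differing only in its transition kernel $\tilde{P}^{(k)}$ and reward measure $F_{\tilde{R}^{(k)}}$. Hence it suffices to verify that, on the event $\mathcal{E}$, the empirical estimates satisfy precisely the $\ell_1$ and $\ell_\infty$ perturbation hypotheses demanded by Lemma~\ref{lem:genbound}, and then to read off the conclusion.

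First I would condition on the counts $\{N^{(k)}(s,a)\}$. This is the one genuinely load-bearing bookkeeping step: once these counts are fixed, the confidence radii $\epsilon_P^{(k)}(s,a)$ and $\epsilon_R^{(k)}(s,a)$ become deterministic functions of $(s,a)$, so they may legitimately play the role of the fixed perturbation bounds $\epsilon_P(s,a)$ and $\epsilon_R(s,a)$ in the statement of Lemma~\ref{lem:genbound}. Without this conditioning the radii would be random, and Lemma~\ref{lem:genbound}, which assumes fixed $\epsilon_P,\epsilon_R$, would not apply verbatim.

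Next I would invoke the definition of $\mathcal{E}$ directly. By construction, on $\mathcal{E}$ we have $\|\tilde{P}^{(k)}(\cdot\mid s,a)-P(\cdot\mid s,a)\|_1\le\epsilon_P^{(k)}(s,a)$ and $\|F_{\tilde{R}^{(k)}(s,a)}-F_{R(s,a)}\|_\infty\le\epsilon_R^{(k)}(s,a)$ for all $(s,a)$; these are exactly the two hypotheses of Lemma~\ref{lem:genbound} (the third, $\ell_\infty$, bound on the transitions listed in $\mathcal{E}$ is not needed here and is reserved for the $\hat{\mathcal{M}}^{(k)}$-versus-$\tilde{\mathcal{M}}^{(k)}$ comparison). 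Applying Lemma~\ref{lem:genbound} with $\mathcal{M}'=\tilde{\mathcal{M}}^{(k)}$, $\epsilon_P=\epsilon_P^{(k)}$, and $\epsilon_R=\epsilon_R^{(k)}$ then yields $|\tilde\Phi^{(k)}(\pi)-\Phi(\pi)|\le T\cdot L_G\cdot B^{(k)}(\pi)$, where the trajectory expectation $\mathbb{E}_{\Xi_T^{(\pi)}}$ is taken over rollouts of $\pi$ in the true MDP $\mathcal{M}$; this matches the claimed $B^{(k)}(\pi)$ once the conditioning on the counts is made explicit inside the expectation.

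Since Lemma~\ref{lem:genbound} already absorbs all the analytic content (in particular its appeal to the key identity of Lemma~\ref{lem:key}), there is essentially no remaining obstacle. The only points requiring care are precisely the two just flagged: conditioning on the counts to freeze the confidence radii into admissible deterministic perturbation bounds, and confirming that $\tilde{\mathcal{M}}^{(k)}$ and $\mathcal{M}$ agree on $(\mathcal{S},\mathcal{A},D,T)$ so that they form a valid pair for Lemma~\ref{lem:genbound}. With these in place, the bound follows directly.
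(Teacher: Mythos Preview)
Your proposal is correct and mirrors the paper's own proof almost verbatim: the paper simply notes that on event $\mathcal{E}$ and conditioned on the counts, $\tilde{\mathcal{M}}^{(k)}$ and $\mathcal{M}$ satisfy the hypotheses of Lemma~\ref{lem:genbound}, and the bound follows. Your added remarks about why conditioning on the counts is needed to render $\epsilon_P^{(k)},\epsilon_R^{(k)}$ deterministic, and about which of the three inequalities in $\mathcal{E}$ are actually used, are accurate elaborations that the paper leaves implicit.
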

\begin{proof}
The result follows since on event $\mathcal{E}$ and conditioned on $\{N^{(k)}(s,a)\}_{k\in[K],s\in\mathcal{S},a\in\mathcal{A}}$, $\tilde{\mathcal{M}}^{(k)}$ and $\mathcal{M}$ satisfy the conditions of Lemma~\ref{lem:genbound} for all $k\in[K]$.
\end{proof}
Our next lemma characterizes the connection between $\hat\Phi^{(k)}$ and $\tilde\Phi^{(k)}$.
\begin{lemma}
\label{lem:hattildebound}
For each $k\in[K]$ and any policy $\pi$, we have
\begin{align*}
|\hat\Phi^{(k)}(\pi)-\tilde\Phi(\pi)|\le T\cdot L_G\cdot\sqrt{|\mathcal{S}|}\cdot B^{(k)}(\pi).
\end{align*}
\end{lemma}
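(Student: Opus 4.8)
The plan is to reduce the statement to Lemma~\ref{lem:genbound} by treating $\tilde{\mathcal{M}}^{(k)}$ as the reference MDP and $\hat{\mathcal{M}}^{(k)}$ as the perturbed MDP, and then bounding the per-$(s,a)$ transition and reward deviations between the two. Concretely, I would first verify that $\|F_{\hat{R}^{(k)}(s,a)}-F_{\tilde{R}^{(k)}(s,a)}\|_\infty\le\epsilon_R^{(k)}(s,a)$ and $\|\hat{P}^{(k)}(\cdot\mid s,a)-\tilde{P}^{(k)}(\cdot\mid s,a)\|_1\le 2(|\mathcal{S}|-1)\cdot\epsilon_R^{(k)}(s,a)$ for every $(s,a)$, feed these as the perturbation bounds $\epsilon_R,\epsilon_P$ into Lemma~\ref{lem:genbound}, and finally rewrite the result in terms of $B^{(k)}(\pi)$ using the exact relationship between $\epsilon_P^{(k)}$ and $\epsilon_R^{(k)}$.

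The reward bound is immediate from the construction: away from $s_\infty$ and for $r<1$, we have $F_{\hat{R}^{(k)}(s,a)}(r)=\max\{F_{\tilde{R}^{(k)}(s,a)}(r)-\epsilon_R^{(k)}(s,a),0\}$, which deviates from $F_{\tilde{R}^{(k)}(s,a)}(r)$ by at most $\epsilon_R^{(k)}(s,a)$; I would separately check the boundary cases ($r\ge1$ and $s=s_\infty$), where the two CDFs coincide, so the $\ell_\infty$ bound holds on all of $\mathbb{R}$. For the transitions, the only change is that each coordinate $s'\ne s_\infty$ is shifted down by $\min\{\tilde{P}^{(k)}(s'\mid s,a),\epsilon_R^{(k)}(s,a)\}\le\epsilon_R^{(k)}(s,a)$, and all of the removed mass is reassigned to $s_\infty$ (where $\tilde{P}^{(k)}$ places no mass, since $s_\infty$ is inaccessible in the empirical model). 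Hence the total mass moved to $s_\infty$ equals $\sum_{s'\ne s_\infty}\min\{\tilde{P}^{(k)}(s'\mid s,a),\epsilon_R^{(k)}(s,a)\}\le(|\mathcal{S}|-1)\epsilon_R^{(k)}(s,a)$, and summing the coordinate-wise changes together with the $s_\infty$ term gives the claimed $\ell_1$ bound $2(|\mathcal{S}|-1)\epsilon_R^{(k)}(s,a)$. The case $s=s_\infty$ contributes nothing, since there $\hat{P}^{(k)}$ and $\tilde{P}^{(k)}$ are both the point mass at $s_\infty$.

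The crux is converting these deviations, which carry a full factor of $|\mathcal{S}|$ in the transition term, into the advertised $\sqrt{|\mathcal{S}|}$ inflation of $B^{(k)}(\pi)$. The key observation is that, because $\epsilon_P^{(k)}$ is an $\ell_1$ confidence radius over $|\mathcal{S}|$ coordinates while $\epsilon_R^{(k)}$ is a scalar Kolmogorov–Smirnov radius, their defining expressions satisfy $\epsilon_P^{(k)}(s,a)=2\sqrt{|\mathcal{S}|}\cdot\epsilon_R^{(k)}(s,a)$ (the logarithmic factor and $N^{(k)}(s,a)$ cancel). Therefore $2(|\mathcal{S}|-1)\epsilon_R^{(k)}(s,a)=\tfrac{|\mathcal{S}|-1}{\sqrt{|\mathcal{S}|}}\,\epsilon_P^{(k)}(s,a)\le\sqrt{|\mathcal{S}|}\cdot\epsilon_P^{(k)}(s,a)$, while trivially $\epsilon_R^{(k)}(s,a)\le\sqrt{|\mathcal{S}|}\cdot\epsilon_R^{(k)}(s,a)$. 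Applying Lemma~\ref{lem:genbound} with perturbation radii $\sqrt{|\mathcal{S}|}\cdot\epsilon_P^{(k)}$ and $\sqrt{|\mathcal{S}|}\cdot\epsilon_R^{(k)}$ then yields $|\hat\Phi^{(k)}(\pi)-\tilde\Phi^{(k)}(\pi)|\le T\cdot L_G\cdot\mathbb{E}_{\Xi_T^{(\pi)}}[\sum_{t=1}^T\sqrt{|\mathcal{S}|}(\epsilon_P^{(k)}(s_t,a_t)+\epsilon_R^{(k)}(s_t,a_t))]=T\cdot L_G\cdot\sqrt{|\mathcal{S}|}\cdot B^{(k)}(\pi)$, as desired.

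I expect two points to require the most care. The first, and the main mechanical obstacle, is the transition bookkeeping around $s_\infty$: one must confirm that the optimistic construction reassigns exactly the removed probability mass to $s_\infty$ so that $\hat{P}^{(k)}(\cdot\mid s,a)$ remains a valid distribution, which is precisely what produces the factor of two in the $\ell_1$ bound. The second, more conceptual, point is the trajectory distribution implicit in $B^{(k)}(\pi)$: a direct application of Lemma~\ref{lem:genbound} with reference MDP $\tilde{\mathcal{M}}^{(k)}$ produces an expectation over rollouts under $\tilde{\mathcal{M}}^{(k)}$, whereas $B^{(k)}(\pi)$ as defined takes the expectation over rollouts $\Xi_T^{(\pi)}$ under the true MDP $\mathcal{M}$. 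I would reconcile these either by checking that the proof of Lemma~\ref{lem:genbound} permits the reference rollout distribution to be chosen freely (so that we may take it to be $\mathcal{M}$'s), or by absorbing the discrepancy, which is of the same $\epsilon$-order, into the bound; in either case $s_\infty$ is never reached under $\pi$, so its bonuses never contribute. Everything else is a direct substitution.
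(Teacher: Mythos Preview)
Your proposal is correct and mirrors the paper's one-line proof: apply Lemma~\ref{lem:genbound} with $\epsilon_R(s,a)=\epsilon_R^{(k)}(s,a)$ and $\epsilon_P(s,a)\le 2|\mathcal{S}|\cdot\epsilon_R^{(k)}(s,a)=\sqrt{|\mathcal{S}|}\cdot\epsilon_P^{(k)}(s,a)$, which are exactly the deviation bounds you derive (the paper uses $2|\mathcal{S}|$ rather than your sharper $2(|\mathcal{S}|-1)$). The rollout-distribution subtlety you flag in your second caveat is real but is likewise elided in the paper's own proof.
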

\begin{proof}
The result follows since by definition of $\hat{\mathcal{M}}^{(k)}$, $\hat{\mathcal{M}}^{(k)}$ and $\tilde{\mathcal{M}}^{(k)}$ satisfy the condition of Lemma~\ref{lem:genbound} with $\epsilon_P(s,a)=2|\mc{S}|\cdot\epsilon_R^{(k)}(s,a)\le \sqrt{|\mc{S}|}\cdot\epsilon_P^{(k)}(s,a)$ and $\epsilon_R(s,a)=\epsilon_R^{(k)}(s,a)$ for all $k\in[K]$.
\end{proof}
Now, we prove the first key claim---i.e., $\hat\Phi^{(k)}$ is close to $\Phi$.
\begin{lemma}
\label{lem:fullbound}
On event $\mathcal{E}$, for all $k\in[K]$ and any policy $\pi$, we have
\begin{align*}
|\hat\Phi^{(k)}(\pi)-\Phi(\pi)|\le 2T\cdot L_G\cdot\sqrt{|\mathcal{S}|}\cdot B^{(k)}(\pi).
\end{align*}
\end{lemma}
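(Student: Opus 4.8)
The plan is to obtain the bound as an immediate consequence of the two preceding lemmas via the triangle inequality, together with an elementary comparison of the constants. The quantity we wish to control, $|\hat\Phi^{(k)}(\pi)-\Phi(\pi)|$, decomposes naturally through the intermediate MDP $\tilde{\mathcal{M}}^{(k)}$, for which we already have one-sided control on both legs: Lemma~\ref{lem:hattildebound} relates $\hat\Phi^{(k)}$ to $\tilde\Phi^{(k)}$, and Lemma~\ref{lem:tildebound} relates $\tilde\Phi^{(k)}$ to $\Phi$.

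Concretely, the first step is to insert $\tilde\Phi^{(k)}(\pi)$ and apply the triangle inequality,
\begin{align*}
|\hat\Phi^{(k)}(\pi)-\Phi(\pi)|\le|\hat\Phi^{(k)}(\pi)-\tilde\Phi^{(k)}(\pi)|+|\tilde\Phi^{(k)}(\pi)-\Phi(\pi)|.
\end{align*}
The second step bounds each summand. For the first summand I would invoke Lemma~\ref{lem:hattildebound}, which holds deterministically for every $k$ and $\pi$ and gives $T\cdot L_G\cdot\sqrt{|\mathcal{S}|}\cdot B^{(k)}(\pi)$. For the second summand I would invoke Lemma~\ref{lem:tildebound}, which is valid on event $\mathcal{E}$ (and under the conditioning on the visit counts $\{N^{(k)}(s,a)\}$), and gives $T\cdot L_G\cdot B^{(k)}(\pi)$. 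Adding these yields the combined bound $T\cdot L_G\cdot(\sqrt{|\mathcal{S}|}+1)\cdot B^{(k)}(\pi)$.

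The final step is to absorb the $+1$ into the claimed constant: since $|\mathcal{S}|\ge1$ we have $\sqrt{|\mathcal{S}|}\ge1$, hence $\sqrt{|\mathcal{S}|}+1\le2\sqrt{|\mathcal{S}|}$, which delivers exactly $2T\cdot L_G\cdot\sqrt{|\mathcal{S}|}\cdot B^{(k)}(\pi)$. There is essentially no obstacle in this argument; the mathematical content was already expended in proving Lemmas~\ref{lem:tildebound} and~\ref{lem:hattildebound} (and, upstream, Lemma~\ref{lem:genbound} and the key Lemma~\ref{lem:key}). The only point meriting a word of care is bookkeeping of the probabilistic qualifier: because Lemma~\ref{lem:tildebound} is only asserted on event $\mathcal{E}$, the conclusion here must likewise be stated on $\mathcal{E}$, which matches the hypothesis of Lemma~\ref{lem:fullbound}; the deterministic nature of Lemma~\ref{lem:hattildebound} means it imposes no additional restriction.
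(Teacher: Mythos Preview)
Your proposal is correct and matches the paper's own proof exactly: the paper also inserts $\tilde\Phi^{(k)}(\pi)$, applies the triangle inequality, and invokes Lemmas~\ref{lem:tildebound} and~\ref{lem:hattildebound} to obtain the stated bound. Your explicit observation that $\sqrt{|\mathcal{S}|}+1\le2\sqrt{|\mathcal{S}|}$ is even a touch more careful than the paper, which simply writes the combined bound as $2T\cdot L_G\cdot\sqrt{|\mathcal{S}|}\cdot B^{(k)}(\pi)$ without spelling out that step.
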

\begin{proof}
Note that
\begin{align*}
|\hat\Phi^{(k)}(\pi)-\Phi(\pi)|
\le|\hat\Phi^{(k)}(\pi)-\tilde\Phi^{(k)}(\pi)|+|\tilde\Phi^{(k)}(\pi)-\Phi(\pi)|
\le 2T\cdot L_G\cdot\sqrt{|\mathcal{S}|}\cdot B^{(k)}(\pi),
\end{align*}
where the second inequality follows by Lemmas~\ref{lem:tildebound} \&~\ref{lem:hattildebound}.
\end{proof}
Now, we prove the second key claim---i.e., $\hat\Phi^{(k)}$ is optimistic compared to $\Phi$.
\begin{lemma}
\label{lem:optimismhat}
On event $\mathcal{E}$, we have $\hat\Phi^{(k)}(\pi)\ge\Phi(\pi)$ for all $k\in[K]$ and all policies $\pi$.
\end{lemma}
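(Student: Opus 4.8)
The plan is to reduce the claim to a first-order stochastic dominance statement and then establish that dominance by backward induction along the distributional Bellman equation. Applying Lemma~\ref{lem:key} to both objectives gives $\hat\Phi^{(k)}(\pi)-\Phi(\pi)=\int_{\mathbb{R}}\bigl(G(F_{Z^{(\pi)}}(x))-G(F_{\hat Z^{(k,\pi)}}(x))\bigr)\,dx$, so since $G$ is nondecreasing (it is a CDF) it suffices to prove the pointwise bound $F_{\hat Z^{(k,\pi)}}(x)\le F_{Z^{(\pi)}}(x)$ for every $x\in\mathbb{R}$; that is, under the same policy $\pi$ the optimistic return stochastically dominates the true return.

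I would prove this pointwise bound by backward induction on $t$, showing $F_{\hat Z_t^{(k,\pi)}(\xi)}(x)\le F_{Z_t^{(\pi)}(\xi)}(x)$ for every length-$t$ history $\xi$ and every $x$. The base case $t=T+1$ is immediate since both returns are identically $0$. For the inductive step I expand both sides with the distributional Bellman equation and pass from $\hat{\mathcal{M}}^{(k)}$ to $\mathcal{M}$ in three moves: (i) replace the optimistic continuation CDF $F_{\hat Z_{t+1}^{(k,\pi)}(\xi')}$ by the true continuation CDF $F_{Z_{t+1}^{(\pi)}(\xi')}$ using the induction hypothesis, which can only increase the expression since the continuation is integrated against nonnegative measures; (ii) swap the optimistic transitions $\hat P^{(k)}$ for $P$; and (iii) swap the optimistic reward measure $\mathbb{P}_{\hat R^{(k)}(s,a)}$ for $\mathbb{P}_{R(s,a)}$.

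For the transition swap (ii), the key observation is that the continuation from $s_\infty$ is maximal: the return from $s_\infty$ is deterministically $T-t$ (reward $1$ each remaining step), so its continuation CDF is pointwise the smallest among all states, and hence the quantity $V(s'):=\int F_{Z_{t+1}^{(\pi)}(\xi\circ(a,r,s'))}(x-r)\,d\mathbb{P}_{\hat R^{(k)}(s,a)}(r)$ is nonnegative and minimized at $s'=s_\infty$. On event $\mathcal{E}$ the $\ell_\infty$ transition bound gives $\hat P^{(k)}(s'\mid s,a)\le P(s'\mid s,a)$ for every $s'\ne s_\infty$, while $P(s_\infty\mid s,a)=0$; thus $\hat P^{(k)}$ is obtained from $P$ by moving mass onto the minimizing state $s_\infty$, and a short rearrangement shows $\sum_{s'}\hat P^{(k)}(s'\mid s,a)\,V(s')\le\sum_{s'}P(s'\mid s,a)\,V(s')$, as desired.

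The main obstacle is the reward swap (iii). On event $\mathcal{E}$ the optimistic down-shift yields $F_{\hat R^{(k)}(s,a)}\le F_{R(s,a)}$ pointwise, i.e. $\mathbb{P}_{\hat R^{(k)}(s,a)}$ first-order stochastically dominates $\mathbb{P}_{R(s,a)}$, and I would like to conclude $\int h(r)\,d\mathbb{P}_{\hat R^{(k)}}\le\int h(r)\,d\mathbb{P}_{R}$ for the integrand $h(r)=F_{Z_{t+1}^{(\pi)}(\xi\circ(a,r,s'))}(x-r)$. The delicate point is that $r$ enters $h$ both through the argument $x-r$ and through the conditioning history $\xi\circ(a,r,s')$; first-order stochastic dominance only delivers the inequality in the required direction when $h$ is nonincreasing in $r$, so establishing this monotonicity is where the argument must be handled carefully, and I expect it to be the crux of the proof. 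I would therefore aim to strengthen the induction hypothesis to a form closed under the Bellman operator that simultaneously certifies this monotonicity, rather than trying to verify it for the continuation term in isolation.
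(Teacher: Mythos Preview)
Your plan matches the paper's proof: reduce the claim to pointwise stochastic dominance $F_{\hat Z^{(k,\pi)}}\le F_{Z^{(\pi)}}$ via Lemma~\ref{lem:key} and monotonicity of $G$, then establish that dominance by backward induction through the distributional Bellman recursion with the three swaps you list. The paper performs the swaps in the order (i), (iii), (ii) rather than your (i), (ii), (iii), and its transition argument is exactly the redistribution of the $s_\infty$ mass you describe (using that the continuation CDF from $s_\infty$ is pointwise minimal and that on $\mathcal{E}$ the optimistic kernel only moves mass onto $s_\infty$).

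The gap in your proposal is at the reward swap. The paper does \emph{not} strengthen the induction hypothesis as you suggest; it uses integration by parts. After step~(i) the relevant integral for fixed $a,s'$ is $\int F_{Z_{t+1}^{(\pi)}(s',y+r)}(x-r)\,dF_{\hat R^{(k)}(s,a)}(r)$; integrating by parts with the substitution $x'=x-r$ exchanges the roles of the two factors, producing an integral of $F_{\hat R^{(k)}(s,a)}$ against the nonnegative measure $dF_{Z_{t+1}^{(\pi)}}(x')$. The pointwise bound $F_{\hat R^{(k)}(s,a)}\le F_{R(s,a)}$ on $\mathcal{E}$ then yields the inequality directly, and a second integration by parts returns to the original form with $F_R$ in place of $F_{\hat R^{(k)}}$. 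That is the missing device; pursue it instead of a strengthened hypothesis. Your worry that $r$ also enters through the conditioning history is well placed and corresponds precisely to the point where the paper's displayed computation is terse (the subscript $y+r$ persists after substituting $x'=x-r$), so while the mechanism is integration by parts, making that substitution fully rigorous when $\pi$ depends on the accumulated reward is exactly where the care you anticipate must go.
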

With these two key claims, the proof of Theorem~\ref{thm:main} follows by a standard upper confidence bound argument; we give the proof in Appendix~\ref{subsec:thm:main:proof}.

\section{Experiments}\label{appendix:experiment}

\begin{figure}[t]
\centering
\includegraphics[scale=0.4]{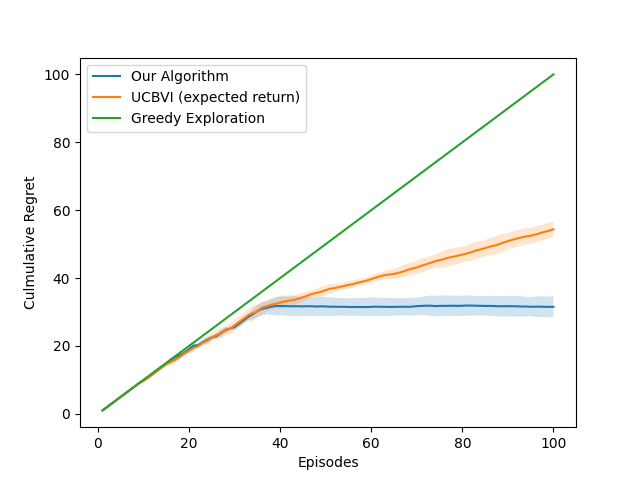}\ \ \  \includegraphics[scale=0.4]{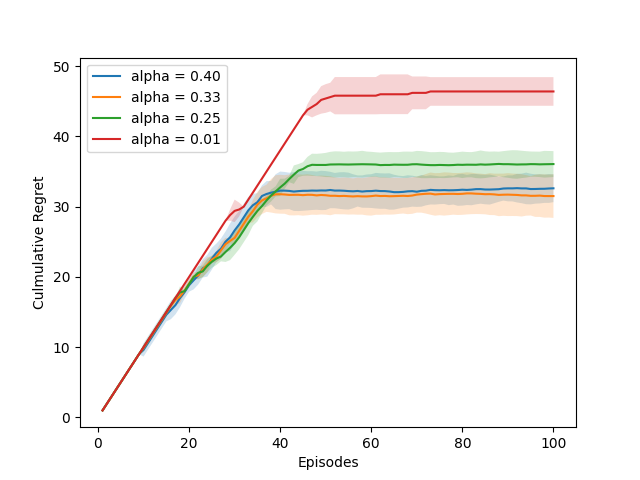}
\caption{Results on the frozen lake environment. Left: Regret of our algorithm vs. UCBVI (with expected return) and a greedy exploration strategy. Right: Regret of our algorithm across different $\alpha$ values. We show mean and standard deviation across five random seeds.}
\label{fig:frozenlake}
\end{figure}

We consider a classic frozen lake problem with a finite horizon. The agent moves to a block next to its current state at each timestep $t$ and has a slipping probability of 0.1 in its moving direction if the next state is an ice block. The objective is to maximize the cumulative reward without falling into holes. The agent needs to choose among paths which correspond to different levels of risk and rewards. In other words, the agent should account for the tradeoff between the cumulative reward and risk of slipping into holes. We use a map with four paths of the same lengths that have different rewards at the end and different levels of risk of falling into holes. We consider $\alpha\in\{0.40, 0.33, 0.25, 0.01\}$, which correspond to optimal policies of choosing paths with best possible returns of $\{6, 4, 2, 1\}$ and success probabilities of $\{0.729, 0.81, 0.9, 1\}$, respectively (failure corresponds to zero return).

Figure~\ref{fig:frozenlake} (left) shows the comparison in cumulative regret between our algorithm, UCBVI (which maximizes expected returns, not our risk-sensitive objective), and the an algorithm that optimizes our risk-sensitive objective but explores in a greedy way (i.e., use the best policy for the current estimated MDP without any optimism), for $\alpha=0.33$. The regret is measured in terms of the CVaR objective with respect to the optimal policy for the same CVaR objective. While UCBVI outperforms greedy, neither of them converge; in contrast, our algorithm converges within 40 episodes. 

Figure \ref{fig:frozenlake} (right) compares the regret between our algorithm under different values of $\alpha$ using the CVaR objective.
Note that smaller values of $\alpha$ tend to lead our algorithm to converge more slowly; this result matches our theory since smaller $\alpha$ corresponds to larger $L_G$. Intuitively, more samples are needed to get a good estimate of the objective as $\alpha$ becomes small since the CVaR objective is the average return over a tiny fraction of samples, causing high variance in our estimate of the objective.

\section{Conclusion}

We have proposed a novel regret bound for risk sensitive reinforcement learning that applies to a broad class of objective functions, including the popular conditional value-at-risk (CVaR) objective. Our results recover the usual $\sqrt{K}$ dependence on the number of episodes, and also highlights dependence on the Lipschitz constant $L_G$ of the integral of the weighting function $G$ used to define the objective. Future work includes extending these ideas to the setting of function approximation and understanding whether alternative exploration strategies such as Thompson sampling are applicable.

\acksection
This work is funded in part by NSF Award CCF-1910769, NSF Award CCF-1917852, and ARO Award W911NF-20-1-0080. The U.S. Government is authorized to reproduce and distribute reprints for Government purposes notwithstanding any copyright notation herein.

\bibliographystyle{unsrt}

\clearpage
\appendix
\section{Proof of Theorem~\ref{thm:augmented}}
\label{sec:thm:augmented:proof}

In this section, we prove Theorem~\ref{thm:augmented}, which says that it suffices to the augmented state space $(y,s)$ rather than the whole history $\xi$. First, we have the following lemma.
\begin{lemma}
\label{lem:augmented:1}
For any $y\in\mathbb{R}$ and $s\in\mathcal{S}$, we have
\begin{align*}
\mathbb{E}_{\Xi_t^{(\pi)}}\left[\pi_t(a\mid\Xi_t^{(\pi)})\cdot\mathbbm{1}\left(\Xi_t^{(\pi)}\in\mathcal{Z}_t(y,s)\right)\right]
=\mathbb{E}_{\Xi_t^{(\pi)}}\left[\tilde\pi_t(a\mid\Xi_t^{(\pi)})\cdot\mathbbm{1}\left(\Xi_t^{(\pi)}\in\mathcal{Z}_t(y,s)\right)\right].
\end{align*}
\end{lemma}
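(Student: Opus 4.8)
The plan is to recognize the averaged policy $\tilde\pi$ as a conditional expectation and then invoke the defining (``tower'') property of conditional expectation. Write $J_t=J(\Xi_t^{(\pi)})$ and $S_t=S(\Xi_t^{(\pi)})$ for the cumulative reward and current state of the random history $\Xi_t^{(\pi)}$, and let $\mathcal{G}=\sigma(J_t,S_t)$ be the $\sigma$-algebra they generate. The first observation is that the indicator appearing on both sides is $\mathcal{G}$-measurable, since
$$\mathbbm{1}\left(\Xi_t^{(\pi)}\in\mathcal{Z}_t(y,s)\right)=\mathbbm{1}(J_t\le y)\cdot\mathbbm{1}(S_t=s),$$
which depends on $\Xi_t^{(\pi)}$ only through the pair $(J_t,S_t)$. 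This is the feature of $\mathcal{Z}_t(y,s)$ that the whole argument exploits.

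The second, and central, observation is that $\tilde\pi_t(a\mid\Xi_t^{(\pi)})$ is a version of the conditional expectation $\mathbb{E}[\pi_t(a\mid\Xi_t^{(\pi)})\mid\mathcal{G}]$. Indeed, as the paper already notes, $\tilde\pi_t(a\mid\xi_t)$ is a function of $\xi_t$ only through $(y_t,s_t)$, and its value on the fiber $\{J_t=y',\,S_t=s'\}$ is the average of $\pi_t(a\mid\cdot)$ over that fiber under the law of $\Xi_t^{(\pi)}$. Making this identification clean is the step requiring care: $J_t$ is a continuous quantity, so the fiber $\{J_t=y'\}$ generally has probability zero and cannot be conditioned on naively. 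I would make it precise through a disintegration (regular conditional distribution) of the law of $\Xi_t^{(\pi)}$ with respect to $\mathcal{G}$, writing that law as a mixture $\int\mu_{y',s'}\,d\lambda(y',s')$ of fiber measures $\mu_{y',s'}$ against the law $\lambda$ of $(J_t,S_t)$, and reading off $\tilde\pi_t(a\mid y',s')=\mathbb{E}_{\mu_{y',s'}}[\pi_t(a\mid\cdot)]$.

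Given these two observations, the lemma is immediate: for any $\mathcal{G}$-measurable $W$ and integrable $X$ we have $\mathbb{E}[\mathbb{E}[X\mid\mathcal{G}]\cdot W]=\mathbb{E}[X\cdot W]$, so taking $X=\pi_t(a\mid\Xi_t^{(\pi)})$ and $W=\mathbbm{1}(\Xi_t^{(\pi)}\in\mathcal{Z}_t(y,s))$ yields exactly the claimed equality. Equivalently, and avoiding abstract conditional expectations, one integrates the disintegration directly: both sides equal $\int_{(-\infty,y]\times\{s\}}\mathbb{E}_{\mu_{y',s'}}[\pi_t(a\mid\cdot)]\,d\lambda(y',s')$ — the right-hand side because $\mathbb{E}_{\mu_{y',s'}}[\pi_t(a\mid\cdot)]$ is by construction the fiber average of $\pi_t(a\mid\cdot)$, and the left-hand side by the definition of $\tilde\pi$ together with the $\mathcal{G}$-measurability of the indicator established above.

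I expect the main obstacle to be precisely the identification in the second step: justifying the fiber-averaging interpretation of $\tilde\pi$ rigorously despite $J_t$ being continuous, which is what forces the use of a disintegration rather than pointwise conditioning on a null event. Once that identification is in hand, the conclusion is a one-line application of the tower property, and the remaining bookkeeping — checking that $\tilde\pi$ is a bona fide policy (nonnegative and normalized over $a$), which can be verified fiber-by-fiber — is routine.
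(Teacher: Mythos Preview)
Your approach is essentially the paper's: both treat $\tilde\pi_t(a\mid\Xi_t^{(\pi)})$ as the conditional expectation of $\pi_t(a\mid\Xi_t^{(\pi)})$ given $(J_t,S_t)$ and then invoke the tower property against the $\sigma(J_t,S_t)$-measurable indicator. The paper does this concretely---it unfolds the nested expectation, observes that on the outer conditioning event the inner conditioning set coincides with it, and pulls the now-constant inner expectation out---while you phrase it abstractly and are more careful about the null-fiber issue via disintegration.

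One point deserves attention, though. You read $\tilde\pi_t(a\mid y',s')$ as the average of $\pi_t(a\mid\cdot)$ over the \emph{fiber} $\{J_t=y',\,S_t=s'\}$, but the paper literally conditions on $\mathcal{Z}_t(y',s')=\{J_t\le y',\,S_t=s'\}$, a half-line in $y'$ rather than a fiber. Under that literal reading your identification $\tilde\pi=\mathbb{E}[\pi\mid\mathcal{G}]$ fails, and in fact the lemma itself is false: take $J_t\in\{1,2\}$ equiprobable, $S_t\equiv s$, $\pi_t(a\mid J_t{=}1)=0$, $\pi_t(a\mid J_t{=}2)=1$; at $y=2$ the left side is $\tfrac12$ while the right side is $\tfrac14$. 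The paper's own proof silently makes the same fiber reading---its replacement of $\mathcal{Z}_t(J(\Xi_t^{(\pi)}),s_t)$ by $\mathcal{Z}_t(y,s)$ only goes through if membership in $\mathcal{Z}_t(y,s)$ forces $J_t=y$---so your interpretation matches the intended one. Just be aware that the obstacle you anticipated (rigorous null-fiber conditioning) is accompanied by a definitional inconsistency in the paper that you are implicitly correcting.
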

\begin{proof}
Note that
\begin{align*}
&\mathbb{E}_{\Xi_t^{(\pi)}}\left[\tilde\pi_t(a\mid\Xi_t^{(\pi)})\Bigm\vert\Xi_t^{(\pi)}\in\mathcal{Z}_t(y,s)\right] \\
&=\mathbb{E}_{\Xi_t^{(\pi)}}\left[\mathbb{E}_{\tilde\Xi_t^{(\pi)}}\left[\pi_t(a_t\mid\tilde\Xi_t^{(\pi)})\Bigm\vert\tilde\Xi_t^{(\pi)}\in\mathcal{Z}_t(J(\Xi_t^{(\pi)}),s_t)\right]\Bigm\vert\Xi_t^{(\pi)}\in\mathcal{Z}_t(y,s)\right] \\
&=\mathbb{E}_{\Xi_t^{(\pi)}}\left[\mathbb{E}_{\tilde\Xi_t^{(\pi)}}\left[\pi_t(a_t\mid\tilde\Xi_t^{(\pi)})\Bigm\vert\tilde\Xi_t^{(\pi)}\in\mathcal{Z}_t(y,s)\right]\Bigm\vert\Xi_t^{(\pi)}\in\mathcal{Z}_t(y,s)\right] \\
&=\mathbb{E}_{\tilde\Xi_t^{(\pi)}}\left[\pi_t(a_t\mid\tilde\Xi_t^{(\pi)})\Bigm\vert\Xi_t^{(\pi)}\in\mathcal{Z}_t(y,s)\right].
\end{align*}
The claim follows by replacing $\tilde\Xi_t^{(\pi)}$ with $\Xi_t^{(\pi)}$ and multiplying by $\mathbb{P}_{\Xi_t^{(\pi)}}[\Xi_t^{(\pi)}\in\mathcal{Z}_t(y,s)]$. 
\end{proof}
Next, let
\begin{align*}
D_t^{(\pi)}(y,s)=\mathbb{P}_{\Xi_t^{(\pi)}}\left[J(\Xi_t^{(\pi)})\le y\wedge S(\Xi_t^{(\pi)})=s\right]
\end{align*}
be the probability of a history achieving current cumulative return at most $y$ and ending in state $s$.
\begin{lemma}
\label{lem:augmented:2}
We have $D_t^{(\pi)}=D_t^{(\tilde\pi)}$.
\end{lemma}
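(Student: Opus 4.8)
The plan is to prove $D_t^{(\pi)}=D_t^{(\tilde\pi)}$ by induction on $t$. For the base case $t=1$, a length-one history is just $\xi_1=(s_1)$ with $s_1\sim D$ and $J(\xi_1)=0$, so $D_1^{(\pi)}(y,s)=\mathbbm{1}(y\ge0)\cdot D(s)$ is independent of the policy, giving $D_1^{(\pi)}=D_1^{(\tilde\pi)}$ immediately.

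For the inductive step, assume $D_t^{(\pi)}=D_t^{(\tilde\pi)}$. First I would unfold one transition of the distributional dynamics: writing $J(\Xi_{t+1}^{(\pi)})=J(\Xi_t^{(\pi)})+r_t$ and $S(\Xi_{t+1}^{(\pi)})=s_{t+1}$, and integrating out $a_t\sim\pi_t$, $r_t\sim\mathbb{P}_R$, and $s_{t+1}\sim P$, one obtains
\begin{align*}
D_{t+1}^{(\pi)}(y',s')=\mathbb{E}_{\Xi_t^{(\pi)}}\left[\sum_{a\in\mathcal{A}}\pi_t(a\mid\Xi_t^{(\pi)})\cdot g_a\big(J(\Xi_t^{(\pi)}),S(\Xi_t^{(\pi)})\big)\right],
\end{align*}
where $g_a(y,s)=P(s'\mid s,a)\cdot F_{R(s,a)}(y'-y)$ is a bounded function of the pair $(y,s)$ only; this uses $\int\mathbbm{1}(y+r\le y')\,d\mathbb{P}_{R(s,a)}(r)=F_{R(s,a)}(y'-y)$. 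The same identity holds verbatim for $\tilde\pi$.

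The crux is to show that, for each fixed $a$, one may swap $\pi_t$ for $\tilde\pi_t$ inside this expectation:
\begin{align*}
\mathbb{E}_{\Xi_t^{(\pi)}}\!\left[\pi_t(a\mid\Xi_t^{(\pi)})\,g_a\big(J(\Xi_t^{(\pi)}),S(\Xi_t^{(\pi)})\big)\right]=\mathbb{E}_{\Xi_t^{(\pi)}}\!\left[\tilde\pi_t(a\mid\Xi_t^{(\pi)})\,g_a\big(J(\Xi_t^{(\pi)}),S(\Xi_t^{(\pi)})\big)\right].
\end{align*}
Lemma~\ref{lem:augmented:1} is exactly this statement for the special choice $g_a=\mathbbm{1}(\Xi_t\in\mathcal{Z}_t(y,s))$, i.e.\ for indicators of the sub-level sets $\{J\le y\wedge S=s\}$. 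To upgrade it to a general bounded measurable $g_a$, I would view both sides as finite measures on $\mathbb{R}\times\mathcal{S}$, namely the pushforwards of $(J(\Xi_t^{(\pi)}),S(\Xi_t^{(\pi)}))$ weighted by $\pi_t(a\mid\cdot)$ and by $\tilde\pi_t(a\mid\cdot)$ respectively. Lemma~\ref{lem:augmented:1} says these two measures agree on the $\pi$-system of sets $(-\infty,y]\times\{s\}$, which generates the Borel $\sigma$-algebra and exhausts $\mathbb{R}\times\mathcal{S}$; by the uniqueness theorem for finite measures agreeing on a generating $\pi$-system, they agree on all Borel sets, so the integrals of any bounded measurable function, in particular $g_a$, coincide.

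Finally, since $\tilde\pi_t(a\mid\xi_t)$ depends on $\xi_t$ only through $(J(\xi_t),S(\xi_t))$ and $g_a$ does as well, the integrand $\xi_t\mapsto\tilde\pi_t(a\mid\xi_t)\,g_a(J(\xi_t),S(\xi_t))$ is a function of $(J(\xi_t),S(\xi_t))$ alone; by the inductive hypothesis $D_t^{(\pi)}=D_t^{(\tilde\pi)}$ these have the same joint law under $\pi$ and $\tilde\pi$, so the expectation is unchanged when $\Xi_t^{(\pi)}$ is replaced by $\Xi_t^{(\tilde\pi)}$. Summing over $a$ then yields $D_{t+1}^{(\pi)}(y',s')=D_{t+1}^{(\tilde\pi)}(y',s')$, completing the induction. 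I expect the main obstacle to be precisely this measure-theoretic upgrade of Lemma~\ref{lem:augmented:1} from sub-level indicators to arbitrary bounded $g_a$, along with the bookkeeping needed to ensure the continuous cumulative-reward coordinate $y$ is handled through its CDF rather than pointwise.
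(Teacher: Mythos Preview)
Your proposal is correct, but the route differs from the paper's in one notable way. You integrate out the reward $r$ first, so the test function $g_a(y,s)=P(s'\mid s,a)\,F_{R(s,a)}(y'-y)$ is a genuine (non-indicator) function of $y$, and you then need the $\pi$-system/uniqueness-of-measures argument to upgrade Lemma~\ref{lem:augmented:1} from sub-level indicators to arbitrary bounded measurable $g_a$. The paper instead keeps the $r$-integral on the outside: for each fixed $r$ the inner $\xi$-expectation is $\mathbb{E}_{\Xi_t^{(\pi)}}[\pi_t(a\mid\Xi_t^{(\pi)})\,\mathbbm{1}(\Xi_t^{(\pi)}\in\mathcal{Z}_t(y-r,s))]$, which is exactly the form in Lemma~\ref{lem:augmented:1} and, by the very definition of $\tilde\pi$, equals $\tilde\pi_t(a\mid y-r,s)\cdot D_t^{(\pi)}(y-r,s)$. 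This sidesteps the upgrade entirely and keeps the argument elementary. Your approach buys a cleaner, more modular statement---once the upgrade is in hand, the swap $\pi\to\tilde\pi$ holds against any bounded measurable function of $(J,S)$---at the cost of the extra measure theory you correctly flagged as the main obstacle; the paper's order of integration buys simplicity at the cost of carrying the $r$-integral through a longer chain of rewritings.
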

\begin{proof}
We prove by induction. The base case $t=1$ follows trivially. For the inductive case, note that
\begin{align*}
D_{t+1}^{(\pi)}(y,s'')
&=\int\mathbbm{1}(\xi'\in\mathcal{Z}_{t+1}(y,s''))\cdot d\mathbb{P}_{\Xi_{t+1}^{(\pi)}}(\xi') \\
&=\int\sum_{a\in\mathcal{A}}\sum_{s'\in\mathcal{S}}\mathbbm{1}(\xi\circ(a,r,s')\in\mathcal{Z}_{t+1}(y,s''))\cdot 
P(s'\mid S(\xi),a)\cdot
d\mathbb{P}_{R(s,a)}(r)\cdot
\pi(a\mid\xi)\cdot
d\mathbb{P}_{\Xi_t^{(\pi)}}(\xi) \\
&=\int\sum_{a\in\mathcal{A}}\sum_{s'\in\mathcal{S}}\mathbbm{1}(J(\xi)+r\le y)\cdot\mathbbm{1}(s'=s'')\cdot 
P(s'\mid S(\xi),a)\cdot
d\mathbb{P}_{R(s,a)}(r)\cdot
\pi(a\mid\xi)\cdot
d\mathbb{P}_{\Xi_t^{(\pi)}}(\xi) \\
&=\int\sum_{a\in\mathcal{A}}\mathbbm{1}(J(\xi)+r\le y)\cdot
P(s''\mid S(\xi),a)\cdot
d\mathbb{P}_{R(s,a)}(r)\cdot
\pi(a\mid\xi)\cdot
d\mathbb{P}_{\Xi_t^{(\pi)}}(\xi),
\end{align*}
where the first line follows by definition of $D_{t+1}^{(\pi)}$, the second by the inductive formula for $\mathbb{P}_{\Xi_{t+1}^{(\pi)}}$, the third since $J(\xi\circ(a,r,s'))=J(\xi)+r$, and the fourth by summing over $s'$. Continuing, we have
\begin{align*}
D_{t+1}^{(\pi)}(y,s'')
&=\sum_{s\in\mathcal{S}}\sum_{a\in\mathcal{A}}\int
\pi(a\mid\xi)\cdot
\mathbbm{1}(J(\xi)+r\le y)\cdot
\mathbbm{1}(S(\xi)=s)\cdot
d\mathbb{P}_{\Xi_t^{(\pi)}}(\xi)\cdot
P(s''\mid s,a)\cdot
d\mathbb{P}_{R(s,a)}(r) \\
&=\sum_{s\in\mathcal{S}}\sum_{a\in\mathcal{A}}\int
\pi(a\mid\xi)\cdot
\mathbbm{1}(\xi\in\mathcal{Z}_t(y-r,s))\cdot
d\mathbb{P}_{\Xi_t^{(\pi)}}(\xi)\cdot
P(s''\mid s,a)\cdot
d\mathbb{P}_{R(s,a)}(r) \\
&=\sum_{s\in\mathcal{S}}\sum_{a\in\mathcal{A}}\int
\tilde\pi(a\mid\xi)\cdot
\mathbbm{1}(\xi\in\mathcal{Z}_t(y-r,s))\cdot
d\mathbb{P}_{\Xi_t^{(\pi)}}(\xi)\cdot
P(s''\mid s,a)\cdot
d\mathbb{P}_{R(s,a)}(r),
\end{align*}
where the first line follows by introducing $\mathbbm{1}(S(\xi)=s)$ and rearranging, the second by definition of $\mathcal{Z}_t$, and the third by Lemma~\ref{lem:augmented:1}. Continuing, we have
\begin{align*}
D_{t+1}^{(\pi)}(y,s'')
&=\sum_{s\in\mathcal{S}}\sum_{a\in\mathcal{A}}\int
\mathbbm{1}(\xi\in\mathcal{Z}_t(y-r,s))\cdot
d\mathbb{P}_{\Xi_t^{(\pi)}}(\xi)\cdot
\tilde\pi(a\mid y-r,s)\cdot
P(s''\mid s,a)\cdot
d\mathbb{P}_{R(s,a)}(r) \\
&=\sum_{s\in\mathcal{S}}\sum_{a\in\mathcal{A}}\int
D_t^{(\pi)}(y-r,s)\cdot
\tilde\pi(a\mid y-r,s)\cdot
P(s''\mid s,a)\cdot
d\mathbb{P}_{R(s,a)}(r) \\
&=\sum_{s\in\mathcal{S}}\sum_{a\in\mathcal{A}}\int
D_t^{(\tilde\pi)}(y-r,s)\cdot
\tilde\pi(a\mid y-r,s)\cdot
P(s''\mid s,a)\cdot
d\mathbb{P}_{R(s,a)}(r) \\
&=\sum_{s\in\mathcal{S}}\sum_{a\in\mathcal{A}}\int
\mathbbm{1}(\xi\in\mathcal{Z}_t(y-r,s))\cdot
d\mathbb{P}_{\Xi_t^{(\tilde\pi)}}(\xi)\cdot
\tilde\pi(a\mid y-r,s)\cdot
P(s''\mid s,a)\cdot
d\mathbb{P}_{R(s,a)}(r),
\end{align*}
where the first line follows since $\tilde\pi$ is independent of $\xi$ and by rearranging, the second by definition of $D_t^{(\pi)}$, the third by induction, and the fourth by definition of $D_t^{(\tilde\pi)}$. Continuing, we have
\begin{align*}
D_{t+1}^{(\pi)}(y,s'')
&=\sum_{s\in\mathcal{S}}\sum_{a\in\mathcal{A}}\int
\tilde\pi(a\mid\xi)\cdot
\mathbbm{1}(\xi\in\mathcal{Z}_t(y-r,s))\cdot
d\mathbb{P}_{\Xi_t^{(\tilde\pi)}}(\xi)\cdot
P(s''\mid s,a)\cdot
d\mathbb{P}_{R(s,a)}(r) \\
&=\sum_{s\in\mathcal{S}}\sum_{a\in\mathcal{A}}\int
\tilde\pi(a\mid\xi)\cdot
\mathbbm{1}(J(\xi)+r\le y)
\cdot\mathbbm{1}(S(\xi)=s)\cdot
d\mathbb{P}_{\Xi_t^{(\tilde\pi)}}(\xi)\cdot
P(s''\mid s,a)\cdot
d\mathbb{P}_{R(s,a)}(r) \\
&=\sum_{a\in\mathcal{A}}\int
\mathbbm{1}(J(\xi)+r\le y)\cdot
P(s''\mid S(\xi),a)\cdot
d\mathbb{P}_{R(s,a)}(r)\cdot
\tilde\pi(a\mid\xi)\cdot
d\mathbb{P}_{\Xi_t^{(\tilde\pi)}}(\xi) \\
&=\sum_{a\in\mathcal{A}}
\sum_{s'\in\mathcal{S}}\int
\mathbbm{1}(J(\xi)+r\le y)\cdot
\mathbbm{1}(s'=s'')\cdot
P(s'\mid S(\xi),a)\cdot
d\mathbb{P}_{R(s,a)}(r)\cdot
\tilde\pi(a\mid\xi)\cdot
d\mathbb{P}_{\Xi_t^{(\tilde\pi)}}(\xi) \\
&=\sum_{a\in\mathcal{A}}
\sum_{s'\in\mathcal{S}}\int
\mathbbm{1}(\xi\circ(a,r,s')\in\mathcal{Z}_{t+1}(y,s''))\cdot
P(s'\mid S(\xi),a)\cdot
d\mathbb{P}_{R(s,a)}(r)\cdot
\tilde\pi(a\mid\xi)\cdot
d\mathbb{P}_{\Xi_t^{(\tilde\pi)}}(\xi) \\
&=\int
\mathbbm{1}(\xi'\in\mathcal{Z}_{t+1}(y,s''))\cdot
d\mathbb{P}_{\Xi_{t+1}^{(\pi)}}(\xi') \\
&=D_{t+1}^{(\tilde\pi)}(y,s''),
\end{align*}
where the first line follows by definition of $\tilde\pi$, the second by definition of $\mathcal{Z}_t$, the third by summing over $s$ and rearranging, the fourth by introducing $\mathbbm{1}(s'=s'')$, the fifth by definition of $\mathcal{Z}_{t+1}$, the sixth by the inductive formula for $\mathbb{P}_{\Xi_{t+1}^{(\pi)}}$, and the seventh by the definition of $D_{t+1}^{(\tilde\pi)}$. The claim follows.
\end{proof}
Now, we prove Theorem~\ref{thm:augmented}. By Lemma~\ref{lem:augmented:2}, we have
\begin{align*}
F_{Z^{(\pi)}}(x)
&=\int\mathbbm{1}(J(\xi)\le x)\cdot d\mathbb{P}_{\Xi_T^{(\pi)}}(\xi) \\
&=\sum_{s\in\mathcal{S}}\int\mathbbm{1}(J(\xi)\le x)\cdot\mathbbm{1}(S(\xi)=s)\cdot d\mathbb{P}_{\Xi_T^{(\pi)}}(\xi) \\
&=\sum_{s\in\mathcal{S}}\int\mathbbm{1}(\xi\in\mathcal{Z}_T(x,s))\cdot d\mathbb{P}_{\Xi_T^{(\pi)}}(\xi) \\
&=\sum_{s\in\mathcal{S}}\int\mathbbm{1}(\xi\in\mathcal{Z}_T(x,s))\cdot d\mathbb{P}_{\Xi_T^{(\tilde\pi)}}(\xi) \\
&=\int\mathbbm{1}(J(\xi)\le x)\cdot d\mathbb{P}_{\Xi_T^{(\tilde\pi)}}(\xi) \\
&=F_{Z^{(\tilde\pi)}}(x).
\end{align*}
Theorem~\ref{thm:augmented} follows straightforwardly from this result. $\qed$

\section{Proof of Theorem~\ref{thm:discretized}}
\label{sec:thm:discretized:proof}

We construct a sequence of MDPs $\mathcal{M}_0,\mathcal{M}_1,...,\mathcal{M}_T$, such that $\mathcal{M}_0=\tilde{\mathcal{M}}$ and $\mathcal{M}_T=\hat{\mathcal{M}}$, and where we can bound the incremental errors 
\begin{align*}
\Phi_{\tilde{\mathcal{M}}}(\pi^*_{\mathcal{M}_\tau})-\Phi_{\hat{M}}(\pi^*_{\mathcal{M}_{\tau-1}}),
\end{align*}
noting a policy for one of the MDPs can be used in all the other MDPs. For each $\tau\in[T]$, the MDP $\mathcal{M}_\tau$ discretizes the reward assigned on the $t$th step of $\mathcal{M}_{\tau-1}$---more precisely, it discretizes the transitions since the rewards are only assigned on the last step based on the cumulative reward recorded in the second component of the state space. Formally, $\mathcal{M}_\tau$ is identical to $\tilde{\mathcal{M}}$, except it uses the (time-varying) transition probability measure $\hat{P}^{(\tau)}$ defined by
\begin{align*}
\hat{P}_t^{(\tau)}((s',y')\mid(s,y),a)=\begin{cases}
P(s'\mid s,a)\cdot(\mathbb{P}_{R(s,a)}\circ\phi^{-1})(y'-y)&\text{if }t\le\tau \\
P(s'\mid s,a)\cdot\mathbb{P}_{R(s,a)}(y'-y)&\text{otherwise}.
\end{cases}
\end{align*}
Then, $\mathcal{M}_\tau$ is identical to $\mathcal{M}_{\tau-1}$ except $\mathbb{P}_{R(s,a)}$ is replaced with $\mathbb{P}_{R(s,a)}\circ\phi^{-1}$ on step $\tau$. We prove three lemmas showing a lower bound on the value of a policy $\pi$ for $\mathcal{M}_\tau$ when adapted to $\mathcal{M}_{\tau-1}$.
\begin{lemma}
\label{lem:pihatcdfbound}
Given $\tau\in[T]$, let $\mathcal{M}=\mathcal{M}_{\tau-1}$ and $\hat{\mathcal{M}}=\mathcal{M}_\tau$ (so compared to $\mathcal{M}$, $\hat{\mathcal{M}}$ replaces $\mathbb{P}_{R(s,a)}$ with $\mathbb{P}_{R(s,a)}\circ\phi^{-1}$ on step $\tau$ in its transitions). Given any policy $\hat\pi$ for $\hat{\mathcal{M}}$, define the policy
\begin{align*}
\pi_t(a\mid s,y,\alpha)=
\hat\pi_t(a\mid s,y+\alpha)
\end{align*}
for $\mathcal{M}$, where we initialize the (extra) policy internal state $\alpha_1=0$, and we update $\alpha_{\tau+1}=\phi(r_\tau)-r_\tau$ on step $\tau$ and $\alpha_{t+1}=\alpha_t$ otherwise. Then, for all $x,y,\alpha\in\mathbb{R}$, for $t>\tau$, we have
\begin{align*}
F_{Z_t^{(\pi)}(s,y,\alpha)}(x)=F_{\hat{Z}_t^{(\hat\pi)}(s,y+\alpha)}(x),
\end{align*}
and for $t\le\tau$, we have
\begin{align*}
F_{Z_t^{(\pi)}(s,y,0)}(x)
\le F_{\hat{Z}_t^{(\hat\pi)}(s,y)}(x+\eta),
\end{align*}
where $Z_t^{(\pi)}$ (resp., $\hat{Z}_t^{(\hat\pi)}$) is the return of $\mathcal{M}$ (resp., $\hat{\mathcal{M}}$) from step $t$ for policy $\pi$ (resp., $\hat\pi$).
\end{lemma}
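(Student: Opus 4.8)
The plan is to prove both claims simultaneously by backward induction on $t$, running from $t=T$ down to $t=1$ and using the distributional Bellman equation for the augmented MDPs $\mathcal{M}=\mathcal{M}_{\tau-1}$ and $\hat{\mathcal{M}}=\mathcal{M}_\tau$. The governing idea is a coupling: once the internal state has been set to $\alpha$, running $\pi$ from $(s,y)$ in $\mathcal{M}$ should reproduce the trajectory of $\hat\pi$ from $(s,y+\alpha)$ in $\hat{\mathcal{M}}$, since the policy is defined precisely to shift its cumulative-reward argument by $\alpha$, and since $\mathcal{M}$ and $\hat{\mathcal{M}}$ have identical dynamics everywhere except on step $\tau$.

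First I would establish the equality regime $t>\tau$. For these steps both MDPs use the original reward measure $\mathbb{P}_{R(s,a)}$, and the internal state is frozen at its post-$\tau$ value $\alpha$. I set up the bijection $(s,y)\mapsto(s,y+\alpha)$ between states of $\mathcal{M}$ and $\hat{\mathcal{M}}$; under it the transition kernels agree, and by the definition $\pi_t(a\mid s,y,\alpha)=\hat\pi_t(a\mid s,y+\alpha)$ the action distributions agree as well. Substituting this into the distributional Bellman recursion and invoking the inductive hypothesis at step $t+1$ gives $F_{Z_t^{(\pi)}(s,y,\alpha)}=F_{\hat{Z}_t^{(\hat\pi)}(s,y+\alpha)}$, with the base case at the terminal step being immediate.

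Next comes the critical step $t=\tau$, where the $\eta$-shift is born. The internal state is still $0$, so $\pi_\tau(\cdot\mid s,y,0)=\hat\pi_\tau(\cdot\mid s,y)$ and both policies choose the same action $a$; the only difference is that $\mathcal{M}$ draws reward $r$ while $\hat{\mathcal{M}}$ uses $\phi(r)=\eta\lceil r/\eta\rceil$, after which $\pi$ records $\alpha=\phi(r)-r\in[0,\eta)$. I condition on the action $a$, the drawn $r$, and the next state $s'$. After step $\tau$, $\mathcal{M}$ sits at $(s',y+r)$ with internal state $\alpha$, whose $\pi$-continuation—by the $t>\tau$ equality just proved—has the same law as the $\hat{\mathcal{M}}$-continuation from $(s',y+r+\alpha)=(s',y+\phi(r))$, which is exactly the state $\hat{\mathcal{M}}$ reaches. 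Hence, conditioned on $(a,r,s')$, the two returns share a common continuation and differ only by the step-$\tau$ increment, $\hat{Z}_\tau=Z_\tau+(\phi(r)-r)$ with $0\le\phi(r)-r\le\eta$. Integrating this realization-wise relation over $r$ and $s'$ against the common reward and transition measures yields $F_{Z_\tau^{(\pi)}(s,y,0)}(x)\le F_{\hat{Z}_\tau^{(\hat\pi)}(s,y)}(x+\eta)$.

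Finally, for $t<\tau$ the two MDPs again coincide—both already discretize steps before $\tau$—and the internal state is still $0$, so the distributional Bellman operator is identical for the two returns and is monotone in the step-$(t+1)$ CDF. Propagating the inductive hypothesis $F_{Z_{t+1}}(x)\le F_{\hat{Z}_{t+1}}(x+\eta)$ through this common operator preserves the $\eta$-shift, completing the induction. I expect the main obstacle to be the step $t=\tau$: one must carefully justify that the pointwise, realization-wise gap $\phi(r)-r\le\eta$ converts into a uniform rightward shift of the CDF (getting the stochastic-dominance direction right), and one must correctly thread the now-nonzero internal state $\alpha$ into the $t>\tau$ equality that serves as the inductive input. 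The two surrounding regimes $t>\tau$ and $t<\tau$ are comparatively routine once the coupling and the common Bellman operator are set up.
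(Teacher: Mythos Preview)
Your proposal is correct and follows essentially the same backward-induction argument as the paper: equality for $t>\tau$ via the shared dynamics and the definition of $\pi$, the $\eta$-shift born at $t=\tau$ from the gap $\phi(r)-r\in[0,\eta]$ together with the $t>\tau$ equality, and monotone propagation for $t<\tau$ through the common Bellman operator. The only cosmetic difference is that you phrase the $t=\tau$ step as a realization-wise coupling $\hat{Z}_\tau=Z_\tau+(\phi(r)-r)$, whereas the paper writes out the Bellman recursion, uses monotonicity of the CDF in $x$, and then performs the change of variables $\rho=\phi(r)$; these are the same computation.
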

\begin{proof}
We prove by backwards induction on $t$. The base case $t=T$ follows by definition (and since the reward measure does not change from $\mathcal{M}$ to $\hat{\mathcal{M}}$). For $t>\tau$, we have
\begin{align*}
F_{Z_t^{(\pi)}(s,y,\alpha)}(x)
&=\sum_{a\in A}\sum_{s'\in S}\int\pi_t(a\mid s,y,\alpha)\cdot P(s'\mid s,a)\cdot F_{Z_{t+1}^{(\pi)}(s',y+r,\alpha)}(x-r)\cdot d\mathbb{P}_{R(s,a)}(r) \\
&=\sum_{a\in A}\sum_{s'\in S}\int\hat\pi_t(a\mid s,y+\alpha)\cdot P(s'\mid s,a)\cdot F_{\hat{Z}_{t+1}^{(\hat\pi)}(s',y+\alpha+r)}(x-r)\cdot d\mathbb{P}_{R(s,a)}(r) \\
&=F_{\hat{Z}_t^{(\hat\pi)}(s,y+\alpha)}(x),
\end{align*}
where the second line follows by induction and by the definition of $\pi$. Next, for $t=\tau$, we have
\begin{align*}
F_{Z_t^{(\pi)}(s,y,0)}(x)
&=\sum_{a\in A}\sum_{s'\in S}\int\pi_t(a\mid s,y,0)\cdot P(s'\mid s,a)\cdot F_{Z_{t+1}^{(\pi)}(s',y+r,\phi(r)-r)}(x-r)\cdot d\mathbb{P}_{R(s,a)}(r) \\
&=\sum_{a\in A}\sum_{s'\in S}\int\hat\pi_t(a\mid s,y)\cdot P(s'\mid s,a)\cdot F_{\hat{Z}_{t+1}^{(\hat\pi)}(s',y+\phi(r))}(x-r)\cdot d\mathbb{P}_{R(s,a)}(r) \\
&=\sum_{a\in A}\sum_{s'\in S}\int\hat\pi_t(a\mid s,y)\cdot P(s'\mid s,a)\cdot F_{\hat{Z}_{t+1}^{(\hat\pi)}(s',y+\phi(r))}(x-\phi(r)+\phi(r)-r)\cdot d\mathbb{P}_{R(s,a)}(r) \\
&\le\sum_{a\in A}\sum_{s'\in S}\int\hat\pi_t(a\mid s,y)\cdot P(s'\mid s,a)\cdot F_{\hat{Z}_{t+1}^{(\hat\pi)}(s',y+\phi(r))}(x-\phi(r)+\eta)\cdot d\mathbb{P}_{R(s,a)}(r) \\
&=\sum_{a\in A}\sum_{s'\in S}\int\hat\pi_t(a\mid s,y)\cdot P(s'\mid s,a)\cdot F_{\hat{Z}_{t+1}^{(\hat\pi)}(s',y+\rho)}(x-\rho+\eta)\cdot d\mathbb{P}_{R(s,a)}\circ\phi^{-1}(\rho) \\
&=F_{\hat{Z}_t^{(\hat\pi)}(s,y)}(x+\eta),
\end{align*}
where the first line uses the update from $\alpha_t=0$ to $\alpha_{t+1}=r-\phi(r)$ on this step, the second line follows by induction and by the definition of $\pi$, the fourth line follows by monotonicity of $F_{\hat{Z}_{t+1}^{(\hat\pi)}(s',y+r)}$, and the fifth line follows by a change of variables $\rho=\phi(r)$. For $t<\tau$, we have
\begin{align*}
F_{Z_t^{(\pi)}(s,y,0)}(x)
&=\sum_{a\in A}\sum_{s'\in S}\int\pi_t(a\mid s,y,0)\cdot P(s'\mid s,a)\cdot F_{Z_{t+1}^{(\pi)}(s',y+r,0)}(x-r)\cdot d\mathbb{P}_{R(s,a)}(r) \\
&=\sum_{a\in A}\sum_{s'\in S}\int\hat\pi_t(a\mid s,y)\cdot P(s'\mid s,a)\cdot F_{Z_{t+1}^{(\pi)}(s',y+r,0)}(x-r)\cdot d\mathbb{P}_{R(s,a)}(r) \\
&\le\sum_{a\in A}\sum_{s'\in S}\int\hat\pi_t(a\mid s,y)\cdot P(s'\mid s,a)\cdot F_{\hat{Z}_{t+1}^{(\hat\pi)}(s',y+r,0)}(x-r+\eta)\cdot d\mathbb{P}_{R(s,a)}(r) \\
&=F_{\hat{Z}_t^{(\hat\pi)}(s',0)}(x+\eta),
\end{align*}
where the second line follows by the definition of $\pi$, and the third line follows by induction. The claim follows.
\end{proof}
\begin{lemma}
\label{lem:quantileinverse}
For any monotonically increasing $F$, we have $F^\dagger(F(x))\le x$, and $F(F^\dagger(\tau))\ge\tau$.
\end{lemma}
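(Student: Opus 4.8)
The plan is to prove both inequalities directly from the definition $F^\dagger(\tau)=\inf\{x\in\mathbb{R}\mid F(x)\ge\tau\}$, using only monotonicity for the first inequality and right-continuity of the CDF for the second. Each part is essentially a membership-in-the-infimum-set argument, so there is no heavy computation to carry out.

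First I would establish $F^\dagger(F(x))\le x$. Unwinding the definition gives $F^\dagger(F(x))=\inf\{x'\in\mathbb{R}\mid F(x')\ge F(x)\}$. Since $x$ trivially satisfies $F(x)\ge F(x)$, the point $x$ belongs to the set whose infimum is being taken, and hence the infimum is bounded above by $x$. This step is immediate and uses nothing beyond the definition.

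Next I would establish $F(F^\dagger(\tau))\ge\tau$. Write $x^*=F^\dagger(\tau)=\inf\{x\mid F(x)\ge\tau\}$, and assume the defining set is nonempty (which holds for the values of $\tau$ of interest since $F(x)\to1$). By the definition of infimum, I would select a decreasing sequence $x_n\downarrow x^*$ with $F(x_n)\ge\tau$ for every $n$. Since $F$ is a CDF, it is right-continuous, so $F(x^*)=\lim_{n\to\infty}F(x_n)\ge\tau$, giving the claim.

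The main obstacle is the second inequality, because the infimum defining $F^\dagger(\tau)$ need not be attained: monotonicity alone does not force $F(x^*)\ge\tau$, since a left-jump of $F$ at $x^*$ could leave $F(x^*)$ below $\tau$. The crucial ingredient is right-continuity of $F$, which lets me pass to the limit along a sequence approaching $x^*$ from above. I would flag this reliance explicitly, since the lemma is stated for ``monotonically increasing $F$'' but is only ever applied to CDFs $F_{Z^{(\pi)}}$, which are right-continuous by convention.
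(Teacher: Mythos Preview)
Your argument is correct, and your explicit flagging of the right-continuity hypothesis for the second inequality is apt: as you note, the statement ``for any monotonically increasing $F$'' is slightly too broad, and the inequality $F(F^{\dagger}(\tau))\ge\tau$ can fail without right-continuity (e.g., $F(x)=\mathbbm{1}(x>0)$ at $\tau=1$). In the paper this lemma is applied only to CDFs, so the gap is harmless in context.

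The paper itself does not give a self-contained proof here: it simply cites Proposition~1 of Embrechts and Hofert, \emph{A note on generalized inverses}. That proposition records exactly the two facts you prove, under the standing assumption that $F$ is an increasing function that is right-continuous (their setting is precisely that of distribution functions / generalized inverses), and the proofs there are the same membership-in-the-infimum-set argument for the first inequality and the right-continuity limit argument for the second. So your proposal is not a different route so much as a faithful unpacking of the cited reference, with the added benefit that you make the right-continuity dependence explicit.
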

\begin{proof}
See Proposition~1 in~\cite{embrechts2013note}.
\end{proof}
\begin{lemma}
\label{lem:monotoneinversebound}
Let $F,G:\mathbb{R}\to\mathbb{R}$ be monotonically increasing. If $F(x)\le G(x+\eta)$ for all $x\in\mathbb{R}$, then we have $F^\dagger(\tau)\ge G^\dagger(\tau)-\eta$ for all $\tau\in\mathbb{R}$.
\end{lemma}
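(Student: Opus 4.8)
The plan is to work directly from the definition of the generalized inverse, $F^\dagger(\tau)=\inf\{x\in\mathbb{R}\mid F(x)\ge\tau\}$, and show that every element of the superlevel set defining $F^\dagger(\tau)$ is bounded below by $G^\dagger(\tau)-\eta$; taking the infimum then yields the claim. The whole argument is a single set-containment observation, so there is essentially no heavy computation.

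Concretely, first I would fix $\tau\in\mathbb{R}$ and consider an arbitrary $x$ with $F(x)\ge\tau$. Using the hypothesis $F(x)\le G(x+\eta)$, I would conclude $G(x+\eta)\ge F(x)\ge\tau$, so that $x+\eta$ lies in the set $\{x'\in\mathbb{R}\mid G(x')\ge\tau\}$. By the definition of $G^\dagger(\tau)$ as the infimum of this set, this gives $x+\eta\ge G^\dagger(\tau)$, i.e. $x\ge G^\dagger(\tau)-\eta$.

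Since the bound $x\ge G^\dagger(\tau)-\eta$ holds for every $x\in\{x\mid F(x)\ge\tau\}$, the quantity $G^\dagger(\tau)-\eta$ is a lower bound for this set, and therefore it is at most the infimum:
\begin{align*}
F^\dagger(\tau)=\inf\{x\in\mathbb{R}\mid F(x)\ge\tau\}\ge G^\dagger(\tau)-\eta.
\end{align*}
As $\tau$ was arbitrary, this establishes the lemma.

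The only point requiring a small remark rather than a genuine obstacle is the degenerate case where $\{x\mid F(x)\ge\tau\}$ is empty: there the infimum is $+\infty$ by convention and the inequality holds vacuously, so I would note this case is handled automatically (and likewise the monotonicity of $F$ and $G$ is used only implicitly through the infimum structure, matching the conventions of Lemma~\ref{lem:quantileinverse}). I expect no real difficulty here; the lemma is a clean order-theoretic consequence of the shifted pointwise inequality, and the $-\eta$ shift simply transfers through the infimum.
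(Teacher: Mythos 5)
Your proof is correct, but it takes a genuinely different route from the paper's. The paper argues through Lemma~\ref{lem:quantileinverse}: rewriting the hypothesis as $G(x)\ge F(x-\eta)$, substituting $x=F^\dagger(\tau)+\eta$ to get $G(F^\dagger(\tau)+\eta)\ge F(F^\dagger(\tau))\ge\tau$, and then applying $G^\dagger$ to both sides together with $G^\dagger(G(x))\le x$ and the monotonicity of $G^\dagger$. You instead work directly from the definition of the generalized inverse: every $x$ with $F(x)\ge\tau$ satisfies $G(x+\eta)\ge\tau$, so the $\eta$-shift of the superlevel set of $F$ is contained in the superlevel set of $G$, and the inequality passes to the infima. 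What your approach buys is generality and robustness: the paper's route relies on the inequality $F(F^\dagger(\tau))\ge\tau$, which for an arbitrary monotonically increasing $F$ (which is all the lemma hypothesizes) requires right-continuity --- e.g., $F(x)=\mathbbm{1}(x>0)$ gives $F(F^\dagger(1/2))=F(0)=0<1/2$ --- whereas your set-containment argument uses no regularity at all (indeed, not even monotonicity), and disposes of the empty-set case by the $\inf\emptyset=+\infty$ convention. In the paper's application $F$ and $G$ are CDFs of returns, hence right-continuous, so both proofs are valid in context; the paper's version simply reuses Lemma~\ref{lem:quantileinverse} as a black box, while yours is self-contained and proves a slightly stronger statement.
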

\begin{proof}
By assumption, $G(x)\ge F(x-\eta)$. Substituting $x=F^\dagger(\tau)+\eta$ into this formula, we obtain
\begin{align*}
G(F^\dagger(\tau)+\eta)\ge F(F^\dagger(\tau))\ge\tau,
\end{align*}
where the second inequality follows by Lemma~\ref{lem:quantileinverse}. Also by Lemma~\ref{lem:quantileinverse}, since $G$ is monotonically increasing, so is $G^\dagger$, so we can apply $G^\dagger$ to each side of the inequality to obtain
\begin{align*}
G^\dagger(\tau)\le G^\dagger(G(F^\dagger(\tau)+\eta))\le  F^\dagger(\tau)+\eta,
\end{align*}
where the second inequality follows by Lemma~\ref{lem:quantileinverse}. The claim follows.
\end{proof}
\begin{lemma}
\label{lem:pihatphibound}
Consider the same setup as in Lemma~\ref{lem:pihatcdfbound}. Let $\hat{\pi}$ be a policy for $\hat{\mathcal{M}}$, and let $\pi$ be the policy defined in Lemma~\ref{lem:pihatcdfbound} that adapts $\hat{\pi}$ to $\mathcal{M}$. Then, we have
\begin{align*}
\Phi(\pi)\ge\hat\Phi(\hat\pi)-\eta,
\end{align*}
where $\Phi$ is the objective for $\mathcal{M}$ and $\hat\Phi$ is the objective for $\hat{\mathcal{M}}$.
\end{lemma}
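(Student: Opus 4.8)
The plan is to reduce everything to the quantile functions of the cumulative returns and then integrate against $dG$, rather than routing the argument through the CDF-integral form of Lemma~\ref{lem:key}. The three auxiliary lemmas (Lemmas~\ref{lem:pihatcdfbound}, \ref{lem:quantileinverse}, and~\ref{lem:monotoneinversebound}) are precisely the ingredients tailored to this route, so I would chain them in order.

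First, I would specialize the $t\le\tau$ case of Lemma~\ref{lem:pihatcdfbound} to the initial step $t=1$, where the augmented state carries cumulative reward $y_1=0$ and the policy internal state is initialized to $\alpha_1=0$. This gives, for every initial state $s$ and every $x\in\mathbb{R}$, the pointwise bound $F_{Z_1^{(\pi)}(s,0,0)}(x)\le F_{\hat{Z}_1^{(\hat\pi)}(s,0)}(x+\eta)$. Since $\mathcal{M}$ and $\hat{\mathcal{M}}$ share the same initial distribution $\tilde{D}$, integrating both sides against $D$ over the initial state yields the cumulative-return CDF inequality
\[
F_{Z^{(\pi)}}(x)\le F_{\hat{Z}^{(\hat\pi)}}(x+\eta)\qquad(\forall x\in\mathbb{R}).
\]

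Second, I would convert this CDF inequality into a quantile inequality by invoking Lemma~\ref{lem:monotoneinversebound} with $F=F_{Z^{(\pi)}}$ and (the lemma's) $G=F_{\hat{Z}^{(\hat\pi)}}$; both are CDFs, hence monotonically increasing, so the hypotheses are met. This produces $F_{Z^{(\pi)}}^\dagger(\tau)\ge F_{\hat{Z}^{(\hat\pi)}}^\dagger(\tau)-\eta$ for all $\tau\in[0,1]$. Finally, I would integrate this against $dG$: because $G$ is a CDF it is non-decreasing, so $dG$ is a non-negative measure and the inequality is preserved, giving
\[
\Phi(\pi)=\int_0^1 F_{Z^{(\pi)}}^\dagger(\tau)\,dG(\tau)\ge\int_0^1\left(F_{\hat{Z}^{(\hat\pi)}}^\dagger(\tau)-\eta\right)dG(\tau)=\hat\Phi(\hat\pi)-\eta\,(G(1)-G(0)).
\]
Using $G(0)=0$ (Assumption~\ref{assump:lipschitz}) and $G(1)=1$ (the normalization of the weighting CDF, consistent with the boundary term $T\cdot G(1)=T$ computed in Lemma~\ref{lem:key}) collapses the last factor to $\eta$, yielding $\Phi(\pi)\ge\hat\Phi(\hat\pi)-\eta$.

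The main obstacle is the first conceptual choice of representation. Although Lemma~\ref{lem:key} expresses $\Phi$ directly via $\int_{\mathbb{R}}G(F_{Z}(x))\,dx$, pushing the argument through that form is awkward: the additive $\eta$-shift in the CDF inequality interacts with the tails of the integral and does not cleanly produce a single $-\eta$ of slack (a naive change of variables would spuriously suggest $\Phi(\pi)\ge\hat\Phi(\hat\pi)$ with no slack). Working on the quantile side instead isolates the $\eta$-shift as a pure additive constant that integrates against the \emph{probability} measure $dG$, so exactly one factor of $\eta$ survives. The only care required is to confirm that $F_{Z^{(\pi)}}$ and $F_{\hat{Z}^{(\hat\pi)}}$ are genuine CDFs so that Lemma~\ref{lem:monotoneinversebound} applies, and that $dG$ has total mass $G(1)-G(0)=1$.
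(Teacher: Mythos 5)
Your proposal is correct and takes essentially the same route as the paper's own proof: the paper likewise applies Lemma~\ref{lem:monotoneinversebound} to the CDF inequality of Lemma~\ref{lem:pihatcdfbound} to obtain $F_{Z}^\dagger(\tau)\ge F_{\hat{Z}}^\dagger(\tau)-\eta$, and then integrates this against $dG$ (it does not route through Lemma~\ref{lem:key}, just as you chose not to). Your write-up is merely a bit more explicit about two details the paper leaves implicit---averaging the pointwise CDF bound over the initial state distribution, and using $G(1)-G(0)=1$ so that the constant $\eta$ integrates to exactly $\eta$.
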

\begin{proof}
Let $\hat{Z}=\hat{Z}_1^{(\hat\pi)}(s_1,0)$ and $Z=Z_1^{(\pi)}(s_1,0,0)$. Applying Lemma~\ref{lem:monotoneinversebound} to the inequality in Lemma~\ref{lem:pihatcdfbound}, we have
\begin{align*}
F_Z^\dagger(\tau)
\ge F_{\hat{Z}}^\dagger(\tau)-\eta.
\end{align*}
Integrating this inequality, we have
\begin{align*}
\Phi(\pi)=\int F_Z^\dagger(\tau)\cdot dG(\tau)\ge\int\left(F_{\hat{Z}}^\dagger(\tau)-\eta\right)\cdot dG(\tau)=\hat\Phi(\hat\pi)-\eta,
\end{align*}
as claimed.
\end{proof}
\begin{lemma}
\label{lem:picdfbound}
Consider the same setup as in Lemma~\ref{lem:pihatcdfbound}. Given any policy $\pi$ for $\mathcal{M}$, define the policy
\begin{align*}
\hat\pi_t(a\mid s,y,\alpha)=
\pi_t(a\mid s,y+\alpha)
\end{align*}
for $\hat{\mathcal{M}}$, where we initialize $\alpha_1=0$, and we update $\alpha_{\tau+1}=r$ on step $\tau$, where $r$ is a random variable with probability measure
\begin{align*}
\mathbb{P}_{R(s,a)}(r\mid\phi(r)=\rho),
\end{align*}
and $\alpha_{t+1}=\alpha_t$ otherwise. Then, for all $x,y,\alpha\in\mathbb{R}$, for $t>\tau$, we have
\begin{align*}
F_{\hat{Z}_t^{(\hat\pi)}(s,y,\alpha)}(x)=F_{Z_t^{(\pi)}(s,y+\alpha)}(x),
\end{align*}
and for $t\le\tau$, we have
\begin{align*}
F_{\hat{Z}_t^{(\hat\pi)}(s,y,0)}(x)
\le F_{Z_t^{(\pi)}(s,y)}(x).
\end{align*}
\end{lemma}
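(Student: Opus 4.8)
The plan is to prove the two claims simultaneously by backward induction on $t$, exactly mirroring the structure of the proof of Lemma~\ref{lem:pihatcdfbound} but running the comparison in the opposite direction (from $\pi$ on $\mathcal{M}$ to $\hat\pi$ on $\hat{\mathcal{M}}$). The base case is $t=T$, where the reward measure is identical in $\mathcal{M}$ and $\hat{\mathcal{M}}$ (the reward on the last step is just the recorded cumulative reward), so the two return CDFs coincide by definition once we check that $y+\alpha$ in $\hat\pi$ equals the cumulative-reward argument tracked by $\pi$. The induction then splits into three regimes determined by how step $t$ compares to the single discretized step $\tau$: the post-discretization regime $t>\tau$ (where I expect equality), the discretization step $t=\tau$ (where equality turns into the inequality), and the pre-discretization regime $t<\tau$ (where the inequality simply propagates).

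For $t>\tau$ both MDPs use the identical continuous transition $\mathbb{P}_{R(s,a)}$, the internal offset $\alpha$ is frozen at its step-$\tau$ value, and $\hat\pi_t(a\mid s,y,\alpha)=\pi_t(a\mid s,y+\alpha)$. Expanding the distributional Bellman equation on both sides and substituting the inductive hypothesis, the $\alpha$-shift propagates consistently through the cumulative-reward argument, yielding the stated equality $F_{\hat{Z}_t^{(\hat\pi)}(s,y,\alpha)}(x)=F_{Z_t^{(\pi)}(s,y+\alpha)}(x)$; this is the direct analogue of the $t>\tau$ computation in Lemma~\ref{lem:pihatcdfbound}. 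For $t<\tau$, both MDPs instead use the same discretized transition $\mathbb{P}_{R(s,a)}\circ\phi^{-1}$, the offset stays $\alpha=0$, and $\hat\pi_t(a\mid s,y,0)=\pi_t(a\mid s,y)$; the inequality $F_{\hat{Z}_t^{(\hat\pi)}(s,y,0)}(x)\le F_{Z_t^{(\pi)}(s,y)}(x)$ then carries through the common Bellman recursion verbatim from the inductive hypothesis, since monotone averaging of an inequality preserves it.

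The crux is the discretization step $t=\tau$, where I must convert the post-$\tau$ equality into the desired inequality. Here $\hat{\mathcal{M}}$ draws the rounded reward $\rho$ from $\mathbb{P}_{R(s,a)}\circ\phi^{-1}$ and increments its state by $\rho=\phi(r)$, whereas $\mathcal{M}$ would use a continuous reward. The first key idea is that the internal offset $\alpha$ is updated on this step so that $y+\alpha$ reconstructs the continuous cumulative reward that $\pi$ would track in $\mathcal{M}$: conditioned on $\rho$, we sample a continuous $r$ from $\mathbb{P}_{R(s,a)}(\cdot\mid\phi(r)=\rho)$ and absorb the rounding gap $r-\phi(r)$ into $\alpha$, so that invoking the $t>\tau$ equality rewrites the post-$\tau$ CDF in terms of $F_{Z_{\tau+1}^{(\pi)}(s',\,y+r)}$. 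The second key idea is that the two-stage sampling---first $\rho\sim\mathbb{P}_{R(s,a)}\circ\phi^{-1}$, then $r\sim\mathbb{P}_{R(s,a)}(\cdot\mid\phi(r)=\rho)$---has marginal law exactly $\mathbb{P}_{R(s,a)}$ by the tower property (disintegration of the reward measure along $\phi$), so after collapsing the double integral the outer expectation matches the continuous Bellman recursion for $\mathcal{M}$ at step $\tau$.

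The inequality itself comes from monotonicity of the CDF: since $\phi$ rounds up, $\rho=\phi(r)\ge r$, so the increment recorded in $\hat{\mathcal{M}}$ exceeds the continuous reward, and replacing $x-\rho$ by the larger $x-r$ only increases the monotone CDF $F_{Z_{\tau+1}^{(\pi)}(s',y+r)}$; this single inequality, averaged over the recovered measure $\mathbb{P}_{R(s,a)}$, produces $F_{\hat{Z}_\tau^{(\hat\pi)}(s,y,0)}(x)\le F_{Z_\tau^{(\pi)}(s,y)}(x)$. I expect the main obstacle to be the bookkeeping at $t=\tau$: correctly defining the offset $\alpha$ so that the frozen-$\alpha$ equality of the $t>\tau$ regime applies, and carefully justifying the disintegration so the conditional-then-marginal sampling recombines into $\mathbb{P}_{R(s,a)}$, while simultaneously orienting the monotonicity step in the direction that yields $\le$ rather than $\ge$.
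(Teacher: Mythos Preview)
Your proposal is correct and matches the paper's proof essentially line for line: backward induction on $t$, with the three regimes $t>\tau$ (equality via Bellman plus the $\alpha$-shift), $t=\tau$ (disintegration of $\mathbb{P}_{R(s,a)}$ along $\phi$ followed by the single monotonicity step $F(x-\rho)\le F(x-r)$ since $\rho=\phi(r)\ge r$), and $t<\tau$ (propagation of the inequality through the common recursion). You even handle the offset more accurately than the lemma statement itself, which writes $\alpha_{\tau+1}=r$ but whose proof (and your description) correctly uses the rounding gap $\alpha_{\tau+1}=r-\rho$.
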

\begin{proof}
We prove by backwards induction on $T$. The base case $t=T$ follows by definition. For $t>\tau$, we have
\begin{align*}
F_{\hat{Z}_t^{(\hat\pi)}(s,y,\alpha)}(x)
&=\sum_{a\in A}\sum_{s'\in S}\int\hat\pi_t(a\mid s,y,\alpha)\cdot P(s'\mid s,a)\cdot F_{\hat{Z}_{t+1}^{(\pi)}(s',y+\rho,\alpha)}(x-\rho)\cdot d\mathbb{P}_{R(s,a)}(\rho) \\
&=\sum_{a\in A}\sum_{s'\in S}\int\pi_t(a\mid s,y+\alpha)\cdot P(s'\mid s,a)\cdot F_{Z_{t+1}^{(\pi)}(s',y+\alpha+\rho)}(x-\rho)\cdot d\mathbb{P}_{R(s,a)}(\rho) \\
&=F_{Z_t^{(\pi)}(s,y+\alpha)}(x),
\end{align*}
where the second line follows by induction and by the definition of $\pi$. Next, for $t=\tau$, we have
\begin{align*}
&F_{\hat{Z}_t^{(\hat\pi)}(s,c,0)}(x) \\
&=\sum_{a\in A}\sum_{s'\in S}\int\hat\pi_t(a\mid s,y,0)\cdot P(s'\mid s,a)\cdot F_{\hat{Z}_{t+1}^{(\hat\pi)}(s',y+\rho,r-\rho)}(x-\rho)\cdot d\mathbb{P}_{R(s,a)}(r\mid\phi(r)=\rho)\cdot d\mathbb{P}_{R(s,a)}\circ\phi^{-1}(\rho) \\
&=\sum_{a\in A}\sum_{s'\in S}\int\pi_t(a\mid s,y)\cdot P(s'\mid s,a)\cdot F_{Z_{t+1}^{(\pi)}(s',y+r)}(x-\rho)\cdot d\mathbb{P}_{R(s,a)}(r\mid\phi(r)=\rho)\cdot d\mathbb{P}_{R(s,a)}\circ\phi^{-1}(\rho) \\
&\le\sum_{a\in A}\sum_{s'\in S}\int\pi_t(a\mid s,y)\cdot P(s'\mid s,a)\cdot F_{Z_{t+1}^{(\pi)}(s',y+r)}(x-r)\cdot d\mathbb{P}_{R(s,a)}(r\mid\phi(r)=\rho)\cdot d\mathbb{P}_{R(s,a)}\circ\phi^{-1}(\rho) \\
&=\sum_{a\in A}\sum_{s'\in S}\int\pi_t(a\mid s,y)\cdot P(s'\mid s,a)\cdot F_{Z_{t+1}^{(\pi)}(s',y+r)}(x-r)\cdot d\mathbb{P}_{R(s,a)}(r) \\
&=F_{Z_t^{(\pi)}(s,y)}(x),
\end{align*}
where the second line uses the update from $\alpha_t=0$ to $\alpha_{t+1}=r-\rho$ on this step, the third line follows by induction and by the definition of $\hat\pi$, the fourth line follows by monotonicity of $F_{Z_{t+1}^{(\pi)}(s',y+r)}$, and the fifth line follows by the definition of conditional probability---in particular,
\begin{align*}
&\int F_{Z_{t+1}^{(\pi)}(s',y+r)}(x-r)\cdot d\mathbb{P}_{R(s,a)}(r\mid\phi(r)=\rho)\cdot d\mathbb{P}_{R(s,a)}\circ\phi^{-1}(\rho) \\
&=\int\frac{\int F_{Z_{t+1}^{(\pi)}(s',y+r)}(x-r)\cdot\mathbbm{1}(r\in\phi^{-1}(\rho))\cdot d\mathbb{P}_{R(s,a)}(r)}{\int\mathbbm{1}(r'\in\phi^{-1}(\rho))\cdot d\mathbb{P}_{R(s,a)}(r')}\cdot d\mathbb{P}_{R(s,a)}\circ\phi^{-1}(\rho) \\
&=\int F_{Z_{t+1}^{(\pi)}(s',y+r)}(x-r)\int\frac{\cdot\mathbbm{1}(r\in\phi^{-1}(\rho))}{\int\mathbbm{1}(r'\in\phi^{-1}(\rho))\cdot d\mathbb{P}_{R(s,a)}(r')}\cdot d\mathbb{P}_{R(s,a)}\circ\phi^{-1}(\rho)\cdot d\mathbb{P}_{R(s,a)}(r) \\
&=\int F_{Z_{t+1}^{(\pi)}(s',y+r)}(x-r)\sum_{i=1}^\infty\frac{\cdot\mathbbm{1}(r\in B_i)}{\mathbb{P}(R(s,a)\in B_i)}\cdot\mathbb{P}(R(s,a)\in B_i)\cdot d\mathbb{P}_{R(s,a)}(r) \\
&=\int F_{Z_{t+1}^{(\pi)}(s',y+r)}(x-r)\cdot d\mathbb{R}_{R(s,a)}(r),
\end{align*}
where in the third line, $B_i=(\eta\cdot(i-1),\eta\cdot i]$. For $t<\tau$, we have
\begin{align*}
F_{\hat{Z}_t^{(\hat\pi)}(s,y,0)}(x)
&=\sum_{a\in A}\sum_{s'\in S}\int\hat\pi_t(a\mid s,y,0)\cdot P(s'\mid s,a)\cdot F_{\hat{Z}_{t+1}^{(\hat\pi)}(s',y+\rho,0)}(x-\rho)\cdot d\mathbb{P}_{R(s,a)}(\rho) \\
&=\sum_{a\in A}\sum_{s'\in S}\int\pi_t(a\mid s,y)\cdot P(s'\mid s,a)\cdot F_{\hat{Z}_{t+1}^{(\hat\pi)}(s',y+\rho,0)}(x-\rho)\cdot d\mathbb{P}_{R(s,a)}(\rho) \\
&\le\sum_{a\in A}\sum_{s'\in S}\int\pi_t(a\mid s,y)\cdot P(s'\mid s,a)\cdot F_{Z_{t+1}^{(\pi)}(s',y+\rho,0)}(x-\rho)\cdot d\mathbb{P}_{R(s,a)}(\rho) \\
&=F_{Z_t^{(\pi)}(s',y,0)}(x),
\end{align*}
where the second line follows by the definition of $\pi$, and the third line follows by induction. The claim follows.
\end{proof}
Next, we prove two lemmas showing a converse---namely, a lower bound on the value of a policy $\pi$ for $\mathcal{M}_{\tau-1}$ when adapted to $\mathcal{M}_\tau$.
\begin{lemma}
\label{lem:piphibound}
Consider the same setup as in Lemma~\ref{lem:pihatcdfbound}. Letting $\pi$ be a policy for $\mathcal{M}$, and $\hat\pi$ be the policy defined in Lemma~\ref{lem:picdfbound} that adapts $\pi$ to $\hat{\mathcal{M}}$. Then, we have
\begin{align*}
\hat\Phi(\hat\pi)\ge\Phi(\pi),
\end{align*}
where $\hat\Phi$ is the objective for $\hat{\mathcal{M}}$ and $\Phi$ is the objective for $\mathcal{M}$.
\end{lemma}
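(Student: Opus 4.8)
The plan is to combine Lemma~\ref{lem:picdfbound}, which already does the hard distributional work, with the monotonicity of the objective $\Phi$ in the quantile function. Intuitively, the adapted policy $\hat\pi$ in $\hat{\mathcal{M}}$ rounds rewards up via $\phi$, so its return first-order stochastically dominates that of $\pi$ in $\mathcal{M}$; since $\Phi$ is an increasing functional of the quantiles, this dominance immediately gives $\hat\Phi(\hat\pi)\ge\Phi(\pi)$.

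First I would specialize the conclusion of Lemma~\ref{lem:picdfbound} to the initial step. Let $\hat{Z}=\hat{Z}_1^{(\hat\pi)}(s_1,0,0)$ denote the return of $\hat\pi$ in $\hat{\mathcal{M}}$ and $Z=Z_1^{(\pi)}(s_1,0)$ the return of $\pi$ in $\mathcal{M}$. Taking $t=1\le\tau$, $y=0$, $\alpha=0$ in the ($t\le\tau$) case of Lemma~\ref{lem:picdfbound} and integrating over the initial state distribution $D$ (the inequality holds pointwise in the initial state $s_1$, so integration against $dD$ preserves it), I obtain the first-order stochastic dominance relation
\begin{align*}
F_{\hat{Z}}(x)\le F_Z(x)\qquad(\forall x\in\mathbb{R}).
\end{align*}

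Next I would convert this CDF inequality into a quantile inequality. Applying Lemma~\ref{lem:monotoneinversebound} with the two monotone functions taken to be $F_{\hat{Z}}$ and $F_Z$ and with shift parameter $\eta=0$ (equivalently, noting that $\{x\mid F_{\hat{Z}}(x)\ge\tau\}\subseteq\{x\mid F_Z(x)\ge\tau\}$, so the infimum over the former set is at least that over the latter), I get
\begin{align*}
F_{\hat{Z}}^\dagger(\tau)\ge F_Z^\dagger(\tau)\qquad(\forall\tau\in[0,1]).
\end{align*}
Finally, since $G$ is a CDF over quantiles and hence monotonically increasing, the Riemann–Stieltjes measure $dG$ is nonnegative, so integrating the quantile inequality against $dG$ yields
\begin{align*}
\hat\Phi(\hat\pi)=\int_0^1 F_{\hat{Z}}^\dagger(\tau)\cdot dG(\tau)\ge\int_0^1 F_Z^\dagger(\tau)\cdot dG(\tau)=\Phi(\pi),
\end{align*}
which is the claim.

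Because Lemma~\ref{lem:picdfbound} supplies the delicate inductive coupling of the two return distributions, this proof is essentially assembly, and I do not expect a serious obstacle. The only points that warrant care are bookkeeping ones: ensuring the direction of the stochastic dominance is correct (a uniformly smaller CDF corresponds to uniformly larger quantiles, matching the fact that $\hat{\mathcal{M}}$ rounds rewards \emph{up}), correctly passing from the per-initial-state inequality of Lemma~\ref{lem:picdfbound} to the marginalized return $Z^{(\pi)}$ by integrating against $D$, and invoking the monotonicity of $G$ so that integration against $dG$ preserves the inequality rather than reversing it.
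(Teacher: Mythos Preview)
Your proposal is correct and mirrors the paper's own proof almost exactly: apply Lemma~\ref{lem:picdfbound} at $t=1$ to obtain $F_{\hat Z}(x)\le F_Z(x)$, invoke Lemma~\ref{lem:monotoneinversebound} with shift $0$ to get $F_{\hat Z}^\dagger(\tau)\ge F_Z^\dagger(\tau)$, and integrate against $dG$. Your added remarks about marginalizing over $D$ and the monotonicity of $G$ just make explicit details the paper leaves implicit.
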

\begin{proof}
Let $Z=Z_1^{(\pi)}(s_1,0)$ and $\hat{Z}=Z_1^{(\hat\pi)}(s_1,0,0)$. Applying Lemma~\ref{lem:monotoneinversebound} to the inequality in Lemma~\ref{lem:picdfbound}, we have
\begin{align*}
F_{\hat{Z}}^\dagger(\tau)
\ge F_Z^\dagger(\tau).
\end{align*}
Integrating this inequality, we have
\begin{align*}
\hat\Phi(\hat\pi)=\int F_{\hat{Z}}^\dagger(\tau)\cdot dG(\tau)\ge\int F_Z^\dagger(\tau)\cdot dG(\tau)=\Phi(\pi),
\end{align*}
as claimed.
\end{proof}
Finally, we prove Theorem~\ref{thm:discretized}. Let $\pi_T^T$ be the optimal policy for $\mathcal{M}_T$, and let $\pi_\tau^T$ be the policy defined in Lemma~\ref{lem:pihatphibound} adapting $\pi_\tau^T$ from $\mathcal{M}_\tau$ to $\mathcal{M}_{\tau-1}$ for each $\tau\in[T]$. 
\begin{align*}
\Phi_0(\pi_0^T)\ge\Phi_1(\pi_1^T)-\eta\ge\Phi_2(\pi_2^T)\ge...\ge\Phi_T(\pi_T^T)-T\cdot\eta,
\end{align*}
where each inequality follows by Lemma~\ref{lem:pihatphibound}. Similarly, let $\pi_0^0$ be the optimal policy for $\mathcal{M}_0$, and let $\pi_\tau^0$ be the policy defined in Lemma~\ref{lem:piphibound} adapting $\pi_{\tau-1}^0$ from $\mathcal{M}_{\tau-1}$ to $\mathcal{M}_\tau$. Then, we have
\begin{align*}
\Phi_T(\pi_T^0)\ge\Phi_{T-1}(\pi_{T-1}^0)\ge...\ge\Phi_0(\pi_0^0),
\end{align*}
where each inequality follows by Lemma~\ref{lem:piphibound}. Furthermore, by optimality of $\pi_T^T$ for $\Phi_T$, we also have $\Phi_T(\pi_T^T)\ge\Phi_T(\pi_T^0)$; together, these three inequalities imply
\begin{align*}
\Phi_0(\pi_0^T)\ge\Phi_0(\pi_0^0)-T\cdot\eta.
\end{align*}
Finally, note that $\pi_0^0=\pi_{\tilde{\mathcal{M}}}^*$ is the optimal policy for $\tilde{\mathcal{M}}=\mathcal{M}_0$, and $\pi_T^0=\pi_{\hat{\mathcal{M}}}$ is $\pi_0^0$ adapted to $\hat{\mathcal{M}}$; also, $\Phi_0=\Phi_{\tilde{\mathcal{M}}}$ is the objective for $\tilde{\mathcal{M}}$. Thus, we have
\begin{align*}
\Phi_{\tilde{\mathcal{M}}}(\pi_{\hat{\mathcal{M}}})\ge\Phi_{\tilde{\mathcal{M}}}(\pi_{\tilde{\mathcal{M}}}^*)-T\cdot\eta.
\end{align*}
By Theorem~\ref{thm:augmented}, the optimal policy for $\tilde{\mathcal{M}}$ equals the optimal history-dependent policy for the original MDP $\mathcal{M}$, so the claim follows. $\qed$

\section{Proof of Lemmas for Section~\ref{sec:thm:main:proof}}
\label{sec:thm:main:proof:appendix}

\subsection{Proof of Lemma~\ref{lem:event}}

\begin{proof}
First, by the Dvoretzky–Kiefer–Wolfowitz (DKW) inequality and a union bound, for each $k\in[K]$, conditioned on $\{N^{(k)}(s,a)\}_{s\in\mathcal{S},a\in\mathcal{A}}$, with probability at least $1-\delta/(3K)$, we have
\begin{align*}
\|F_{\tilde{R}^{(k)}(s,a)} - F_{R(s,a)}\|_\infty \leq \epsilon^{(k)}_R(s,a)
\qquad(\forall s\in\mathcal{S},a\in\mathcal{A}).
\end{align*} 
Similarly, by Hoeffding's inequality, an $\ell_1$ concentration bound for multinomial distribution, and a union bound, for each $k\in[K]$, conditioned on $\{N^{(k)}(s,a)\}_{s\in\mathcal{S},a\in\mathcal{A}}$, we have
\begin{align*}
\|\tilde{P}^{(k)}(\cdot\mid s,a) - P(\cdot\mid s,a)\|_1
&\le\epsilon^{(k)}_P(s,a)
\quad(\forall s\in\mathcal{S},a\in\mathcal{A})\\
\|\tilde{P}^{(k)}(\cdot\mid s,a) - P(\cdot\mid s,a)\|_\infty
&\le\epsilon^{(k)}_R(s,a)
\quad(\forall s\in\mathcal{S},a\in\mathcal{A}).
\end{align*}
each holding with probability at least $1-\delta/(3K)$, respectively. Thus, both of these bounds hold for all $k\in[K]$ with probability at least $1-\delta$. The claim follows.
\end{proof}

\subsection{Proof of Lemma~\ref{lem:genbound}}

\begin{proof}
First, we prove that for all policies $\pi$, we have
\begin{align*}
\|F_{Z^{'(\pi)}}-F_{Z^{(\pi)}}\|_\infty\le B(\pi).
\end{align*}
To this end, let
$G(r)=F_{Z^{(\pi)}_{t+1}(s',y+r)}(x-r)$; note that $G(-\infty)=1$ and $G(\infty)=0$. Then, by integration by parts, we have
\begin{align*}
F_{Z^{(\pi)}_t(s,y)}(x)
&=\int\sum_{a\in\mathcal{A}}\sum_{s'\in\mathcal{S}}\pi(a\mid s,y)\cdot P(s'\mid s,a)\cdot G(r)\cdot d\mathbb{P}_{R(s,a)}(r) \\ 
&= -\int\sum_{a\in\mathcal{A}}\sum_{s'\in\mathcal{S}}\pi(a\mid s,y)\cdot P(s'\mid s,a)\cdot F_{R(s,a)}(r)\cdot dG(r),
\end{align*}
and similarly for $F_{Z^{'(\pi)}_t(s,y)}(x)$. Next, note that
\begin{align*}
&\sup_{x\in\mathbb{R}}|F_{Z^{'(\pi)}_t(s,y)}(x)-F_{Z^{(\pi)}_t(s,y)}(x)| \\
&=\sup_{x\in\mathbb{R}}\bigg|\sum_{a\in\mathcal{A}}\sum_{s'\in\mathcal{S}}\pi(a\mid s,y)\left(P'(s'\mid s,a)-P(s'\mid s,a)\right)\int F_{Z^{'(\pi)}_{t+1}(s',y+r)}(x-r) d\mathbb{P}'_{R'(s,a)}(r) \\
&\qquad\qquad+\sum_{a\in\mathcal{A}}\sum_{s'\in\mathcal{S}}\pi(a\mid s,y)P(s'\mid s,a)\int
\left(F_{Z^{'(\pi)}_{t+1}(s',y+r)}(x-r)-F_{Z^{(\pi)}_{t+1}(s',y+r)}(x-r)\right) d\mathbb{P}'_{R'(s,a)}(r)\bigg| \\
&\qquad\qquad-\sum_{a\in\mathcal{A}}\sum_{s'\in\mathcal{S}}\pi(a\mid s,y)P(s'\mid s,a)\int\left(F_{R'(s,a)}(r)-F_{R(s,a)}(r)\right) dF_{Z^{(\pi)}_{t+1}(s',y+r)}(x-r) \\
&\le\sup_{x\in\mathbb{R}}\sum_{a\in\mathcal{A}}\sum_{s'\in\mathcal{S}}\pi(a\mid s,y)\cdot|P'(s'\mid s,a)-P(s'\mid s,a)| \\
&\qquad\qquad+\sum_{a\in\mathcal{A}}\sum_{s'\in\mathcal{S}}\pi(a\mid s,y)\cdot\int\sup_{x'\in\mathbb{R}}|F_{Z^{'(\pi)}_{t+1}(s',y+r)}(x')-F_{Z^{(\pi)}_{t+1}(s',y+r)}(x')|\cdot d\mathbb{P}'_{R'(s,a)}(r) \\
&\qquad\qquad+\sum_{a\in\mathcal{A}}\pi(a\mid s,y)\cdot\sup_{r'\in\mathbb{R}} |F_{R'(s,a)}(r')-F_{R(s,a)}(r')| \\
&\le\mathbb{E}\left[\epsilon_P(s,a)+\epsilon_R(s,a)+\sup_{x'\in\mathbb{R}}|F_{Z^{'(\pi)}_{t+1}(s',y+r)}(x')-F_{Z^{(\pi)}_{t+1}(s',y+r)}(x')|\right].
\end{align*}
Thus, we have
\begin{align*}
\epsilon_t^{(\pi)}&\coloneqq
\mathbb{E}\left[\sup_{x\in\mathbb{R}}|F_{Z^{'(\pi)}_t(s,y)}(x)-F_{Z^{(\pi)}_t(s,y)}(x)|\right] \\
&=\mathbb{E}\left[\epsilon^{(k,P)}_{s,a}+\epsilon^{(k,R)}_{s,a}+\sup_{x'\in\mathbb{R}}|F_{Z^{'(\pi)}_{t+1}(s',y+r)}(x')-F_{Z^{(\pi)}_{t+1}(s',y+r)}(x')|\right] \\
&\le\mathbb{E}\left[\epsilon_P(s,a)+\epsilon_R(s,a)\right]+\epsilon_{t+1}^{(k,\pi)} \\
&=\mathbb{E}\left[\sum_{\tau=t}^T\epsilon_P(s_\tau,a_\tau)+\epsilon_R(s_\tau,a_\tau)\right],
\end{align*}
where the last step follows by induction. Finally, we have
\begin{align*}
|\Phi'(\pi)-\Phi(\pi)|
&=\left|\int_0^T\left(G(F_{Z^{'(\pi)}}(x))-G(F_{Z^{(\pi)}}(x))\right)\cdot dx\right| \\
&\le L_G\int_0^T|F_{Z^{'(\pi)}}(x)-F_{Z^{(\pi)}}(x)|\cdot dx \\
&\le T\cdot L_G\cdot\epsilon_1^{(\pi)},
\end{align*}
where the first line follows by Lemma~\ref{lem:key}. The claim follows since $\epsilon_1^{(\pi)}$ equals the desired bound.
\end{proof}

\subsection{Proof of Lemma~\ref{lem:optimismhat}}

\begin{proof}
First, we prove that $F_{\hat{Z}_t^{(k,\pi)}(s,y)}(x)\le F_{Z_t^{(\pi)}(s,y)}(x)$. The case $s=s_{\infty}$ is straightforward, since its transitions and rewards are equal in $\mathcal{M}$ and $\hat{\mathcal{M}}$, and it only transitions to itself. For $s\neq s_{\infty}$, we prove by induction on $t$. The base case $t=T$ follows by definition. Then, we have
\begin{align}
F_{\hat{Z}_t^{(k,\pi)}(s,y)}(x)
&=\int\sum_{a\in\mathcal{A}}\sum_{s'\in\mathcal{S}}\pi(a\mid s,y)\cdot \hat{P}^{(k)}(s'\mid s,a)\cdot F_{\hat{Z}_{t+1}^{(k,\pi)}(s',y+r)}(x-r)\cdot d\hat{\mathbb{P}}_{\hat{R}^{(k)}(s,a)}(r) \nonumber \\
&\le\int\sum_{a\in\mathcal{A}}\sum_{s'\in\mathcal{S}}\pi(a\mid s,y)\cdot \hat{P}^{(k)}(s'\mid s,a)\cdot F_{Z_{t+1}^{(\pi)}(s',y+r)}(x-r)\cdot d\hat{\mathbb{P}}_{\hat{R}^{(k)}(s,a)}(r) \nonumber \\
&=\int\sum_{a\in\mathcal{A}}\sum_{s'\in\mathcal{S}}\pi(a\mid s,y)\cdot \hat{P}^{(k)}(s'\mid s,a)\cdot F_{\hat{R}^{(k)}(s,a)}(x'-x)\cdot dF_{Z_{t+1}^{(\pi)}(s',y+r)}(x') \nonumber \\
&\le\int\sum_{a\in\mathcal{A}}\sum_{s'\in\mathcal{S}}\pi(a\mid s,y)\cdot \hat{P}^{(k)}(s'\mid s,a)\cdot F_{R(s,a)}(x'-x)\cdot dF_{Z_{t+1}^{(\pi)}(s',y+r)}(x') \nonumber \\
&=\int\sum_{a\in\mathcal{A}}\sum_{s'\in\mathcal{S}}\pi(a\mid s,y)\cdot \hat{P}^{(k)}(s'\mid s,a)\cdot F_{Z_{t+1}^{(\pi)}(s',y+r)}(x-r)\cdot d\mathbb{P}_{R(s,a)}(r), \label{eqn:lem:optimismhat:proof:1}
\end{align}
where the second line follows by induction, the third by integration by parts and substituting $x'=x-r$, the fourth since $F_{R(s,a)}(r)=1=F_{\hat{R}^{(k)}(s,a)}(r)$ for $r\ge1$, and for $r<1$, on event $\mathcal{E}$, we have
\begin{align*}
F_{R(s,a)}(r)
\ge\max\left\{F_{\tilde{R}^{(k)}(s,a)}(r)-\epsilon_R^{(k)}(s,a),0\right\}
=F_{\hat{R}^{(k)}(s,a)}(r),
\end{align*}
and the fifth by integration by parts and substituting $r=x'-x$. Next, since $s\neq s_{\infty}$, we have
\begin{align*}
\hat{P}^{(k)}(s_{\infty}\mid s,a)
=1-\sum_{s'\in\mathcal{S}\setminus\{s_{\infty}\}}\hat{P}^{(k)}(s'\mid s,a)
=\sum_{s'\in\mathcal{S}\setminus\{s_{\infty}\}}P(s'\mid s,a)-\hat{P}^{(k)}(s'\mid s,a),
\end{align*}
so we can decompose the summand $\hat{P}^{(k)}(s_{\infty}\mid s,a)\cdot F_{Z_{t+1}^{(\pi)}(s_{\infty},y+r)}(x-r)$ (i.e., $s'=s_{\infty}$) in (\ref{eqn:lem:optimismhat:proof:1}) and distribute it across the other summands; in particular, the summands $s'\neq s_{\infty}$ become
\begin{align}
&\hat{P}^{(k)}(s'\mid s,a)\cdot F_{Z_{t+1}^{(\pi)}(s',y+r)}(x-r)
+\left(P(s'\mid s,a)-\hat{P}^{(k)}(s'\mid s,a)\right)\cdot F_{Z_{t+1}^{(\pi)}(s_{\infty},y+r)}(x-r) \nonumber \\
&\le\hat{P}^{(k)}(s'\mid s,a)\cdot F_{Z_{t+1}^{(\pi)}(s',y+r)}(x-r)
+\left(P(s'\mid s,a)-\hat{P}^{(k)}(s'\mid s,a)\right)\cdot F_{Z_{t+1}^{(\pi)}(s',y+r)}(x-r) \nonumber \\
&=P(s'\mid s,a)\cdot F_{Z_{t+1}^{(\pi)}(s',y+r)}(x-r), \label{eqn:lem:optimismhat:proof:2}
\end{align}
where the second line follows since $F_{Z_{t+1}^{(\pi)}(s_{\infty},y+r)}(x-r)\le F_{Z_{t+1}^{(\pi)}(s',y+r)}(x-r)$ for all $s'\neq s_{\infty}$, and since $P(s'\mid s,a)-\hat{P}^{(k)}(s'\mid s,a)\ge0$ on event $\mathcal{E}$. Continuing from (\ref{eqn:lem:optimismhat:proof:1}), we have
\begin{align}
F_{\hat{Z}_t^{(k,\pi)}(s,y)}(x)
&\le\int\sum_{a\in\mathcal{A}}\sum_{s'\in\mathcal{S}}\pi(a\mid s,y)\cdot \hat{P}^{(k)}(s'\mid s,a)\cdot F_{Z_{t+1}^{(\pi)}(s',y+r)}(x-r)\cdot d\mathbb{P}_{R(s,a)}(r) \nonumber \\
&\le\int\sum_{a\in\mathcal{A}}\sum_{s'\in\mathcal{S}\setminus\{s_{\infty}\}}\pi(a\mid s,y)\cdot P(s'\mid s,a)\cdot F_{Z_{t+1}^{(\pi)}(s',y+r)}(x-r)\cdot d\mathbb{P}_{R(s,a)}(r) \nonumber \\
&=F_{Z_t^{(\pi)}(s,y)}(x), \label{eqn:lem:optimismhat:proof:3}
\end{align}
where the second line follows by distributing the summand $s'=s_{\infty}$ and applying (\ref{eqn:lem:optimismhat:proof:2}). Since $\mathcal{M}$ and $\hat{\mathcal{M}}$ have the same initial state distribution, we have $F_{\hat{Z}^{(k,\pi)}}(x)\le F_{Z^{(\pi)}}(x)$. By Lemma~\ref{lem:key}, we have
\begin{align*}
\hat{\Phi}^{(k)}(\pi)
=T-\int_0^TG(F_{\hat{Z}^{(k,\pi)}}(x))\cdot dx
\ge T-\int_0^TG(F_{Z^{(\pi)}}(x))\cdot dx
=\Phi(\pi),
\end{align*}
where the inequality follows from (\ref{eqn:lem:optimismhat:proof:3}) and since $G$ is monotone. The claim follows.
\end{proof}

\subsection{Proof of Theorem~\ref{thm:main}}
\label{subsec:thm:main:proof}

\begin{proof}
We prove Theorem~\ref{thm:main}. Note that on event $\mathcal{E}$, we have
\begin{align*}
&\text{regret}(\mathfrak{A}) \\
&=\sum_{k=1}^K\Phi(\pi^*)-\Phi(\hat\pi^{(k)}) \\
&\le\sum_{k=1}^K\hat\Phi^{(k)}(\pi^*)-\Phi(\hat\pi^{(k)}) \\
&\le\sum_{k=1}^K\hat\Phi^{(k)}(\hat\pi^{(k)})-\Phi(\hat\pi^{(k)}) \\
&\le\sum_{k=1}^K2T\cdot L_G\cdot\sqrt{|\mathcal{S}|}\cdot B^{(k)}(\hat\pi^{(k)}) \\
&=2TL_G\sqrt{5|\mathcal{S}|^2\log\left(\frac{4|\mathcal{S}|\cdot|\mathcal{A}|\cdot K}{\delta}\right)}\cdot\mathbb{E}_{\Xi^{(\hat\pi^{(1:K)})}_T}\left[\sum_{k=1}^K\sum_{t=1}^T\frac{1}{\sqrt{N^{(k)}(s_t,a_t)}}\biggm\vert\{N^{(k)}(s,a)\}_{k\in[K],s\in\mathcal{S},a\in\mathcal{A}}\right],
\end{align*}

where the first inequality follows by Lemma~\ref{lem:optimismhat}, the second follows by optimality of $\hat\pi^{(k)}$ for $\hat\Phi^{(k)}$, and the third by Lemma~\ref{lem:fullbound}. Furthermore, note that
\begin{align*}
    \sum_{k=1}^K\sum_{t=1}^T\frac{1}{\sqrt{N^{(k)}(s_t,a_t)}} \le \sum_{k=1}^K\sum_{t=1}^T\mathbbm{1}(N^{(k)}(s_t,a_t)\le T) + \sum_{k=1}^K\sum_{t=1}^T\mathbbm{1}(N^{(k)}(s_t,a_t)>T)\frac{1}{\sqrt{N^{(k)}(s_t,a_t)}}.
\end{align*}
The event $(s_t,a_t)=(s,a)$ and $(N^{(k)}(s,a)\le T)$ can happen fewer than $2T$ times per state action pair. Therefore, $\sum_{k=1}^K\sum_{t=1}^T\mathbbm{1}(N^{(k)}(s_t,a_t)\le T)\le 2TSA$. Now suppose $N^{(k)}(s,a)>T$. Then for any $t\in\mathcal{W}_k$, we have $N^{(k)}_t(s,a)\le N^{(k)}(s,a)+T\le 2N^{(k)}(s,a)$. Thus, we have
\begin{align*}
\sum_{k=1}^K\sum_{t=1}^T\frac{\mathbbm{1}(N^{(k)}(s_t,a_t)>T)}{\sqrt{N^{(k)}(s_t,a_t)}}
&\le\sum_{k=1}^K\sum_{t=1}^T\sqrt{\frac{2}{N^{(k)}_t(s_t,a_t)}} \\
&=\sqrt{2}\sum_{k=1}^K\sum_{t=1}^T\sum_{s\in\mathcal{S}}\sum_{a\in\mathcal{A}}\frac{\mathbbm{1}((s_t,a_t) = (s,a))}{\sqrt{N_t^{(k)}(s,a)}} \\
&\le\sqrt{2}\sum_{s\in\mathcal{S}}\sum_{a\in\mathcal{A}}\sum_{j=1}^{N^{(K+1)}(s,a)}j^{-1/2} \\
&\le \sqrt{2}\sum_{s\in\mathcal{S}}\sum_{a\in\mathcal{A}}\int_{x=0}^{N^{(K+1)}(s,a)}x^{-1/2}dx \\
&\le \sqrt{2|\mathcal{S}|\cdot|\mathcal{A}|\cdot\sum_{s\in\mathcal{S}}\sum_{a\in\mathcal{A}}N^{(K+1)}(s,a)} \\
&=\sqrt{2|\mathcal{S}|\cdot|\mathcal{A}|\cdot KT}.
\end{align*}

The claim follows.
\end{proof}

\section{The Optimal Policy for CVaR Objectives}\label{appendix:cvar-policy}

In this section, we describe how to compute the optimal policy for the CVaR objective when the MDP is known; this approach is described in detail in~\cite{bauerle2011markov}. Following this work, we consider the setting where we are trying to minimize cost rather than maximize reward. In particular, consider an MDP $\mathcal{M}=(\mathcal{S},\mathcal{A},D,P,\mathbb{P},T)$, and our goal is to compute a policy $\pi$ that maximizes its CVaR objective.

\textbf{Step 1: CVaR objective.}
We begin by rewriting the CVaR objective in a form that is more amenable to optimization. First, we have the following key result (see \cite{bauerle2011markov} for a proof):
\begin{lemma}
For any random variable $Z$, we have
\begin{align*}
\text{CVaR}_{\alpha} (Z) = \inf_{\rho\in\mathbb{R}}\left\{\rho + \frac{1}{1-\alpha}\cdot\mathbb{E}_Z\left[(Z-\rho)^+\right]\right\},
\end{align*}
where the minimum is achieved by $\rho^*=\text{VaR}(Z)$.
\end{lemma}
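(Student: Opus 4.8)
The plan is to treat the right-hand side as a one-dimensional convex optimization problem and analyze it directly, rather than invoking convex duality. Write
\[
g(\rho) = \rho + \frac{1}{1-\alpha}\,\mathbb{E}_Z\big[(Z-\rho)^+\big].
\]
Since the return $Z$ is bounded in $[0,T]$, $g$ is finite for every $\rho$; and since $\rho\mapsto(z-\rho)^+$ is convex for each fixed $z$, convexity is preserved under the expectation, and the added linear term $\rho$ does not disturb it, the function $g$ is convex. Hence its infimum is attained, and $\rho^*$ is a minimizer if and only if $0$ lies in the subdifferential $\partial g(\rho^*)$.

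The central step is to compute the one-sided derivatives and match the first-order condition to the definition of VaR. Using $\tfrac{\partial}{\partial\rho}(z-\rho)^+ = -\mathbbm{1}(z>\rho)$ for the right derivative and $-\mathbbm{1}(z\ge\rho)$ for the left derivative, I would obtain
\[
g'_+(\rho) = 1 - \frac{1}{1-\alpha}\,\mathbb{P}(Z>\rho),
\qquad
g'_-(\rho) = 1 - \frac{1}{1-\alpha}\,\mathbb{P}(Z\ge\rho).
\]
The minimizer condition $g'_-(\rho^*)\le 0\le g'_+(\rho^*)$ rearranges to $\mathbb{P}(Z<\rho^*)\le\alpha\le\mathbb{P}(Z\le\rho^*)$, i.e.\ $F_Z(\rho^{*-})\le\alpha\le F_Z(\rho^*)$. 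The smallest $\rho^*$ satisfying this sandwich is exactly $\mathrm{VaR}_\alpha(Z)=F_Z^\dagger(\alpha)$; the upper inequality $F_Z(F_Z^\dagger(\alpha))\ge\alpha$ is immediate from Lemma~\ref{lem:quantileinverse}. This establishes the claimed minimizer.

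Finally I would evaluate $g$ at $\rho^*=F_Z^\dagger(\alpha)$. Passing to the quantile representation $Z\stackrel{d}{=}F_Z^\dagger(U)$ for $U\sim\mathrm{Uniform}(0,1)$ rewrites the expectation as $\mathbb{E}[(Z-\rho^*)^+]=\int_0^1\big(F_Z^\dagger(\tau)-\rho^*\big)^+\,d\tau$; by monotonicity of $F_Z^\dagger$ the integrand vanishes for $\tau<\alpha$ and equals $F_Z^\dagger(\tau)-\rho^*$ for $\tau\ge\alpha$, so the integral collapses to $\int_\alpha^1\big(F_Z^\dagger(\tau)-\rho^*\big)\,d\tau$. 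Substituting back,
\[
g(\rho^*) = \rho^* + \frac{1}{1-\alpha}\int_\alpha^1\big(F_Z^\dagger(\tau)-\rho^*\big)\,d\tau
= \frac{1}{1-\alpha}\int_\alpha^1 F_Z^\dagger(\tau)\,d\tau
= \mathrm{CVaR}_\alpha(Z),
\]
where the $\rho^*$ terms cancel and the last equality is the quantile-integral definition of the (upper-tail) CVaR under the cost-minimization convention used here.

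The main obstacle is the case where $Z$ has an atom at its $\alpha$-quantile $\rho^*$: there $g$ is non-differentiable and $F_Z$ jumps across the level $\alpha$, so any naive two-sided first-order argument breaks down. The subdifferential formulation handles this cleanly, since the optimality condition is precisely the sandwich $F_Z(\rho^{*-})\le\alpha\le F_Z(\rho^*)$, which the quantile satisfies even when it sits at a jump or on a flat stretch of $F_Z^\dagger$. The secondary technical point is verifying the identity $\mathbb{E}[(Z-\rho)^+]=\int_0^1(F_Z^\dagger(\tau)-\rho)^+\,d\tau$ for an arbitrary (possibly discontinuous) CDF; this follows from $F_Z^\dagger(U)\stackrel{d}{=}Z$ together with the same kind of careful bookkeeping over discontinuous, strictly monotone, and flat regions of $F_Z^\dagger$ that appears in the proof of Lemma~\ref{lem:key}.
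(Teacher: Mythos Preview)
Your argument is correct: convexity of $g$, the subdifferential first-order condition pinning down the $\alpha$-quantile, and evaluation via the probability-integral-transform identity $Z\stackrel{d}{=}F_Z^\dagger(U)$ together constitute the classical Rockafellar--Uryasev proof. The paper itself does not prove this lemma; it simply states it and defers to~\cite{bauerle2011markov}, so your write-up supplies strictly more than the paper does. One small point: the lemma is stated ``for any random variable $Z$,'' but you rely on boundedness $Z\in[0,T]$ to obtain finiteness of $g$ and attainment of the infimum. This is harmless in the paper's setting (returns are bounded by $T$), though for general $Z$ one would need an integrability hypothesis such as $\mathbb{E}[Z^+]<\infty$ together with a coercivity argument in place of boundedness.
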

As a consequence of this lemma, we have
\begin{align*}
\min_{\pi \in \Pi}\text{CVaR}(Z^{(\pi)})
&=  \min_{\pi \in \Pi} \inf_{\rho\in \mathbb{R}}\left\{\rho + \frac{1}{1-\alpha}\cdot\mathbb{E}_{Z^{(\pi)}}\left[(Z^{(\pi)}-\rho)^{+}\right]\right\} \\
&=  \inf_{\rho\in \mathbb{R}} \left\{\rho + \frac{1}{1-\alpha}\cdot\min_{\pi \in \Pi}\mathbb{E}_{Z^{(\pi)}}\left[(Z^{(\pi)}-\rho)^{+}\right]\right\}.
\end{align*}
Thus, we have
\begin{align*}
\pi^* &= \operatorname*{\arg\min}_{\pi \in \Pi} \mathbb{E}_{Z^{(\pi)}}\left[(Z^{(\pi)}-\rho^*)^+\right],
\end{align*}
where
\begin{align*}
\rho^* = \operatorname*{\arg\inf}_{\rho\in\mathbb{R}} J(\rho)
\qquad\text{where}\qquad
J(\rho) = \rho + \frac{1}{1-\alpha}\cdot\max_{\pi \in \Pi}\mathbb{E}_{Z^{(\pi)}}\left[(Z^{(\pi)} - \rho)^{+}\right].
\end{align*}
The main challenge is evaluating the minimum over $\pi\in\Pi$ in $J(\rho)$. To do so, we construct another MDP whose objective is $\mathbb{E}_{Z^{(\pi)}}\left[(Z^{(\pi)} - \rho)^{+}\right]$ for the appropriate choice of initial state distribution.

\textbf{Step 2: Construct alternative MDP.}
The MDP we construct is $\tilde{\mathcal{M}}=(\tilde{\mathcal{S}},\mathcal{A},\tilde{D},\tilde{P},\tilde{R},T)$, where the states are $\tilde{\mathcal{S}} = \mathcal{S} \times \mathbb{R}$, the (time-varying, deterministic) rewards $\tilde{R}:\tilde{\mathcal{S}}\times[T]\to\mathbb{R}$ are
\begin{align*}
\tilde{R}((s,r),t) = \begin{cases}
\max\{r, 0\} &\text{if } t = T \\
0 &\text{otherwise},
\end{cases}
\end{align*}
and the transitions are
\begin{align*}
\tilde{P}((s',r')\mid(s,r),a) = P(s'\mid s,a)\times\mathbb{P}_{R(s,a)}(r'-r),
\end{align*}
noting that $\tilde{P}$ is a (conditional) probability measure since the state space $\tilde{\mathcal{S}}$ includes a continuous component; in practice, we discretize the continuous component of the state space.

\textbf{Step 3: Value iteration.}
Letting $S_1$ be the random initial state of the original MDP $\mathcal{M}$ (with distribution $D$), we have
\begin{align*}
\mathbb{E}_{Z^{(\pi)}}\left[(Z^{(\pi)}-\rho)^+\right]
= \mathbb{E}_{S_1}\left[\tilde{V}_1^{(\pi)}((S_1,-\rho))\right],
\end{align*}
where $\tilde{V}_1^{(\pi)}$ is the value function of policy $\pi$ for MDP $\tilde{\mathcal{M}}$ on step $t=1$.
Thus, we have
\begin{align*}
\min_{\pi\in\Pi} \mathbb{E}_{Z^{(\pi)}}\left[(Z^{(\pi)}-\rho)^+\right]
= \mathbb{E}_{S_1}\left[\tilde{V}_1^*((S_1,-\rho))\right],
\end{align*}
where $\tilde{V}_1^*$ is the value function of the optimal policy for $\tilde{\mathcal{M}}$. Intuitively, this strategy works because the augmented component of the state space $r$ captures the cumulative reward so far plus its initial value $-\rho$; then, by the definition of $\tilde{R}$, the reward is $r^+$, which implies that $\tilde{V}_1^{(\pi)}((s,-\rho))$ is the expectation of the random variable $(Z^{(\pi)} - \rho)^+$. Thus, we can compute $\min_{\pi\in\Pi}\mathbb{E}_{Z^{(\pi)}}[(Z^{(\pi)}-\rho)^+]$ by performing value iteration on $\tilde{\mathcal{M}}$ to compute $\tilde{V}_1^{(\pi)}$. In particular, we have
\begin{align*}
\tilde{V}_T^*((s,r))=\max\{r,0\},
\end{align*}
and
\begin{align*}
\tilde{V}_t^*((s,r))=\min_{a\in A}\int\tilde{V}_{t+1}^*((s',r'))\cdot d\tilde{P}((s',r')\mid(s,r),a)
\end{align*}
for all $t\in\{1,...,T-1\}$. Then, given an initial state $s_1$, we construct state $\tilde{s}_1=(s_1,-\rho^*)$, where
\begin{align*}
\rho^*=\operatorname*{\arg\inf}_{\rho\in\mathbb{R}}\left\{\rho+\frac{1}{1-\alpha}\cdot\tilde{V}_1^{(\pi)}((s,-\rho))\right\},
\end{align*}
and then acting optimally in $\tilde{\mathcal{M}}$ according to $\tilde{V}_t^*$.

\end{document}